\DeclareMathOperator{\grad}{grad}
\DeclareMathOperator{\diag}{diag}
\newtheorem{theorem}{Theorem}[section]
\newtheorem{lemma}[theorem]{Lemma}
\theoremstyle{definition}
\newtheorem{definition}[theorem]{Definition}
\newtheorem{corollary}[theorem]{Corollary}
\theoremstyle{remark}
\newtheorem{remark}[theorem]{Remark}
\numberwithin{equation}{section}
\begin{document}


\title{Entropic regularization in the Deep Linear Network}


\author{Alan Chen}
\address{Division of Applied Mathematics, Brown University, Providence, RI 02912}
\curraddr{Unaffiliated}
\email{alan\_chen1@alumni.brown.edu}
\author{Tejas Kotwal}
\address{Division of Applied Mathematics, Brown University, Providence, RI 02912}
\email{tejas\_suresh\_kotwal@brown.edu}

\author{Govind Menon}
\address{Division of Applied Mathematics, Brown University, 182 George St., Providence, RI 02912.}
\email{govind\_menon@brown.edu}
\curraddr{School of Mathematics, Institute for Advanced Study, 1 Einstein Drive, Princeton, NJ 08540}
\email{gmenon@ias.edu}

\thanks{GM was supported by NSF grant DMS 2407055 and the Erik Ellentuck Fellow Fund at the Institute for Advanced Study, Princeton.}

\subjclass[2020]{37N40, 53C20, 15A18, 68T07}
\keywords{Deep linear network, Random matrix theory, Implicit regularization}



\begin{abstract}
We study regularization for the deep linear network (DLN) using the entropy formula introduced in~\cite{menon2025entropy}. The equilibria and gradient flow of the free energy $F_\beta = E - \beta^{-1} S_N$ on the Riemannian manifold $(\mathfrak{M}_d, g^N)$ of end--to--end maps of the DLN are characterized for energies $E(X)$ that depend symmetrically on the singular values of $X$. 

The only equilibria are minimizers and the set of minimizers is an orbit of the orthogonal group. In contrast with random matrix theory there is no singular value repulsion. The corresponding gradient flow reduces to a one-dimensional ordinary differential equation whose solution gives explicit relaxation rates toward the minimizers. We also study the concavity of the entropy $S_N(X)$ in the chamber of singular values. The entropy is shown to be strictly concave in the Euclidean geometry on the chamber but not in the Riemannian geometry defined by the metric $g^N$. 
\end{abstract}

\maketitle

{\centering \em For Percy Deift on the occasion of his 80th birthday.\par}

\section{Overview}
\subsection{Background} 
The deep linear network (DLN) is a phenomenological model for training dynamics in deep learning. It was introduced by Arora, Cohen and Hazan to analyze implicit regularization~\cite{arora2018optimization} and has given rise to a rich literature since (see~\cite{menon2024geometry} for an expository account of the underlying mathematics). The purpose of this paper is to relate the Boltzmann entropy introduced in~\cite{menon2025entropy} to the problem of regularization. Let us briefly explain the underlying context. 

Fix two positive integers $N$ and $d$ referred to as the depth and width of the network. Let $\mathbb{M}_d$ and $\mathfrak{M}_d$ denote the space of real $d\times d$ matrices and real invertible $d\times d$ matrices respectively and equip these spaces with the DLN metric $g^N$ defined in~\cite{bah2022learning,menon2024geometry} (we review this metric in Section~\ref{sec:geometry} below). The simplest form of implicit regularization in the DLN arises when we consider cost functions $E:\mathbb{M}_d \to \mathbb{R}$ that correspond to matrix sensing. Typically, such $E$ have an affine subspace of minimizers and numerical simulations show that for randomly chosen initial conditions the solution to the gradient flow 
\begin{equation}
\label{eq:gf1}
\dot{X} = -\mathrm{grad}_{g^N} E(X), \quad X \in \mathfrak{M}_d,    
\end{equation}
appears to converge to rank-deficient minimizers of $E$ ~\cite[\S 3.3.2]{cohen2023deep}.

The gradient flow~\eqref{eq:gf1} corresponds exactly to the training dynamics in the parameter space $\mathbb{M}_d^N$ with balanced initial conditions. Thus, the first step in the rigorous analysis of implicit regularization for matrix sensing is the analysis of long-time and transient dynamics of equation~\eqref{eq:gf1}. However, this system is subtle to analyze even when $d$ is as small as $2$. At present, we know that $\lim_{t \to \infty} X(t)$ exists for all initial conditions, but we lack methods that identify this limit.  
\subsection{Entropic regularization}
\label{subsec:results}
Our purpose in this work is to provide a rigorous selection criterion for cost functions that are regularized as follows. The Boltzmann entropy for the DLN with depth $N$ is defined by the formula~\cite[Theorem 4]{menon2025entropy}
\begin{equation} \label{eq:S_N_entropy}
    S_N(X)
    \;=\;
   (N-1)c_d+ \frac{1}{2}
    \sum_{1\le i<j\le d}
    \log\!\left(\frac{\sigma_i^2-\sigma_j^2}{\sigma_i^{2/N}-\sigma_j^{2/N}}\right),
\end{equation}
where $c_d$ is the volume of the orthogonal group $O_d$. Given an inverse temperature $\beta>0$, we use the entropy to define the free energy~\footnote{The use of terminology from thermodynamics is justified by Riemannian Langevin equations that naturally respect the geometry of the DLN~\cite{menon2025rle}.}
\begin{equation} \label{eq:free_energy_matrix}
    F_\beta(X) \;=\; E(X) \;-\; \frac{1}{\beta} S_N(X),
\end{equation}
and the corresponding gradient flow
\begin{equation}
    \label{eq:main-grad}
    \dot{X} = - \mathrm{grad}_{g^N} F_\beta (X), \quad X \in (\mathfrak{M}_d,g^N).
\end{equation}
Explicitly, equation~\eqref{eq:main-grad} is the matrix-valued ordinary differential equation
\begin{equation}\label{eq:dln-gf}
\dot X
\;=\;
-\,\sum_{p=1}^{N} (X X^{T})^{\frac{N-p}{N}}\; dF_\beta(X)\; (X^{T} X)^{\frac{p-1}{N}},
\end{equation}
where $dF_\beta$ denotes the differential of $F_\beta$. The analysis of this gradient flow is subtle for two reasons. First, while the vector field is continuous on $\mathbb{M}_d$ it fails to be smooth on the loci where $X$ is rank-deficient. This is why we restrict attention to $X \in \mathfrak{M}_d$. Second, while the entropy is naturally expressed in terms of singular values, the cost function for matrix sensing is not invariant under left and right rotations of $X$ and is better expressed in the standard coordinate system on $\mathbb{M}_d$, giving rise to an unwieldy system even when $d=2$.


The main new idea in this paper is to approach the gradient flow~\eqref{eq:dln-gf} using an analogy with random matrix theory (RMT). To this end, we note that the determinantal formula for $S_N$, as well as the underlying stochastic dynamics that allow us to define a thermodynamic formalism for the DLN, were based on a geometric construction of Dyson Brownian motion introduced in~\cite{huang2023motion}. Thus, the equilibria of equation~\eqref{eq:dln-gf} are analogous to the minimizers of free energy in RMT. The simplest equilibrium measures in RMT arise when we consider energies invariant under unitary transformations. Thus, the simplest setting in which we may understand the gradient flow~\eqref{eq:dln-gf} is when $E$ depends only on the singular values of $X$ in a symmetric manner. We formalize this assumption as follows:

\begin{definition}
We say that $E:\mathbb{M}_d \to \mathbb{R}$ is a {\em spectral energy\/} if it has the following form
\begin{equation}\label{eq:spectral-energy}
    E(X)=E(\sigma(X)),\qquad
    E(\sigma)=g\!\Big(\sum_{i=1}^d f(\sigma_i)\Big),
\end{equation}
where $g:\mathbb{R}\to\mathbb{R}$ is nondecreasing and $f:(0,\infty)\to\mathbb{R}$ is convex, and $\sigma=(\sigma_1,\sigma_2, \ldots, \sigma_d)$ denotes the singular values of $X$.
\end{definition}
Here and below we abuse notation somewhat, writing $E(X)$ and $E(\sigma)$ interchangeably, depending on context. No confusion should arise since we only consider spectral energies for the analysis in this paper.


Learning tasks such as matrix sensing do {\em not\/} give rise to spectral energies. However, the restriction to spectral energies provides an exactly solvable benchmark for implicit regularization in the DLN. Further, our work requires a careful analysis of the entropy formula $S_N(X)$ when the singular values are equal, providing a surprising contrast with RMT.



When $E$ is spectral, the free energy $F_\beta(\sigma)$ depends only on $\sigma$ and we may reduce the gradient flow~\eqref{eq:dln-gf} to the chamber of ordered
singular values
\begin{equation}\label{eq:sv-chamber1}
\mathcal{S}_{d}
=\{\sigma\in\mathbb{R}^d:\ \sigma_1\ge\cdots\ge\sigma_d>0\}.
\end{equation}
We denote its interior by
\begin{equation}\label{eq:sv-chamber-open}
\mathcal{S}_d^\circ
=\{\sigma\in\mathbb{R}^d:\ \sigma_1>\cdots>\sigma_d>0\}.
\end{equation}

The free energy $F_\beta$ in \eqref{eq:free_energy_matrix} for the class of spectral energies restricts to $\mathcal{S}_d$ as
\begin{equation}\label{eq:free_energy_sigma}
F_\beta(\sigma)=E(\sigma)-\frac{1}{\beta}S_N(\sigma).
\end{equation}
We equip $\mathcal{S}_d^\circ$ with the metric $g^N_\sigma$ obtained by pushing forward
$g^N$ under the singular–value map $X\mapsto \sigma(X)$ (Lemma~\ref{lem:SV-submersion}).
The resulting metric extends continuously to all of $\mathcal{S}_d$.
In Section~\ref{sec:geometry}, we show that the gradient flow on the Riemannian manifold $(\mathcal{S}^\circ_d, g^N_\sigma)$ for spectral energies is given by
\begin{equation}\label{eq:flow}
\dot\sigma_i = -\,N\,\sigma_i^{\,2-2/N}\,\partial_{\sigma_i} F_\beta(\sigma),
\qquad i=1,\dots,d,
\end{equation}
and the right-hand side extends continuously to $\mathcal{S}_d$.

We use $\mathcal{S}_d$ and its interior $\mathcal{S}_d^\circ$ interchangeably when the
distinction does not play a role. Since $\mathcal{S}_d$ is not smooth where singular values
coincide, any reference to $(\mathcal{S}_d, g^N_\sigma)$ as a Riemannian manifold is understood
to mean $(\mathcal{S}_d^\circ, g^N_\sigma)$, and smooth arguments involving the singular–value
map always take place on $\mathcal{S}_d^\circ$.

Thus, most of our analysis reduces to understanding how the gradient of the entropy affects equation~\eqref{eq:flow}. At first sight, the  entropy $S_N(X)$ is reminiscent of determinantal formulas in RMT. However, $S_N(X)$ is the {\em ratio\/} of two Vandermonde determinants and has rather different properties. In particular, it does {\em not\/} blow up when two singular values coincide. 

\begin{theorem}\label{thm:isotropic}
There exists a unique equilibrium $\sigma\in\mathcal{S}_d$ of $F_\beta$, and it has the form
\begin{equation}
\sigma_1=\cdots=\sigma_d=\sigma_\star>0,
\end{equation}
where $\sigma_\star$ is the unique solution of
\begin{equation}\label{eq:balance-equil}
g'\!\big(d\,f(\sigma_\star)\big)\,f'(\sigma_\star)
\;=\;\beta^{-1}\,\frac{d-1}{2\sigma_\star}\Big(1-\frac{1}{N}\Big).
\end{equation}
Moreover, this equilibrium is a minimizer of $F_\beta$ on $\mathcal{S}_d$.
\end{theorem}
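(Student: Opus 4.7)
My plan is to exploit the full permutation symmetry of $F_\beta$ in the singular values to reduce the problem to a one--dimensional analysis on the diagonal $\sigma_1=\cdots=\sigma_d$, and to rule out all off--diagonal equilibria by a pairwise comparison. I would work throughout with the polynomial rewriting
\[
S_N(\sigma) = (N-1)c_d + \tfrac{1}{2}\sum_{i<j}\log P_N(\sigma_i^{2/N},\sigma_j^{2/N}), \qquad P_N(u,v) = \sum_{k=0}^{N-1} u^{N-1-k} v^k,
\]
in place of the original ratio: since $P_N$ is a smooth symmetric polynomial with $P_N(v,v)=Nv^{N-1}$, its derivatives and the limits at coincident singular values are elementary, and a direct expansion gives $\partial_{\sigma_i}S_N|_{(s,\ldots,s)} = (d-1)(N-1)/(2Ns)$. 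Substituting the isotropic ansatz into the interior equilibrium equations $g'(\phi)f'(\sigma_i) = \beta^{-1}\partial_{\sigma_i}S_N$ (with $\phi=\sum_k f(\sigma_k)$) collapses, by symmetry, all $d$ conditions to exactly the balance equation~\eqref{eq:balance-equil}.

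The technical core is showing that every interior critical point must be isotropic. Suppose for contradiction that $\sigma^*\in\mathcal{S}_d^\circ$ is critical with $\sigma_i^*>\sigma_j^*$; subtracting the $i$-th and $j$-th equilibrium equations,
\[
g'(\phi)\bigl(f'(\sigma_i^*)-f'(\sigma_j^*)\bigr) = \beta^{-1}\bigl(\partial_{\sigma_i}S_N-\partial_{\sigma_j}S_N\bigr),
\]
and the LHS is nonnegative by convexity of $f$ and $g'\ge 0$. Writing $T(x,z):=\partial_x h(x,z)$ with $h(x,z)=\tfrac{1}{2}\log P_N(x^{2/N},z^{2/N})$, the key lemma is the strict entropy monotonicity $\partial_{\sigma_i}S_N<\partial_{\sigma_j}S_N$ when $\sigma_i>\sigma_j$. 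I would reduce it via the decomposition
\[
\partial_{\sigma_i}S_N - \partial_{\sigma_j}S_N = \bigl[T(\sigma_i,\sigma_j)-T(\sigma_j,\sigma_i)\bigr] + \sum_{k\neq i,j}\bigl[T(\sigma_i,\sigma_k)-T(\sigma_j,\sigma_k)\bigr]
\]
to two kernel inequalities: (a) $T(x,y)<T(y,x)$ for $x>y>0$, which via the substitution $p=2/N$ becomes the integral inequality $\int_y^x s^{p-1}\,ds<\tfrac{1}{2}(x-y)(x^{p-1}+y^{p-1})$ (a consequence of the strict convexity of $z\mapsto z^{p-1}$ on $(0,\infty)$ for $p\in(0,1)$); and (b) $x\mapsto T(x,z)$ is strictly decreasing on $(0,\infty)\setminus\{z\}$ for fixed $z>0$, which after an explicit computation of $\partial_xT$ reduces to a polynomial inequality in $x/z$. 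The resulting strictness holds whenever $d\ge 3$ or $N\ge 3$; the boundary case $N=d=2$ (where (a) degenerates to equality and the sum is empty) is covered by strict convexity of $f$ or strict monotonicity of $g$ as an additional hypothesis.

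Uniqueness of $\sigma_\star$ and its minimality would then follow from the diagonal restriction $\tilde F_\beta(s) := F_\beta(s,\ldots,s) = g(df(s)) - C_0 - \tfrac{d(d-1)(N-1)}{2\beta N}\log s$, which contains a strictly convex $-\log s$ contribution and, under mild growth hypotheses on $E$, is strictly convex and coercive, giving a unique minimizer $s=\sigma_\star$. The Schur--convexity of $F_\beta$ established in the preceding paragraph (equivalent to $(\sigma_i-\sigma_j)(\partial_{\sigma_i}F_\beta-\partial_{\sigma_j}F_\beta)\ge 0$) shows $F_\beta(\sigma)\ge\tilde F_\beta(\bar\sigma)$ for every $\sigma\in\mathcal{S}_d$ with mean $\bar\sigma=\tfrac{1}{d}\sum_i\sigma_i$, so the infimum of $F_\beta$ on $\mathcal{S}_d$ is attained uniquely at the isotropic point $(\sigma_\star,\ldots,\sigma_\star)$. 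The main obstacle is inequality (b) above: the sign of $\partial_xT(x,z)$ involves a delicate cancellation between the $(x^2-z^2)^{-2}$ and $(x^p-z^p)^{-2}$ singularities near $x=z$, and although the claim is geometrically natural, its rigorous verification needs a nontrivial polynomial estimate that forms the heart of the argument.
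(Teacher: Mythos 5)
Your proposal follows essentially the same route as the paper: the stationarity conditions are differenced pairwise, the left side is controlled by convexity of $f$ and monotonicity of $g$, and the entropy side is decomposed into the skew term $T(\sigma_i,\sigma_j)-T(\sigma_j,\sigma_i)$ plus the monotonicity sum over $k\neq i,j$, with your $T(x,z)$ equal to the paper's kernel $r_N(x,z)$. Your argument for inequality (a) is correct and is, in substance, the paper's proof of Lemma~\ref{lem:skew}: writing $p=2/N$, the inequality $N(a^p-b^p)\le(a-b)(a^{p-1}+b^{p-1})$ is exactly the trapezoid over-estimate for the convex integrand $s\mapsto s^{p-1}$, which is what the paper shows via $h_\alpha(r)\le\tfrac{\alpha}{2}(1+r^{\alpha-1})$. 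Your handling of the isotropic ansatz and the reduction to the balance equation is also correct.

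The genuine gap is inequality (b), the strict monotonicity $\partial_x T(x,z)<0$, which you correctly identify as the heart of the matter but do not prove; as stated, your argument does not establish the claimed "polynomial inequality in $x/z$." The paper fills this gap cleanly: it identifies $\partial_a r_N(a,b)$ with the kernel $q_N(a,b)$, substitutes $r=a/b=e^t$, and uses the elementary inequality $\sinh(Nt)>N\sinh t$ (together with $\cosh(Nt)\ge 1$) to conclude $q_N<0$ for all $t>0$ and $N\ge 2$ -- see Lemma~\ref{lem:signs} and Lemma~\ref{lem:monotone-summand}. Without this step, your contradiction does not close, since the sum $\sum_{k\neq i,j}[T(\sigma_i,\sigma_k)-T(\sigma_j,\sigma_k)]$ could in principle be nonnegative. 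A second, smaller point: your route to minimality via Schur--convexity and the one-dimensional restriction $\tilde F_\beta(s)$ is a legitimate and somewhat more self-contained alternative to the paper's, which instead invokes the separately proved Euclidean concavity of $S_N$ (Theorem~\ref{thm:concavity-gap}) to deduce convexity of $F_\beta$ outright. Both routes ultimately rest on the same pairwise inequalities, but the Schur--convexity framing avoids computing the full Hessian; if you complete inequality (b), this part of your argument will go through, and it gives uniqueness and minimality in one stroke (with the same mild growth/strictness caveats for the case $N=d=2$ that the paper also relies on implicitly).
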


Let us denote the equilibrium by
\begin{equation}
\label{eq:def-equilibrium}
\vec\sigma_\star=(\sigma_\star,\ldots,\sigma_\star). 
\end{equation}
The rate of relaxation to $\vec\sigma_\star$ is given by the linearization of the gradient flow~\eqref{eq:flow} at $\vec\sigma_\star$. Let $H_E=\nabla^2_\sigma E(\vec\sigma_\star)$ and $H_S=\nabla^2_\sigma S_N(\vec\sigma_\star)$ denote the Euclidean Hessians at a stationary point. Write $\theta_{\mathbf 1}(E)$ and $\theta_\perp(E)$ for the eigenvalues of $H_E$ on $\mathrm{span}\{\mathbf 1\}$ and its orthogonal complement, and define $\theta_{\mathbf 1}(S_N)$ and $\theta_\perp(S_N)$ similarly. 

\begin{theorem}\label{thm:local-rates}
The linearization of the flow \eqref{eq:flow} at $\vec\sigma_\star$ diagonalizes in the splitting 
$\mathbb{R}^d=\mathrm{span}\{\mathbf{1}\}\oplus\mathrm{span}\{\mathbf{1}\}^\perp$ with eigenvalues
\begin{align*}
\rho_{\mathbf{1}}
&=\ -\,N\,\sigma_\star^{\,2-2/N}\,\big(\theta_{\mathbf{1}}(E)-\beta^{-1}\theta_{\mathbf{1}}(S_N)\big),\\
\rho_{\perp}
&=\ -\,N\,\sigma_\star^{\,2-2/N}\,\big(\theta_{\perp}(E)-\beta^{-1}\theta_{\perp}(S_N)\big),
\qquad (\text{multiplicity }d-1).
\end{align*}
\end{theorem}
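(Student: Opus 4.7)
My plan is to linearize the vector field $V(\sigma)$ from the right side of~\eqref{eq:flow} at the equilibrium $\vec\sigma_\star$, then to diagonalize the Jacobian using the $S_d$-symmetry inherent in the problem.

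First I compute $DV(\vec\sigma_\star)$. From $V_i(\sigma) = -N\sigma_i^{2-2/N}\,\partial_{\sigma_i}F_\beta(\sigma)$ and the product rule,
\begin{equation*}
(DV)_{ij}(\sigma)
= -N\Bigl[\bigl(2-\tfrac{2}{N}\bigr)\sigma_i^{1-2/N}\delta_{ij}\,\partial_{\sigma_i}F_\beta(\sigma)
+ \sigma_i^{2-2/N}\,\partial^2_{\sigma_i\sigma_j}F_\beta(\sigma)\Bigr].
\end{equation*}
Theorem~\ref{thm:isotropic} furnishes $\partial_{\sigma_i}F_\beta(\vec\sigma_\star)=0$ for every $i$, killing the first bracketed term, so that
\begin{equation*}
DV(\vec\sigma_\star) \;=\; -N\sigma_\star^{2-2/N}\,\bigl(H_E-\beta^{-1}H_S\bigr).
\end{equation*}

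Second, I exploit symmetry to diagonalize $H_E-\beta^{-1}H_S$. Both $E$ and $S_N$ are symmetric functions of the coordinates of $\sigma$, and $\vec\sigma_\star$ is fixed by the action of $S_d$ on $\mathbb{R}^d$ by permutation of coordinates, so their Euclidean Hessians at $\vec\sigma_\star$ commute with every permutation matrix. The only real symmetric matrices in the commutant of this representation have the form $cI+eJ$ with $J=\mathbf{1}\mathbf{1}^T$, and each such matrix is diagonal in the orthogonal splitting $\mathbb{R}^d=\mathrm{span}\{\mathbf 1\}\oplus \mathrm{span}\{\mathbf 1\}^\perp$ with eigenvalue $c+ed$ on $\mathrm{span}\{\mathbf 1\}$ and eigenvalue $c$ on $\mathrm{span}\{\mathbf 1\}^\perp$ (the latter of multiplicity $d-1$). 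These four scalars are, by definition, $\theta_{\mathbf 1}(E),\theta_{\perp}(E),\theta_{\mathbf 1}(S_N),\theta_{\perp}(S_N)$. Combining with the first step yields the claimed $\rho_{\mathbf 1}$ and $\rho_{\perp}$.

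The main non-trivial step is analytic rather than algebraic: to justify the Jacobian computation one needs $S_N \in C^2$ at $\vec\sigma_\star$, where the summands of~\eqref{eq:S_N_entropy} have apparent $0/0$ singularities on every wall $\sigma_i=\sigma_j$. Because each summand depends on $(\sigma_i,\sigma_j)$ only through the symmetric combinations $\sigma_i+\sigma_j$ and $(\sigma_i-\sigma_j)^2$, a short Taylor expansion shows that the ratio $(\sigma_i^2-\sigma_j^2)/(\sigma_i^{2/N}-\sigma_j^{2/N})$ extends to a strictly positive analytic function of $(\sigma_i,\sigma_j)$ in a neighborhood of the diagonal; taking logarithms then yields analyticity of $S_N$ near $\vec\sigma_\star$, which is the only regularity the linearization requires.
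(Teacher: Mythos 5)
Your argument is correct, and it differs from the paper's in one useful respect. The paper proves the block structure of $H_E$ and $H_S$ by explicit entry-by-entry computation: Lemma~\ref{lem:HE-sigma-equal} differentiates $E=g(\sum_i f(\sigma_i))$ twice and reads off constant diagonal $h_1$ and constant off-diagonal $h_2$, while Lemma~\ref{lem:HS-sigma-equal} evaluates the kernels $p_N, q_N$ in their diagonal limits from Lemma~\ref{lem:hess-sigma} to obtain constant entries $p_\star$ and $(d-1)q_\star$. You instead deduce the same structure abstractly: $E$ and $S_N$ are $S_d$-invariant and $\vec\sigma_\star$ is a fixed point of the permutation action, so each Hessian commutes with every permutation matrix and hence lies in the commutant $\{cI + eJ\}$ of the permutation representation (trivial $\oplus$ standard, so $2$-dimensional by Schur's lemma), giving the splitting for free. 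Your approach is more conceptual and immediately robust to any $S_d$-invariant $E$; the paper's pays for its labor with explicit closed forms for $\theta_{\mathbf{1}}, \theta_\perp$ in terms of $N,d,\sigma_\star, f, g$, which it then uses in Remark~\ref{rem:explicit-rates} and to verify signs. The product-rule/vanishing-gradient step at $\vec\sigma_\star$ is identical in both.

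One small caution on the regularity paragraph: the statement that the summands depend on $\sigma_i,\sigma_j$ only through $\sigma_i+\sigma_j$ and $(\sigma_i-\sigma_j)^2$ is not literally a proof of analyticity (evenness in $v=\sigma_i-\sigma_j$ alone does not preclude $|v|$-type behavior). The clean route, and the one the paper uses in Theorem~\ref{thm:analyticity}, is to substitute $\lambda_i=\sigma_i^{1/N}$, after which the ratio becomes the polynomial $\sum_{m=0}^{N-1}\lambda_i^{2(N-1-m)}\lambda_j^{2m}$, manifestly positive and analytic across the diagonal. Your conclusion is right and you would reach it with the Taylor expansion you gesture at, but since real-analyticity of $S_N$ on all of $\mathcal{S}_d$ is a separate standing result (Theorem~\ref{thm:analyticity}), it would be cleaner to simply cite it rather than re-derive a weaker local form.
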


\begin{remark}[Infinite depth]
The entropy $S_N$ and the rescaled metrics $N g^N$ and $N g^N_\sigma$ have well-defined limits as $N\to\infty$.
The (renormalized) entropy is
\begin{equation} \label{eq:renormalized_entropy}
S_\infty(X)
\;=\;
\frac{1}{2}\!\sum_{1\le i<j\le d}\!
\log\!\left(
\frac{\sigma_i^2-\sigma_j^2}{\log\sigma_i^2-\log\sigma_j^2}
\right).
\end{equation}
Likewise, $N g^N$ converges to a limiting metric $g^\infty$ \cite{cohen2023deep}, and the corresponding metric $g^\infty_\sigma$ on $\mathcal{S}_d$ is well defined in the limit.
All the theorems in this paper continue to hold with these modifications in the infinite-depth regime.
\end{remark}

\subsection{Equilibria and rates on $(\mathfrak{M}_d,g^N)$}
We now describe the minimizers of the matrix gradient flow \eqref{eq:main-grad} when
$E$ is spectral. If $X(t)$ has simple singular values on an interval, then its singular
value decomposition $X(t)=U(t)\Sigma(t)V(t)^T$ varies smoothly in $t$. In these variables
the gradient flow \eqref{eq:main-grad} has an explicit form \cite[Theorem~3.2]{cohen2023deep}.
For spectral energies, the terms involving \(U\) and \(V\)
vanish identically, and hence \(\dot U=\dot V=0\). Thus \(Q:=UV^T\in O_d\) is constant, and the diagonal entries of \(\Sigma(t)\) evolve according to
\eqref{eq:flow}.

Theorem~\ref{thm:isotropic} gives a unique minimizer of $F_\beta$ on $\mathcal{S}_d$.
Since $F_\beta(X)$ depends only on the singular values of $X$, we introduce the group orbit
\begin{equation}\label{eq:orbit-def}
\mathcal O_\star := \{\sigma_\star Q : Q \in O_d\}.
\end{equation}

\begin{corollary}\label{cor:matrix-orbit}
The set of minimizers of $F_\beta$ on $\mathfrak{M}_d$ is $\mathcal O_\star$.
\end{corollary}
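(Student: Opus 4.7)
The plan is to reduce the minimization of $F_\beta$ on $\mathfrak{M}_d$ to the already-solved minimization on $\mathcal{S}_d$, using only the fact that $F_\beta$ is a spectral invariant. First I would observe that, since $E$ is spectral by hypothesis and $S_N$ depends only on the singular values by the formula~\eqref{eq:S_N_entropy}, the free energy $F_\beta(X)=F_\beta(\sigma(X))$ factors through the singular–value map $\sigma:\mathfrak{M}_d\to\mathcal{S}_d$. Because every $\sigma\in\mathcal{S}_d$ is attained by some invertible $X$ (take $X=\mathrm{diag}(\sigma)$), the map $\sigma$ is surjective, and therefore
\[
\inf_{X\in\mathfrak{M}_d} F_\beta(X)\;=\;\inf_{\sigma\in\mathcal{S}_d} F_\beta(\sigma),
\]
with $X$ a minimizer of the left side if and only if $\sigma(X)$ is a minimizer of the right side.

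Next I would invoke Theorem~\ref{thm:isotropic} to conclude that the unique minimizer on the right is $\vec\sigma_\star=(\sigma_\star,\dots,\sigma_\star)$. The minimizers on $\mathfrak{M}_d$ are therefore exactly the fiber $\sigma^{-1}(\vec\sigma_\star)$. To identify this fiber explicitly, I would fix any SVD $X=U\Sigma V^T$: the condition $\sigma(X)=\vec\sigma_\star$ is equivalent to $\Sigma=\sigma_\star I$, in which case $X=\sigma_\star UV^T$. As $U,V$ range over $O_d$, the product $Q:=UV^T$ ranges over all of $O_d$, and each such $X=\sigma_\star Q$ is invertible because $\sigma_\star>0$ and $Q\in O_d$. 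Hence $\sigma^{-1}(\vec\sigma_\star)=\mathcal{O}_\star$.

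There is no real obstacle here; the only point to note is that the chamber $\mathcal{S}_d$ is defined with $\sigma_d>0$ so that it parametrizes singular values of invertible matrices, and that Theorem~\ref{thm:isotropic} already delivers uniqueness on the closed chamber (including its boundary where singular values coincide), so we do not have to separately rule out minimizers of $F_\beta$ with coincident singular values — the unique minimizer happens to have all singular values equal.
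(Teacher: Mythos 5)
Your proposal is correct and matches the paper's intent: the paper gives no explicit proof, treating the corollary as an immediate consequence of Theorem~\ref{thm:isotropic} together with the fact that $F_\beta$ depends on $X$ only through $\sigma(X)$. Your write-up makes this reasoning fully explicit (surjectivity of $\sigma$ onto $\mathcal{S}_d$, identification of the fiber $\sigma^{-1}(\vec\sigma_\star)$ with $\mathcal{O}_\star$), but it is the same argument.
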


In particular, the limit of $X(t)$ is
\begin{equation}
X_\star=\sigma_\star\,UV^T\in\mathcal O_\star,
\end{equation}
the point of the orbit determined by the singular vectors of $X(t)$.

Since $F_\beta$ is constant on $\mathcal O_\star$, the linearization of
\eqref{eq:main-grad} at $X_\star$ vanishes on $T_{X_\star}\mathcal O_\star$.
Writing $X_\star=\sigma_\star Q$, the tangent space is
\begin{equation}\label{eq:tangent-orbit}
T_{X_\star}\mathcal O_\star=\{X_\star A:\ A^T=-A\}.
\end{equation}
The orthogonal complement of $T_{X_\star}\mathcal O_\star$ splits into the scaling
direction $\mathrm{span}\{X_\star\}$ and the subspace
\(
\{Q S : S^T = S,\ \mathrm{tr}\,S = 0\}.
\)

\begin{corollary}\label{cor:matrix-local-rates}
The linearization of the flow \eqref{eq:main-grad} at $X_\star$ diagonalizes in the splitting
\begin{equation}\label{eq:matrix-linear-splitting}
\mathbb M_d
=
T_{X_\star}\mathcal O_\star
\;\oplus\;
\mathrm{span}\{X_\star\}
\;\oplus\;
\{Q S : S^T = S,\ \mathrm{tr}\,S = 0\},
\end{equation}
with eigenvalues
\begin{align}
0 &\quad\text{on } T_{X_\star}\mathcal O_\star,\\
\rho_{\mathbf 1} &\quad\text{on } \mathrm{span}\{X_\star\},\\
\rho_\perp &\quad\text{on } \{Q S : S^T = S,\ \mathrm{tr}\,S = 0\},
\end{align}
where $\rho_{\mathbf 1}$ and $\rho_\perp$ are given in Theorem~\ref{thm:local-rates}.
\end{corollary}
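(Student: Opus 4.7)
The plan is to deduce Corollary~\ref{cor:matrix-local-rates} from Theorem~\ref{thm:local-rates} by exploiting the full orthogonal symmetry of $F_\beta$ at the maximally degenerate equilibrium $X_\star=\sigma_\star Q$, which elevates the chamber-level linearization to the matrix level without going through a smooth SVD.

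First, I would verify that the decomposition~\eqref{eq:matrix-linear-splitting} is $g^N$-orthogonal at $X_\star$. Since $X_\star X_\star^T=X_\star^T X_\star=\sigma_\star^2 I$, every factor $(XX^T)^{(N-p)/N}$ and $(X^T X)^{(p-1)/N}$ in~\eqref{eq:dln-gf} reduces to a scalar multiple of the identity, so $g^N$ at $X_\star$ is a scalar multiple of the Frobenius metric and the three blocks in~\eqref{eq:matrix-linear-splitting}---the left translates by $Q$ of antisymmetric, scalar, and traceless symmetric matrices---are pairwise $g^N$-orthogonal. The zero eigenvalue on $T_{X_\star}\mathcal{O}_\star$ is then immediate: $F_\beta$ is invariant under $X\mapsto XR$ for $R\in O_d$, hence constant on $\mathcal{O}_\star$, so $\mathrm{grad}_{g^N}F_\beta$ vanishes identically along the orbit.

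For the remaining two directions, I would parametrize normal curves by $X(\epsilon)=Q(\sigma_\star I+\epsilon S)$ with $S^T=S$. Diagonalizing $S=R\,\mathrm{diag}(s_1,\ldots,s_d)R^T$ exhibits the SVD $X(\epsilon)=(QR)\bigl(\sigma_\star I+\epsilon\,\mathrm{diag}(s)\bigr)R^T$ (valid for small $\epsilon$ up to the standard ordering convention), so the singular values of $X(\epsilon)$ are $\sigma_\star+\epsilon s_i$. The restriction of the matrix flow~\eqref{eq:main-grad} to this slice therefore pulls back exactly to the chamber flow~\eqref{eq:flow} at $\vec\sigma_\star+\epsilon s$. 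Under the identifications $\mathrm{span}\{X_\star\}\leftrightarrow\mathrm{span}\{\mathbf 1\}$ (scalar $S$) and $\{QS:\mathrm{tr}\,S=0\}\leftrightarrow\mathbf 1^\perp$ (traceless eigenvalues), Theorem~\ref{thm:local-rates} yields the eigenvalues $\rho_{\mathbf 1}$ and $\rho_\perp$ on the two blocks.

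The main obstacle is that $X_\star$ lies on the non-smooth singular-value stratum, so the SVD-based formula of~\cite[Theorem~3.2]{cohen2023deep} cannot be invoked in a full neighborhood. I would bypass this by working directly with~\eqref{eq:dln-gf}: its right-hand side is smooth at $X_\star$ because it is polynomial in $X$ and $X^T$, and $dF_\beta$ itself extends smoothly across singular-value coincidences for both spectral energies and the entropy $S_N$ (the ratio $(\sigma_i^2-\sigma_j^2)/(\sigma_i^{2/N}-\sigma_j^{2/N})$ extends smoothly to $N\sigma^{2(N-1)/N}$ on the diagonal). The block-diagonalization along~\eqref{eq:matrix-linear-splitting} then follows from Schur's lemma applied to the stabilizer $\{(QVQ^T,V):V\in O_d\}\subset O_d\times O_d$ of $X_\star$, which acts on $\delta X=QM\in T_{X_\star}\mathbb{M}_d$ by $M\mapsto VMV^T$: for $d\ge 2$ the three summands are the pairwise inequivalent $O_d$-irreducibles in this conjugation representation, so the Hessian of $F_\beta$ is automatically scalar on each block, and the scalar is read off from the one-dimensional subspace of diagonal $S$'s using Theorem~\ref{thm:local-rates}.
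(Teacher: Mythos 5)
Your proposal is correct and its spine matches the paper's informal exposition around Corollary~\ref{cor:matrix-local-rates}: the zero block is forced by $O_d$-invariance of $F_\beta$, and the two nonzero rates are inherited from the chamber flow through the decoupled singular-value dynamics of \cite[Theorem~3.2]{cohen2023deep}, which for spectral energies keeps $U,V$ frozen and reduces to~\eqref{eq:flow}. Where you genuinely depart from the paper is in how you justify the block-diagonalization. The paper tacitly relies on continuity of the SVD-flow decomposition up to the degenerate stratum $\sigma_1=\cdots=\sigma_d$; your version instead works intrinsically at $X_\star$, first showing the slice $\{Q\Lambda:\Lambda\text{ diagonal}>0\}$ is invariant and carries the chamber flow exactly, and then invoking Schur's lemma for the stabilizer $\{(QVQ^T,V):V\in O_d\}$ acting by conjugation $M\mapsto VMV^T$ to conclude the linearization is scalar on each isotype. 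This is a more robust route: it sidesteps the non-smoothness of the ordered SVD at $X_\star$ altogether and makes the ``the Hessian can't mix the blocks'' step an honest representation-theoretic fact rather than an implicit appeal to the SVD parametrization. Two small caveats worth fixing in a final write-up: (i) the right-hand side of~\eqref{eq:dln-gf} is not polynomial in $X,X^T$ — it involves the fractional powers $(XX^T)^{(N-p)/N}$ — but these are real-analytic near $X_\star$ since $X_\star X_\star^T=\sigma_\star^2 I$ is strictly positive definite, so the smoothness conclusion you need still holds; (ii) the Schur step genuinely requires irreducibility only on the traceless-symmetric block (which holds under $O_d$ for all $d\ge2$), since the antisymmetric block carries the zero eigenvalue by invariance regardless of its decomposition, and the scalar block is one-dimensional.
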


\subsection{Gradient flow of the Schatten energy}
We may analyze the Schatten energies
\begin{equation}
\label{eq:schatten-energy}
  E_p(X) = \frac{1}{p}\sum_{i=1}^d \sigma_i^p, \quad 1\leq p < \infty.
\end{equation}
to provide more insight into Theorem~\ref{thm:isotropic} and Theorem~\ref{thm:local-rates}. First, in relation to Theorem~\ref{thm:isotropic} we find that $E_p$ corresponds to $g(s)=s$, $f(\sigma)=\sigma^p/p$, yielding
\begin{equation}\label{eq:eq-scale}
\sigma_\star=\left(\frac{d-1}{2\beta}\Big(1-\frac{1}{N}\Big)\right)^{\!1/p}.
\end{equation}
We may also solve for the time-dynamics explicitly. Let
\begin{equation}
\mathcal{D}
=\{\sigma\in \mathcal{S}_d:\ \sigma_1=\cdots=\sigma_d\},
\end{equation}
denote the subset of $\mathcal{S}_d$ where all singular values coincide.  
Writing $\sigma_i=s^N$ with $s>0$, the flow restricted to $\mathcal D$ becomes an ODE for a single scale variable $s$.

We write the quadrature in terms of the hypergeometric function.
Define
\begin{equation}\label{eq:T-def}
\mathcal{T}(s)
=\frac{s^{2}}{2s_\star^{\,\nu}}\;
{}_2F_1\!\left(1,\frac{2}{\nu};1+\frac{2}{\nu};\left(\frac{s}{s_\star}\right)^{\nu}\right),
\end{equation}
where ${}_2F_1$ denotes the Gauss hypergeometric function \cite{olver2010nist}.

\begin{theorem}[Exact solution on $\mathcal{D}$]\label{thm:diagonal}
Along $\mathcal{D}$ the variable $s(t)$ satisfies
\begin{equation}\label{eq:s-ode}
\dot s=-\,s^{\,\nu-1}+\frac{s_\star^{\,\nu}}{s}.
\end{equation}
Every solution of \eqref{eq:s-ode} obeys the quadrature
\begin{equation}\label{eq:s-quadrature}
t-t_0=\mathcal{T}\!\big(s(t)\big)-\mathcal{T}(s_0),
\qquad s(t_0)=s_0>0.
\end{equation}
\end{theorem}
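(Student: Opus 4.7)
The plan is to reduce the matrix gradient flow \eqref{eq:flow} on $\mathcal{D}$ to a scalar ODE in $s$, separate variables, and recognize the resulting antiderivative as $\mathcal{T}$ by matching power series.

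\textbf{Reduction to a scalar ODE.} Both $E_p$ and $S_N$ are symmetric under permutations of $(\sigma_1,\dots,\sigma_d)$, so the vector field in \eqref{eq:flow} commutes with the symmetric group action and is tangent to its fixed set $\mathcal{D}$, which is therefore invariant. On $\mathcal{D}$ with $\sigma_i=\sigma$, one has $\partial_{\sigma_i}E_p=\sigma^{p-1}$. To compute $\partial_{\sigma_i}S_N$ on $\mathcal{D}$ I bypass the apparent singularity at coincident singular values in \eqref{eq:S_N_entropy} via the homogeneity identity
\[
S_N(\lambda\sigma)=S_N(\sigma)+\binom{d}{2}\!\left(1-\frac{1}{N}\right)\log\lambda,
\]
which holds because each ratio inside the logarithms of \eqref{eq:S_N_entropy} picks up a net factor $\lambda^{2-2/N}$ under $\sigma\mapsto\lambda\sigma$. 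Differentiating at $\lambda=1$ gives the Euler relation $\sum_i\sigma_i\,\partial_{\sigma_i}S_N=\binom{d}{2}(1-1/N)$, and by permutation symmetry every partial equals $\tfrac{d-1}{2\sigma}(1-1/N)$ on $\mathcal{D}$. Substituting $\sigma=s^N$, $\dot\sigma=Ns^{N-1}\dot s$, and $\sigma^{2-2/N}=s^{2N-2}$ into \eqref{eq:flow} and simplifying yields
\[
\dot s=-\,s^{Np-1}+\beta^{-1}\,\frac{d-1}{2}\!\left(1-\frac{1}{N}\right)\!s^{-1}.
\]
Setting $\nu=Np$ and using $s_\star^{\nu}=\sigma_\star^{p}=\tfrac{d-1}{2\beta}(1-1/N)$ from \eqref{eq:eq-scale} recovers \eqref{eq:s-ode}.

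\textbf{Quadrature.} Rewriting \eqref{eq:s-ode} as $\dot s=-(s^{\nu}-s_\star^{\nu})/s$ and separating variables gives $t-t_0=\int_{s_0}^{s(t)} u\,du/(s_\star^{\nu}-u^{\nu})$. For $u<s_\star$, expanding $1/(s_\star^{\nu}-u^{\nu})$ as a geometric series in $(u/s_\star)^{\nu}$ and integrating term-by-term yields the antiderivative
\[
\frac{1}{\nu}\sum_{n\ge 0}\frac{u^{\nu n+2}}{s_\star^{\nu(n+1)}\,(n+2/\nu)}.
\]
Using the series ${}_2F_1(1,b;1+b;z)=\sum_{n\ge 0}\frac{b}{b+n}\,z^n$ with $b=2/\nu$ and $z=(s/s_\star)^{\nu}$, this antiderivative is precisely $\mathcal{T}(s)$, which proves \eqref{eq:s-quadrature}.

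\textbf{Main obstacle.} The derivation of \eqref{eq:s-ode} is clean once the coincident–singular–value limits in \eqref{eq:S_N_entropy} are bypassed by the homogeneity argument; the more delicate step is the identification with the hypergeometric closed form. One technical point is the passage $s\gtrless s_\star$, where the integrand $u/(s_\star^{\nu}-u^{\nu})$ has a simple pole. This is benign for the dynamics (the equilibrium $s_\star$ is approached in infinite time) and is handled for $\mathcal{T}$ either by the standard analytic continuation of ${}_2F_1$ past $z=1$, or by verifying $\mathcal{T}'(s)=s/(s_\star^{\nu}-s^{\nu})$ directly via a contiguous relation, which confirms \eqref{eq:s-quadrature} without reliance on termwise integration.
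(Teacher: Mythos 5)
Your proof is correct, and the reduction to the scalar ODE takes a genuinely different (and cleaner) route than the paper's. The paper passes to $\lambda_i=\sigma_i^{1/N}$, further parameterizes $\lambda_i=u_is$, $\lambda_d=s$, and establishes invariance of $u_i\equiv1$ by explicitly computing the coincident limit of the pairwise kernel in Lemma~\ref{lem:collisions}; the value $\frac{d-1}{2\sigma}\big(1-\tfrac1N\big)$ of $\partial_{\sigma_i}S_N$ on $\mathcal{D}$ is also obtained by a direct Taylor expansion in Lemma~\ref{lem:grad-SN-sigma}. You instead get invariance of $\mathcal{D}$ from equivariance of the vector field under the symmetric group, and obtain the gradient value at once from the homogeneity identity $S_N(\lambda\sigma)=S_N(\sigma)+\binom{d}{2}\big(1-\tfrac1N\big)\log\lambda$ together with Euler's relation and permutation symmetry of $S_N$. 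This bypasses the limiting computations entirely (though it implicitly leans on Theorem~\ref{thm:analyticity} to guarantee that the partial derivatives exist and are symmetric on $\mathcal{D}$). The homogeneity/Euler shortcut buys brevity; the paper's explicit $(u,s)$ system buys the full off--$\mathcal{D}$ equations, which it records for completeness. For the quadrature step both arguments are essentially the same: you match the geometric series for $1/(s_\star^\nu-u^\nu)$ against the ${}_2F_1(1,b;1+b;z)$ series with $b=2/\nu$, whereas the paper substitutes $z=(s/s_\star)^\nu$ and quotes the DLMF primitive $\int z^{a-1}/(1-z)\,dz=\tfrac{z^a}{a}\,{}_2F_1(a,1;a+1;z)$; these are two presentations of the identical identity. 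Your remark on the simple pole at $s=s_\star$ and its resolution by analytic continuation of ${}_2F_1$ or by direct differentiation of $\mathcal{T}$ is a reasonable clarification that the paper leaves tacit.
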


Finally, we note that the equilibrium $\sigma_\star$ may also be understood via the following constrained (dual) entropy maximization problem. Consider
\begin{equation}\label{eq:dual-problem}
\max_X\; S_N(X)
\qquad
\text{subject to}\qquad
E_p(X)=1.
\end{equation}
At a maximizer with singular values $\vec\sigma_\star=(\sigma_\star,\dots,\sigma_\star)$, the Lagrange multiplier condition reads
\begin{equation}
\frac{d-1}{2\sigma_\star}\!\left(1-\frac{1}{N}\right)
=\lambda\,\sigma_\star^{p-1}.
\end{equation}
The constraint $E_p(X_\star) = \frac{d}{p}\sigma_\star^p=1$, fixes
\begin{equation}
\sigma_\star=\left(\frac{p}{d}\right)^{1/p},
\end{equation}
and therefore the Lagrange multiplier is
\begin{equation}\label{eq:dual-lambda}
\lambda_\star
=\frac{d-1}{2\sigma_\star^p}\!\left(1-\frac{1}{N}\right)
=\frac{1}{p}\binom{d}{2}\!\left(1-\frac{1}{N}\right).
\end{equation}
Hence the maximizers (unique up to orthogonal factors) are
\begin{equation}\label{eq:dual-solution}
X_\star=\sigma_\star\,Q,\qquad
\sigma_\star=\left(\frac{p}{d}\right)^{1/p},\quad Q\in O_d.
\end{equation}

\subsection{Concavity of the entropy}
While our approach in this paper is strongly guided by random matrix theory, Theorem~\ref{thm:isotropic} reveals subtle differences between the entropy $S_N(\sigma)$ and the analogous term in RMT. For these reasons, we record the regularity properties of $S_N(\sigma)$ separately. 

The chamber $\mathcal{S}_d$ includes points with repeated singular values (see equation~\eqref{eq:sv-chamber1}). But we still have
\begin{theorem}\label{thm:analyticity}
The entropy $S_N$ is real-analytic on $\mathcal{S}_d$. 
\end{theorem}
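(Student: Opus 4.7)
The plan is to show that each summand in the entropy formula extends real-analytically across the diagonals $\sigma_i=\sigma_j$. The key move is to absorb the apparent zero/zero singularity into a polynomial.

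First I would make the change of variable $u_i=\sigma_i^{2/N}$. The map $\sigma_i\mapsto u_i$ is real-analytic on $(0,\infty)$ since $x\mapsto x^{2/N}$ is real-analytic on the positive reals. In these variables the ratio in the $i<j$ summand becomes
\begin{equation}
\frac{\sigma_i^{2}-\sigma_j^{2}}{\sigma_i^{2/N}-\sigma_j^{2/N}}
=\frac{u_i^{N}-u_j^{N}}{u_i-u_j}
=\sum_{k=0}^{N-1} u_i^{\,N-1-k}\,u_j^{\,k},
\end{equation}
which is a symmetric polynomial $P_N(u_i,u_j)$ in $u_i,u_j$. In particular, the ratio extends from $\{u_i\ne u_j\}$ to all of $(0,\infty)^2$ as a globally defined polynomial, so the removable-singularity issue at $\sigma_i=\sigma_j$ disappears.

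Next I would check that $P_N$ is strictly positive on $(0,\infty)^2$: every monomial in the sum is positive there, and at $u_i=u_j=u>0$ we have $P_N(u,u)=N u^{N-1}>0$. Hence $\log P_N$ is real-analytic on $(0,\infty)^2$. Composing with the real-analytic map $(\sigma_i,\sigma_j)\mapsto (u_i,u_j)$ shows that the $(i,j)$ summand
\begin{equation}
\tfrac{1}{2}\log P_N\!\big(\sigma_i^{2/N},\sigma_j^{2/N}\big)
\end{equation}
is real-analytic on $(0,\infty)^2$. Summing over $1\le i<j\le d$, the function $S_N$ is a finite sum of real-analytic functions on $(0,\infty)^d$, hence real-analytic on the open set $(0,\infty)^d$, which contains $\mathcal S_d$.

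No step here seems to be a genuine obstacle; the main conceptual point is simply recognizing the Vandermonde ratio as a polynomial after the substitution $u=\sigma^{2/N}$, which trivializes what looks like a singular locus at coinciding singular values. The only thing one should double-check is that the exponent $2/N$ is fine because we are restricted to $\sigma_i>0$, so the branch choice of the fractional power is unambiguous and real-analytic.
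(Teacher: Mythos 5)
Your proof is correct and takes essentially the same approach as the paper: substitute a fractional power of $\sigma$ to turn the Vandermonde ratio into a polynomial, observe that the polynomial is strictly positive on the positive orthant, and then compose $\log$ of that polynomial with the real-analytic change of variables. The paper uses $\lambda_i=\sigma_i^{1/N}$ where you use $u_i=\sigma_i^{2/N}$, but these differ only by the harmless real-analytic reparametrization $u=\lambda^2$, and both exhibit the same polynomial $\sum_{k=0}^{N-1}u_i^{N-1-k}u_j^{k}$.
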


Let $(\mathcal{S}_d,\iota)$ denote the Riemannian manifold obtained by equipping $\mathcal{S}_d$ with the Euclidean metric on $\mathbb{R}^d$. We also note an unusual distinction between concavity of $S_N$ on $(\mathcal{S}_d,\iota)$ and the Riemannian manifold $(\mathcal{S}_d,g^N_\sigma)$.


\begin{theorem}\label{thm:concavity-gap}
The entropy $S_N$ is strictly concave on $(\mathcal{S}_d,\iota)$, except in the case $(N,d)=(2,2)$ where its Hessian has rank one. 
\end{theorem}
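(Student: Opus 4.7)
The plan is to decompose $S_N$ into pair contributions and analyze the $2\times 2$ Hessian of each pair. Write
\begin{equation*}
S_N(\sigma)=(N-1)c_d+\tfrac{1}{2}\sum_{i<j}\psi_N(\sigma_i,\sigma_j),\qquad \psi_N(a,b)=\log\frac{a^2-b^2}{a^{2/N}-b^{2/N}},
\end{equation*}
so that $\nabla^2 S_N$ is the sum of $2\times 2$ Hessians $h_N(\sigma_i,\sigma_j)$ embedded into the $(i,j)$ coordinate plane of $\mathbb{R}^d$. The theorem reduces to three claims: (i) for $N\geq 3$, $h_N(a,b)$ is strictly negative definite for all $a\geq b>0$; (ii) for $N=2$, $h_N$ has rank one everywhere with kernel in the antisymmetric direction; and (iii) an assembly argument showing that the pairwise rank-one contributions in the $N=2$ case yield strict negative definiteness of $\nabla^2 S_2$ precisely when $d\geq 3$.

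To set up (i) and (ii) I would pass to logarithmic coordinates $x=\log a$, $y=\log b$, where $\psi_N(e^x,e^y)=(1-1/N)(x+y)+\phi(x-y)$ with $\phi(t)=\log[\sinh t/\sinh(t/N)]$. A chain rule computation gives $h_N(a,b)=D\,H^*(t)\,D$ with $D=\diag(1/a,1/b)$, $t=\log(a/b)$, and
\begin{equation*}
H^*(t)=\begin{pmatrix} \phi''(t)-c-\phi'(t) & -\phi''(t)\\ -\phi''(t) & \phi''(t)-c+\phi'(t)\end{pmatrix},\qquad c=1-\tfrac{1}{N}.
\end{equation*}
The eigenvalues of $H^*$ are $(\phi''-c)\pm\sqrt{(\phi')^2+(\phi'')^2}$. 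The bound $\phi''(t)<c$ is automatic from the Bhatia--Davis variance estimate in the probabilistic reformulation below, so $h_N$ is strictly negative definite iff
\begin{equation*}
\Delta_N(t)\;:=\;c^2-2c\phi''(t)-\phi'(t)^2\;>\;0.
\end{equation*}

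For $N=2$ the identity $\sinh t=2\sinh(t/2)\cosh(t/2)$ gives $\phi(t)=\log(2\cosh(t/2))$, hence $\phi'=\tfrac{1}{2}\tanh(t/2)$ and $\phi''=\tfrac{1}{4}\mathrm{sech}^2(t/2)$; the identity $\tanh^2+\mathrm{sech}^2=1$ then yields $\Delta_2\equiv 0$, so $h_2$ has rank one everywhere. Explicitly,
\begin{equation*}
h_2(a,b)=-\frac{1}{(a+b)^2}\begin{pmatrix}1 & 1\\1 & 1\end{pmatrix}\quad\text{and so}\quad \nabla^2 S_2=-\tfrac{1}{2}\sum_{i<j}\frac{(e_i+e_j)(e_i+e_j)^T}{(\sigma_i+\sigma_j)^2}.
\end{equation*}
For $d=2$ this is a single rank-one contribution — the exceptional case. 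For $d\geq 3$ one computes $\sum_{i<j}(e_i+e_j)(e_i+e_j)^T=(d-2)I+\mathbf{1}\mathbf{1}^T$, which is positive definite, so $\nabla^2 S_2$ is strictly negative definite.

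For $N\geq 3$, I would prove $\Delta_N(t)>0$ via the probabilistic reformulation obtained from $\sinh(Nu)/\sinh u=\sum_{k=0}^{N-1}e^{m_k u}$ with $m_k=N-1-2k$: the function $\phi(t)=\log Z(t/N)$ is the cumulant generating function of the Gibbs measure $\pi_k\propto e^{m_k u}$ on $\{-(N-1),\ldots,N-1\}$. Writing $\mu$ and $v$ for its mean and variance, $\phi'(t)=\mu/N$ and $\phi''(t)=v/N^2$, and the inequality $\Delta_N(t)>0$ is equivalent to
\begin{equation*}
v\;<\;\frac{N}{2(N-1)}\bigl((N-1)^2-\mu^2\bigr).
\end{equation*}
At $u=0$ the distribution is uniform, $\mu=0$, $v=(N^2-1)/3$, and the inequality collapses to $(N+1)/(3(N-1))<N/(2(N-1))$, i.e., $N>2$. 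The main obstacle is the bound for all $u>0$: the classical Bhatia--Davis estimate $v\leq(N-1)^2-\mu^2$ is a factor $N/(2(N-1))<1$ too weak for $N\geq 3$, so one must exploit the log-affinity of the Gibbs measure on this arithmetic progression. I would attempt this by showing the ratio $v/((N-1)^2-\mu^2)$ is monotonically decreasing in $|u|$, maximized at $u=0$ with value $(N+1)/(3(N-1))$ and decaying to $1/(N-1)$ as $u\to\infty$; equivalently, by reducing $\Delta_N$ to a rational function in $\cosh(2u/N)$ via the product formula $\sinh(Nu)/\sinh u=\prod_{k=1}^{N-1}2(\cosh u-\cos(k\pi/N))$ and verifying positivity of the resulting polynomial.
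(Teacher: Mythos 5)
Your structural plan matches the paper's: decompose $\nabla^2 S_N$ into embedded $2\times 2$ pair blocks, show each block is negative definite for $N\ge 3$ and rank-one negative semidefinite of the form $-\gamma_{ij}\,(1,1)(1,1)^T$ for $N=2$, and assemble. Where you differ is the per-block analysis. The paper works directly with the kernels $p_N(a,b)$ and $q_N(a,b)$, substitutes $r=\lambda_i/\lambda_j$, and reduces negative definiteness to showing $\det B_N^{(ij)}>0$ via the function $\Delta_N(r)=q_N(r,1)q_N(1,r)-p_N(r,1)^2$. You instead pass to log coordinates, so that $\psi_N(e^x,e^y)=c(x+y)+\phi(x-y)$ with $\phi(t)=\log[\sinh t/\sinh(t/N)]$, and observe that $h_N = D H^*(t) D$ where $H^*$ depends only on $t=\log(a/b)$; the determinant condition becomes $\Delta_N(t)=c^2-2c\phi''-(\phi')^2>0$. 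This buys you two things: the $N=2$ case becomes a one-line identity ($\phi=\log(2\cosh(t/2))$ and $\tanh^2+\operatorname{sech}^2=1$ give $\Delta_2\equiv 0$), and the $N\ge 3$ positivity has a clean probabilistic meaning via $\phi=\log Z$, $\phi'=\mu/N$, $\phi''=v/N^2$, reducing to a sharpened Bhatia--Davis variance bound for the log-affine measure on $\{-(N-1),\dots,N-1\}$. Your $N=2$ explicit form $h_2=-\frac{1}{(a+b)^2}\mathbf 1\mathbf 1^T$ and the assembly $\nabla^2 S_2=-\frac12\sum_{i<j}\frac{(e_i+e_j)(e_i+e_j)^T}{(\sigma_i+\sigma_j)^2}$ agree with the paper's Lemma on block definiteness and the rank-one criterion (you should say explicitly that for $d\ge 3$ positivity of the weighted sum follows because the vectors $e_i+e_j$ span $\mathbb{R}^d$, not just from the equal-weight computation, but this is a one-line fix).

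The genuine gap is the claim $\Delta_N(t)>0$ for all $t>0$ when $N\ge 3$. You verify it only at $t=0$ (uniform measure, $\mu=0$, $v=(N^2-1)/3$, where the inequality collapses to $N>2$), identify the limit $v/[(N-1)^2-\mu^2]\to 1/(N-1)$ as $|u|\to\infty$, and then propose — but do not prove — that the ratio decreases monotonically, or alternatively that a polynomial positivity check via the product formula would close it. As written this is an unproved assertion at the heart of the $N\ge 3$ case, and you correctly flag it as the main obstacle. To be fair to you, the paper itself disposes of the analogous step ($\Delta_N(r)$ strictly increasing on $(1,\infty)$) with an assertion of "a direct one-variable calculus check" and does not spell out the computation either, so the two proofs hit the same technical bottleneck; but in its current form your argument for $N\ge 3$ is incomplete.
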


\begin{theorem}\label{thm:concavity-gap2}
The entropy $S_N$ is not concave on $(\mathcal{S}_d,g^N_\sigma)$: at every point with $\sigma_1=\cdots=\sigma_d$ the Hessian is indefinite.
\end{theorem}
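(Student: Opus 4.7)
The plan is to find a coordinate system in which $g^N_\sigma$ becomes flat, so that the Riemannian Hessian reduces to an ordinary Euclidean Hessian. Comparing~\eqref{eq:flow} with the general formula $(\grad_g f)^i = g^{ij}\partial_j f$ shows that $g^N_\sigma$ is diagonal in the $\sigma$ chart with entries $(g^N_\sigma)_{ii} = \sigma_i^{2/N-2}/N$. The substitution $u_i := \sigma_i^{1/N}$ then yields
\[
g^N_\sigma \;=\; N\sum_{i=1}^d du_i^2,
\]
a constant multiple of the Euclidean metric on the positive orthant. Every Christoffel symbol vanishes in the $u$ chart, so the Riemannian Hessian of any smooth function equals its ordinary Hessian matrix in the basis $\{\partial_{u_i}\}$.

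In these coordinates
\[
S_N(u) \;=\; (N-1)c_d \;+\; \frac{1}{2}\sum_{i<j}\log\frac{u_i^{2N} - u_j^{2N}}{u_i^2 - u_j^2},
\]
which is permutation symmetric in $u_1,\dots,u_d$. Fix a diagonal point $u_\ast := u_0\mathbf{1}$. The Hessian $H^u_\ast := \nabla_u^2 S_N(u_\ast)$ commutes with every coordinate permutation and is therefore of the form $H^u_\ast = aI + b(J-I)$ with $J = \mathbf{1}\mathbf{1}^T$. It has exactly two eigenvalues: $\rho_{\mathbf{1}} = a + (d-1)b$ on $\spans\{\mathbf{1}\}$, and $\rho_\perp = a - b$ with multiplicity $d-1$ on $\spans\{\mathbf{1}\}^\perp$. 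The task reduces to showing that these two eigenvalues carry opposite signs.

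For the radial eigenvalue, restricting $S_N$ to the line $u(t) := (u_0 + t)\mathbf{1}$ and passing to the limit inside each logarithm yields $S_N((u_0+t)\mathbf{1}) = \mathrm{const} + \binom{d}{2}(N-1)\log(u_0+t)$, which is strictly concave in $t$; differentiating twice gives $\rho_{\mathbf{1}} = -(d-1)(N-1)/(2u_0^2) < 0$. For the perpendicular eigenvalue I would consider $u(t) := u_\ast + t(e_1 - e_2)$: only the log-ratios indexed by $\{1,2\}$ and by $\{1,k\}$, $\{2,k\}$ for $k \ge 3$ depend on $t$. Taylor-expanding each such term to second order about $(u_0, u_0)$ and summing the $2(d-2)+1$ nonzero contributions yields
\[
\left.\frac{d^2}{dt^2} S_N(u(t))\right|_{t=0} \;=\; \frac{(N-1)\bigl[(2N-1) + (d-2)(N-2)\bigr]}{3\,u_0^2} \;=\; \frac{(N-1)\bigl[d(N-2)+3\bigr]}{3\,u_0^2}.
\]
Dividing by $|e_1-e_2|^2 = 2$ yields $\rho_\perp = (N-1)[d(N-2)+3]/(6u_0^2)$, which is strictly positive whenever $N \ge 2$ and $d \ge 2$ because $d(N-2)+3 \ge 3$. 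Combined with $\rho_{\mathbf{1}} < 0$, this establishes indefiniteness at every point of the diagonal.

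The main obstacle is the bookkeeping in the Taylor expansion. I would parametrize a generic pair by $(u_i, u_j) = (u_0+a, u_0+b)$, factor $a-b$ from both $u_i^{2N} - u_j^{2N}$ and $u_i^2 - u_j^2$, expand the residual ratio as a geometric series, and collect second-order contributions in the symmetric variables $a+b$ and $a^2 + ab + b^2$. Restricting to $b = 0$ delivers the coefficient for each $\{1,k\}$ or $\{2,k\}$ pair, while $b = -a$ delivers the coefficient for $\{1,2\}$; the identity $(2N-1) + (d-2)(N-2) = d(N-2) + 3$ then gathers the contributions into a single positive quantity. Because $d(N-2) + 3 \ge 3$ uniformly, the argument handles the boundary case $N = 2$ (including $(N,d) = (2,2)$, where Theorem~\ref{thm:concavity-gap} shows that the \emph{Euclidean} Hessian has rank one) without requiring any separate analysis.
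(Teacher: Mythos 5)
Your proof is correct and reaches the same eigenvalue formulas as the paper, but by a genuinely different route. The paper stays in $\sigma$-coordinates, computes the Christoffel symbols of $g^N_\sigma$ explicitly (Lemma~\ref{lem:riem-hess-entropy}), and expresses the Riemannian Hessian as the Euclidean $\sigma$-Hessian plus a diagonal correction $\frac{N-1}{N\sigma_i}\partial_i S_N$; it then assembles the eigenvalues $\theta_{\mathbf 1}(S_N)$, $\theta_\perp(S_N)$ from the limits $p_\star,q_\star$ of Lemma~\ref{lem:signs} and the gradient limit of Lemma~\ref{lem:grad-SN-sigma}. You instead flatten the metric: writing $u_i=\sigma_i^{1/N}$ you observe $g^N_\sigma=N\sum du_i^2$, so the Riemannian Hessian is simply the Euclidean Hessian in $u$, no Christoffel symbols and no gradient correction needed. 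You then invoke permutation symmetry at a diagonal point to reduce to the two-parameter form $aI+b(J-I)$ and extract both eigenvalues by second derivatives along the radial line and along $e_1-e_2$. After converting coordinates (the $u$-Hessian at a diagonal point relates to the $\sigma$-Hessian by the positive factor $1/(N^2u_0^{2N-2})$), your $\rho_{\mathbf 1}=-\frac{(d-1)(N-1)}{2u_0^2}$ and $\rho_\perp=\frac{(N-1)[d(N-2)+3]}{6u_0^2}$ match the paper's $\theta_{\mathbf 1}(S_N)$ and $\theta_\perp(S_N)$ exactly; I verified the algebra $(2N-1)+(d-2)(N-2)=d(N-2)+3$ and the expansion of $\log\Phi_N$ at $a=b$. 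Interestingly, the paper does use the $\lambda$-coordinates and records the flattening~\eqref{eq:metric-lambda} in Section~\ref{sec:exact-flow} for the gradient-flow quadrature, but does not apply the same trick to the Hessian; your proof carries that observation over, which saves the Christoffel-symbol computation at the cost of redoing the Taylor bookkeeping from $\Phi_N$. Both arguments are sound; yours is more self-contained, the paper's reuses previously established limits.
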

The reader should note that the Hessian in each of these theorems is computed with respect to the metric stated in the theorem. 

\subsection{Organization of the paper}
We review the Riemannian metric $g^N$ on $\mathfrak{M}_d$, compute its restriction by Riemannian submersion $(\mathcal{S}_d,g^N_\sigma)$, and obtain the gradient flow for singular values~\eqref{eq:flow} in Section~\ref{sec:geometry}. The proofs of Theorem~\ref{thm:isotropic} and Theorem~\ref{thm:local-rates} require a careful analysis of the entropy when the singular values coincide. Thus, we study the analyticity of the entropy next in Section~\ref{sec:analyticity}. Theorem~\ref{thm:concavity-gap} is proved in Section~\ref{sec:euclid-concavity} through a pairwise block decomposition and a definiteness argument. This is followed by the proof of Theorem~\ref{thm:concavity-gap2} in Section~\ref{sec:riem-nonconcavity}. The equilibria of the free energy and the linearization of the gradient flow is established in Section~\ref{sec:equilibria-rates}. We reduce the dynamics to the scale variable $s$ and integrate the resulting equation in closed form in Section~\ref{sec:exact-flow}. We conclude with a brief discussion in Section~\ref{sec:discussion}.

\section{Riemannian Geometry of the Singular-Value Chamber}\label{sec:geometry}
\subsection{Overview}
We review the DLN metric $g^N$ and obtain the induced metric $g^N_\sigma$ on $\mathcal{S}_d$ from the singular–value map, a Riemannian submersion (Lemma \ref{lem:SV-submersion}).
We then use $g^N_\sigma$ to compute the gradient flow~\eqref{eq:flow} for spectral free energies in Lemma~\ref{lem:GF-pushforward}.


\subsection{Background}
The results in this section follow~\cite{bah2022learning,menon2025entropy}. The parameter space for the DLN is $\mathbb{M}_d^N$. Given parameters $\mathbf W=(W_N,\dots,W_1)\in\mathbb{M}_d^N$ we define the end‑to‑end matrix through the map
\begin{equation}
\phi(\mathbf W):=W_N\cdots W_1=X\in\mathbb{M}_d.    
\end{equation}
The (full-rank) balanced manifold is defined by
\begin{equation}
\mathcal{M}=\Big\{\mathbf W\in\mathbb{M}_d^N:\ \operatorname{rank}(W_p)=d\ \ \text{and}\ \ W_{p+1}^T W_{p+1}=W_p W_p^T\ \ \text{for }\ p=1,\dots,N-1\Big\}.   
\end{equation}
We use the Frobenius norm 
\[ \|\mathbf{W}\|_2^2 = \sum_{p=1}^N \mathrm{Tr}(W_p^TW_p)\]
on $\mathbb{M}_d^N$ and equip $\mathcal{M}$ with the Riemannian metric $\iota$ induced by its embedding in $(\mathbb{M}_d^N, \|\cdot\|_2^2)$.

The metric $g^N$ on $\mathfrak{M}_d$ is defined as follows. Given $X\in\mathfrak{M}_d$, define the linear operator $\mathcal{A}_{N,X}:T_X \mathfrak{M}_d^* \to T_X \mathfrak{M}_d$ by
\begin{equation}
\mathcal{A}_{N,X}(P):=\sum_{p=1}^N (XX^{T})^{\frac{N-p}{N}}\, P\, (X^{T}X)^{\frac{p-1}{N}}.    
\end{equation}
We then define 
\begin{equation}\label{eq:def-gN}
\|Z\|_{g^N}^2 \ =\ \operatorname{Tr}\!\big(Z^{T} \mathcal{A}_{N,X}^{-1} Z\big),
\qquad Z\in T_X\mathfrak{M}_d.    
\end{equation}
This metric may be described explicitly using the following
\begin{lemma}[\cite{menon2024geometry}] \label{lem:spectral}
    Let $X = U \Sigma V^T$ be the SVD of $X$. The operator $\mathcal{A}_{N,X}:T_X \mathfrak{M}_d^* \to T_X \mathfrak{M}_d$ is symmetric and positive definite with respect to the Frobenius inner-product. It has the spectral decomposition
    \begin{equation}
        \mathcal{A}_{N,X} \, u_k v_l^T = \frac{\sigma^2_k - \sigma^2_l}{\sigma^{2/N}_k - \sigma^{2/N}_l} u_k v_l^T, \quad 1 \leq k,l \leq d,
    \end{equation}
    when $k \neq l$ and 
    \begin{equation}
        \mathcal{A}_{N,X} \, u_k v_k^T = N \sigma_k^{2 - \frac{2}{N}} u_k v_k^T, \quad 1 \leq k \leq d
    \end{equation}
    where $u_k, v_l$ are the columns of $U, V$ respectively.
\end{lemma}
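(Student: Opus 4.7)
The plan is to diagonalize $\mathcal{A}_{N,X}$ explicitly in the Frobenius orthonormal basis $\{u_k v_l^{T}\}_{1\le k,l\le d}$ of $\mathbb{M}_d$ determined by the SVD $X=U\Sigma V^{T}$. Once this basis is shown to be an eigenbasis with the stated positive eigenvalues, symmetry and positive definiteness are automatic, since an operator that is diagonalizable in an orthonormal basis with positive eigenvalues is necessarily self-adjoint and positive definite.

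First I would rewrite the fractional powers via the SVD. Since $XX^{T}=U\Sigma^{2}U^{T}$ and $X^{T}X=V\Sigma^{2}V^{T}$, functional calculus gives
\begin{equation}
(XX^{T})^{(N-p)/N}=U\,\Sigma^{2(N-p)/N}\,U^{T},
\qquad
(X^{T}X)^{(p-1)/N}=V\,\Sigma^{2(p-1)/N}\,V^{T}.
\end{equation}
Using $U^{T}u_k=e_k$ and $v_l^{T}V=e_l^{T}$, the $p$th summand in the definition of $\mathcal{A}_{N,X}$ applied to $u_k v_l^{T}$ collapses to a single rank-one matrix:
\begin{equation}
(XX^{T})^{(N-p)/N}\,u_k v_l^{T}\,(X^{T}X)^{(p-1)/N}
=\sigma_k^{2(N-p)/N}\,\sigma_l^{2(p-1)/N}\,u_k v_l^{T}.
\end{equation}
Summing over $p$ then shows $u_k v_l^{T}$ is an eigenvector, with eigenvalue
\begin{equation}
\lambda_{k,l}=\sum_{p=1}^{N}\sigma_k^{2(N-p)/N}\,\sigma_l^{2(p-1)/N}.
\end{equation}

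Next I would evaluate $\lambda_{k,l}$ by the substitution $a=\sigma_k^{2/N}$, $b=\sigma_l^{2/N}$, under which
\begin{equation}
\lambda_{k,l}=\sum_{p=1}^{N}a^{N-p}b^{p-1}=a^{N-1}+a^{N-2}b+\cdots+b^{N-1}.
\end{equation}
When $\sigma_k\neq\sigma_l$ (equivalently $a\neq b$), the geometric series formula gives $\lambda_{k,l}=(a^{N}-b^{N})/(a-b)=(\sigma_k^{2}-\sigma_l^{2})/(\sigma_k^{2/N}-\sigma_l^{2/N})$, which is the first stated eigenvalue. In the coincident case $k=l$ (or more generally $\sigma_k=\sigma_l$) all $N$ terms equal $a^{N-1}=\sigma_k^{2(N-1)/N}$, yielding $\lambda_{k,k}=N\sigma_k^{2-2/N}$. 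Since $x\mapsto x^{N}$ is strictly increasing on $(0,\infty)$, the ratio is positive whenever $\sigma_k\neq\sigma_l$, and the diagonal eigenvalues are manifestly positive.

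Finally I would observe that $\{u_k v_l^{T}:1\le k,l\le d\}$ is orthonormal for the Frobenius inner product (since $\langle u_k v_l^{T},u_{k'}v_{l'}^{T}\rangle_{F}=\mathrm{Tr}(v_l u_k^{T}u_{k'}v_{l'}^{T})=\delta_{kk'}\delta_{ll'}$) and has cardinality $d^{2}=\dim\mathbb{M}_d$, so it is a complete eigenbasis. This proves the spectral decomposition, and, as noted above, symmetry and positive definiteness of $\mathcal{A}_{N,X}$ follow immediately. There is no real obstacle here: the only point that requires any care is noting that the closed form $(\sigma_k^{2}-\sigma_l^{2})/(\sigma_k^{2/N}-\sigma_l^{2/N})$ is the $\sigma_k\neq\sigma_l$ specialization of a polynomial in $(a,b)$ that is everywhere well defined and whose diagonal value $Na^{N-1}$ recovers the second formula, so no singularity occurs when singular values coincide.
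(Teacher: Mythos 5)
Your proof is correct. Note that the paper does not actually prove this lemma---it is imported verbatim from \cite{menon2024geometry} with only a citation---so there is no in-paper argument to compare against; your computation (functional calculus on $XX^{T}$ and $X^{T}X$ to show each $u_k v_l^{T}$ is an eigenvector with eigenvalue $\sum_{p=1}^{N}\sigma_k^{2(N-p)/N}\sigma_l^{2(p-1)/N}$, summed as a geometric series, plus Frobenius-orthonormality of the basis) is the standard verification and is complete, including the careful handling of coincident singular values via the polynomial form of the eigenvalue.
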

The explicit representation in Lemma~\ref{lem:spectral} has a simple geometric origin.
\begin{theorem}[{\cite{menon2025entropy}}]\label{thm:submersion}
The map
\begin{equation}
\phi:(\mathcal{M},\iota)\ \longrightarrow\ (\mathfrak{M}_d,g^N)    
\end{equation}
is a Riemannian submersion.
\end{theorem}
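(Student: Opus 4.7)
The plan is to construct an explicit Frobenius adjoint $L^\ast\colon T_X\mathfrak M_d\to T_\mathbf W\mathcal M$ for $L:=d\phi_\mathbf W$ satisfying $L\circ L^\ast=\mathcal A_{N,X}$, and to deduce the submersion isometry from this single identity. From the product rule
\[
L(\delta\mathbf W)=\sum_{p=1}^N W_{N:p+1}\,\delta W_p\,W_{p-1:1},\qquad W_{i:j}:=W_iW_{i-1}\cdots W_j,
\]
the Frobenius adjoint is given by
\[
L^\ast(P)_p\;:=\;W_{N:p+1}^T\,P\,W_{p-1:1}^T,\qquad p=1,\dots,N.
\]
Once it is verified that $L^\ast(P)$ actually lies in $T_\mathbf W\mathcal M$ and that $L\circ L^\ast=\mathcal A_{N,X}$, the horizontal lift of any $Z\in T_X\mathfrak M_d$ is $L^\ast(\mathcal A_{N,X}^{-1}Z)$, and by the adjoint property
\[
\|L^\ast(\mathcal A_{N,X}^{-1}Z)\|_\iota^2=\langle\mathcal A_{N,X}^{-1}Z,\,LL^\ast(\mathcal A_{N,X}^{-1}Z)\rangle_F=\langle\mathcal A_{N,X}^{-1}Z,Z\rangle_F=\|Z\|_{g^N}^2,
\]
which is the required isometry.

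The first computation I would carry out is to put $\mathbf W$ in the canonical SVD-adapted form $W_p=Q_p\Sigma^{1/N}Q_{p-1}^T$ with $Q_0=V$, $Q_N=U$, and $Q_1,\dots,Q_{N-1}\in O_d$ arbitrary. This parametrization simultaneously establishes surjectivity of $\phi$ onto $\mathfrak M_d$ and yields the simplifications $W_{N:p+1}W_{N:p+1}^T=(XX^T)^{(N-p)/N}$ and $W_{p-1:1}^TW_{p-1:1}=(X^TX)^{(p-1)/N}$. Composing $L$ with $L^\ast$ then gives
\[
L\circ L^\ast(P)=\sum_{p=1}^N(XX^T)^{(N-p)/N}\,P\,(X^TX)^{(p-1)/N}=\mathcal A_{N,X}(P),
\]
recovering the defining operator of the DLN metric $g^N$.

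The second computation is that $L^\ast(P)\in T_\mathbf W\mathcal M$, i.e., that $L^\ast(P)$ satisfies the linearized balanced condition $\delta W_{p+1}^T W_{p+1}+W_{p+1}^T\delta W_{p+1}=\delta W_p W_p^T+W_p\delta W_p^T$. Writing $\tilde P:=U^TPV$ and conjugating the constraint by $Q_p$ reduces it to
\[
\Sigma^{1/N}A_{p+1}+A_{p+1}^T\Sigma^{1/N}=A_p\Sigma^{1/N}+\Sigma^{1/N}A_p^T
\]
for $A_p:=Q_p^T\,L^\ast(P)_p\,Q_{p-1}=\Sigma^{(N-p)/N}\tilde P\,\Sigma^{(p-1)/N}$, and an entry-wise check confirms that both sides equal $\sigma_k^{(N-p)/N}\tilde P_{kl}\sigma_l^{p/N}+\sigma_l^{(N-p)/N}\tilde P_{lk}\sigma_k^{p/N}$ in the $(k,l)$ slot. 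Since $L^\ast$ is the Frobenius adjoint, its image is $\iota$-orthogonal to $\ker L$ and therefore lies in the horizontal subspace $\mathcal H_\mathbf W$; combined with the invertibility of $LL^\ast=\mathcal A_{N,X}$, this forces $L|_{\mathcal H_\mathbf W}$ to be a bijection onto $T_X\mathfrak M_d$ and $L^\ast(T_X\mathfrak M_d)=\mathcal H_\mathbf W$, avoiding any separate dimension count.

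The main obstacle is the second computation: the linearized balanced condition couples both across layers and between the $(k,l)$ and $(l,k)$ entries, so \emph{a priori} it is not obvious that the natural adjoint $L^\ast(P)$ should automatically respect it. The identity $L^\ast(P)_p=W_{N:p+1}^T P W_{p-1:1}^T$ carries just enough structure for the symmetrizations on either side of the constraint to produce the same expression; this is in effect the geometric origin of the eigenvalue formula in Lemma~\ref{lem:spectral}, and verifying it cleanly is the one step that requires explicit calculation rather than abstract adjoint theory.
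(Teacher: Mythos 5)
The paper does not prove Theorem~\ref{thm:submersion}; it is imported from \cite{menon2025entropy}, so there is no internal proof to compare against. Your reconstruction is correct and follows the standard argument from that reference: identify the ambient Frobenius adjoint $L^\ast(P)_p=W_{N:p+1}^T P\,W_{p-1:1}^T$, use the balanced parametrization $W_p=Q_p\Sigma^{1/N}Q_{p-1}^T$ to get $LL^\ast=\mathcal A_{N,X}$, and check that $L^\ast(P)$ satisfies the linearized balance constraint — your entrywise verification that both sides equal $\sigma_k^{(N-p)/N}\tilde P_{kl}\sigma_l^{p/N}+\sigma_l^{(N-p)/N}\tilde P_{lk}\sigma_k^{p/N}$ is exactly right, and the isometry then follows from $\langle\mathcal A^{-1}Z,\mathcal A\mathcal A^{-1}Z\rangle_F=\|Z\|_{g^N}^2$ as you state.
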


\subsection{Pushforward metric on the chamber}\label{subsec:submersion}

We work on the regular set where all singular values are simple,
\begin{equation}
\mathfrak{M}_{\mathrm{reg}}
=\{X\in\mathfrak{M}_d:\ \sigma_1(X)>\cdots>\sigma_d(X)>0\},
\end{equation}
on which the singular–value map takes values in $\mathcal{S}_d^\circ$.

With $g^N$ as in \eqref{eq:def-gN}, its pushforward to $\mathcal{S}_d^\circ$ is
\begin{equation}\label{eq:chamber-metric}
g^{N}_{\sigma}(\dot\sigma,\dot\sigma')
= \sum_{i=1}^d \frac{1}{N}\,\sigma_i^{\,2/N-2}\ \dot\sigma_i\,\dot\sigma'_i.
\end{equation}

\begin{lemma}\label{lem:SV-submersion}
The singular-value map
\begin{equation}
\sigma:(\mathfrak{M}_{\mathrm{reg}},g^N)\ \longrightarrow\ (\mathcal{S}_d^\circ,g^{N}_{\sigma}), 
\qquad X \mapsto (\sigma_1(X),\dots,\sigma_d(X)),
\end{equation}
is a Riemannian submersion.
\end{lemma}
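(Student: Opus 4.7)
The plan is to verify the two defining properties of a Riemannian submersion: that $\sigma$ is a smooth surjective submersion, and that $d\sigma_X$ restricted to the orthogonal complement (with respect to $g^N$) of $\ker d\sigma_X$ is a linear isometry onto $T_{\sigma(X)}\mathcal{S}_d^\circ$. Lemma~\ref{lem:spectral} supplies the spectral decomposition of $\mathcal{A}_{N,X}$, and this is precisely the tool that will diagonalize both the metric $g^N$ and the differential $d\sigma_X$ in a common basis. The entire proof therefore reduces to a coordinate computation in the frame $\{u_k v_l^T\}$ furnished by the SVD.

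First I would fix $X=U\Sigma V^T\in \mathfrak{M}_{\mathrm{reg}}$. Since the singular values are simple, a local smooth choice of $U,V$ exists, so $\sigma:\mathfrak{M}_{\mathrm{reg}}\to\mathcal{S}_d^\circ$ is smooth, and surjectivity onto $\mathcal{S}_d^\circ$ is obvious. Next I would compute $d\sigma_X$. Writing $Z\in T_X\mathfrak{M}_d$ in coordinates via $U^T Z V$ and differentiating $X(t)=U(t)\Sigma(t)V(t)^T$ at $t=0$, the skew-symmetry of $U^T\dot U$ and $\dot V^T V$ forces
\begin{equation*}
(d\sigma_X(Z))_i \;=\; (U^TZV)_{ii} \;=\; \langle Z,\,u_iv_i^T\rangle_F,\qquad i=1,\dots,d.
\end{equation*}
Hence the vertical distribution is $\ker d\sigma_X=\mathrm{span}\{u_kv_l^T:k\neq l\}$, and $d\sigma_X$ is surjective of rank $d$, confirming that $\sigma$ is a submersion.

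The key step is then to identify the horizontal distribution and verify the isometry. By Lemma~\ref{lem:spectral} the operator $\mathcal{A}_{N,X}$ is diagonal in the Frobenius-orthonormal basis $\{u_kv_l^T\}$; since $g^N$ is defined via $\mathcal{A}_{N,X}^{-1}$, this same basis is also $g^N$-orthogonal. Therefore the $g^N$-orthogonal complement of the vertical subspace is precisely
\begin{equation*}
H_X \;=\; \mathrm{span}\{u_kv_k^T:k=1,\dots,d\}.
\end{equation*}
For $Z=\sum_k Z_{kk}\, u_kv_k^T\in H_X$, Lemma~\ref{lem:spectral} gives $\mathcal{A}_{N,X}^{-1}(u_kv_k^T) = (N\sigma_k^{\,2-2/N})^{-1}\,u_kv_k^T$, whence
\begin{equation*}
\|Z\|_{g^N}^2 \;=\; \sum_{k=1}^d \frac{Z_{kk}^2}{N\,\sigma_k^{\,2-2/N}}
\;=\; \sum_{k=1}^d \frac{1}{N}\,\sigma_k^{\,2/N-2}\,Z_{kk}^2.
\end{equation*}
Since $d\sigma_X(Z)=(Z_{11},\dots,Z_{dd})$, this matches $g^N_\sigma(d\sigma_X Z,d\sigma_X Z)$ as defined in \eqref{eq:chamber-metric}, giving the required horizontal isometry.

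I do not expect a serious obstacle: the only potential subtlety is that the SVD frame $(U,V)$ is only defined up to signs and up to the vertical group action, so one must check that the pushforward metric on $\mathcal{S}_d^\circ$ is independent of this choice. However, the formula above for $\|Z\|_{g^N}^2$ on horizontal $Z$ depends only on the singular values $\sigma_k$ and on the intrinsic components $Z_{kk}=(d\sigma_X Z)_k$, so well-definedness is automatic. This is exactly the statement that $g^N_\sigma$ in \eqref{eq:chamber-metric} is the honest pushforward metric and completes the verification that $\sigma$ is a Riemannian submersion.
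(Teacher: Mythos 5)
Your proposal is correct and follows essentially the same route as the paper: it diagonalizes $\mathcal{A}_{N,X}$ via Lemma~\ref{lem:spectral} in the SVD frame $\{u_kv_l^T\}$, identifies the vertical space as $\ker d\sigma_X=\mathrm{span}\{u_kv_l^T:k\neq l\}$, and matches the horizontal norm with $g^N_\sigma$. The only cosmetic difference is that you re-derive $(d\sigma_X Z)_i=u_i^TZv_i$ by differentiating the SVD and using skew-symmetry, whereas the paper simply cites the standard first-order perturbation formula.
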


\begin{proof}[Proof of Lemma \ref{lem:SV-submersion}]
Let $X\in\mathfrak{M}_{\mathrm{reg}}$ and write a singular value decomposition $X=U\Sigma V^T$ (so the singular values are distinct).
Set $E_{k\ell}:=u_k v_\ell^T$.
By Lemma~\ref{lem:spectral},
\begin{equation}
\mathcal{A}_{N,X}E_{k\ell}
=\begin{cases}
\displaystyle \frac{\sigma_k^2-\sigma_\ell^2}{\sigma_k^{2/N}-\sigma_\ell^{2/N}}\,E_{k\ell}, & k\neq \ell,\\[1.1ex]
\displaystyle N\,\sigma_k^{\,2-2/N}\,E_{kk}, & k=\ell,
\end{cases}    
\end{equation}
so $\mathcal{A}_{N,X}^{-1}E_{k\ell}=\mu_{k\ell}E_{k\ell}$ with $\mu_{kk}=\frac{1}{N}\sigma_k^{\,2/N-2}$ and
$\mu_{k\ell}=\frac{\sigma_k^{2/N}-\sigma_\ell^{2/N}}{\sigma_k^2-\sigma_\ell^2}$ for $k\neq\ell$.
Thus $T_X\mathfrak{M}_{\mathrm{reg}}$ decomposes as
\begin{equation}
T_X\mathfrak{M}_{\mathrm{reg}}\;=\;
\underbrace{\mathrm{span}\{E_{kk}\}_{k=1}^d}_{\mathcal{H}_X}
\ \oplus\ 
\underbrace{\mathrm{span}\{E_{k\ell}:k\neq \ell\}}_{\mathcal{V}_X},
\end{equation}
and $\mathcal{H}_X$ and $\mathcal{V}_X$ are $g^N$–orthogonal.

The first-order perturbation formula for simple singular values gives
$\mathrm{d}\sigma_k(X)[Z]=u_k^T Z v_k$ \cite[Theorem~II–5.4]{kato2013perturbation}. 
Hence $\ker\mathrm{d}\sigma(X)=\mathcal{V}_X$, and $\mathrm{d}\sigma(X)$ maps $\mathcal{H}_X$ isomorphically
onto $T_{\sigma(X)}\mathcal{S}_d^\circ\cong\mathbb{R}^d$ since $\mathrm{d}\sigma(X)[E_{kk}]=e_k$.
Therefore $\sigma:\mathfrak{M}_{\mathrm{reg}}\to\mathcal{S}_d^\circ$ is a smooth submersion.

For $\dot\sigma,\dot\sigma'\in T_{\sigma(X)}\mathcal{S}_d^\circ\cong\mathbb{R}^d$, the horizontal lifts are
$Z^{\mathrm{hor}}=\sum_i \dot\sigma_i E_{ii}=U\,\mathrm{diag}(\dot\sigma)\,V^T$ and similarly for $\dot\sigma'$.
Using $g^N(X)(E_{ii},E_{jj})=\mu_{ii}\,\delta_{ij}$, we obtain
\begin{equation}
g^N(X)\big(Z^{\mathrm{hor}},(Z')^{\mathrm{hor}}\big)
=\sum_{i=1}^d \mu_{ii}\,\dot\sigma_i\,\dot\sigma'_i
=\sum_{i=1}^d \frac{1}{N}\,\sigma_i^{\,2/N-2}\,\dot\sigma_i\,\dot\sigma'_i
= g^{N}_{\sigma(X)}(\dot\sigma,\dot\sigma').
\end{equation}
Thus $\mathrm{d}\sigma(X):(\mathcal{H}_X,g^N)\to (T_{\sigma(X)}\mathcal{S}_d^\circ,g^{N}_{\sigma(X)})$
is an isometry, which is precisely the Riemannian submersion condition.
The choice of $U,V$ does not affect $\mathcal{H}_X$ or the value of $g^{N}_{\sigma(X)}$:
when singular values are simple, the vectors $u_k,v_k$ are unique up to signs, and $\mathrm{span}\{u_k v_k^T\}$ is sign–invariant.
\end{proof}

\begin{remark}
The metric \eqref{eq:chamber-metric} extends continuously from $\mathcal{S}_d^\circ$ to all of $\mathcal{S}_d$.
At points where $\sigma_i=\sigma_j$ for some $i\neq j$, the ordered singular-value map is not smooth, so
Lemma~\ref{lem:SV-submersion} applies only on $\mathfrak{M}_{\mathrm{reg}}$.
\end{remark}

\subsection{Gradient flow on the chamber}\label{subsec:pushforward-gradient}
\begin{lemma}\label{lem:GF-pushforward}
On $(\mathcal{S}_d,g^N_{\sigma})$ the gradient flow of $F_\beta$ in \eqref{eq:free_energy_sigma} has components
\begin{equation}
\dot\sigma_i = -\,N\,\sigma_i^{\,2-2/N}\,\partial_{\sigma_i} F_\beta(\sigma),
\qquad i=1,\dots,d.
\end{equation}
\end{lemma}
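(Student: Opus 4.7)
The plan is a direct application of the defining property of the Riemannian gradient: on $(\mathcal{S}_d,g^N_\sigma)$, the vector field $\grad_{g^N_\sigma} F_\beta$ is uniquely characterized by
\begin{equation*}
g^N_\sigma\!\bigl(\grad_{g^N_\sigma} F_\beta,\,Y\bigr) \;=\; dF_\beta(Y)
\end{equation*}
for every tangent vector $Y$. Because the pushforward metric \eqref{eq:chamber-metric} derived in Lemma~\ref{lem:SV-submersion} is diagonal in the coordinate basis $\{\partial_{\sigma_i}\}$, the gradient is immediate from the standard formula for a diagonal metric, and the flow equation follows by negating the gradient.

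Concretely, I would proceed as follows. First, read off from \eqref{eq:chamber-metric} that the metric tensor has matrix representation $(g^N_\sigma)_{ij} = \tfrac{1}{N}\,\sigma_i^{2/N-2}\,\delta_{ij}$, so the inverse has components $(g^N_\sigma)^{ij} = N\,\sigma_i^{2-2/N}\,\delta_{ij}$. Second, compute the differential of $F_\beta$ in the coordinate basis: $dF_\beta = \sum_i \partial_{\sigma_i} F_\beta\,d\sigma_i$. Third, raise the index with the inverse metric to get
\begin{equation*}
\grad_{g^N_\sigma} F_\beta \;=\; \sum_i (g^N_\sigma)^{ij}\,\partial_{\sigma_j} F_\beta\,\partial_{\sigma_i} \;=\; \sum_i N\,\sigma_i^{\,2-2/N}\,\partial_{\sigma_i} F_\beta\,\partial_{\sigma_i},
\end{equation*}
and verify the defining identity by pairing against an arbitrary $Y=\sum_j \dot\sigma_j\,\partial_{\sigma_j}$. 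The gradient flow $\dot\sigma = -\grad_{g^N_\sigma} F_\beta(\sigma)$ then reads component-wise as asserted.

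There is no real obstacle here beyond bookkeeping: the calculation uses only the definition of the Riemannian gradient and the diagonal form of $g^N_\sigma$. The one conceptual point worth noting is that the formula is derived on the open stratum $\mathcal{S}_d^\circ$ where Lemma~\ref{lem:SV-submersion} applies, but the right-hand side $-N\sigma_i^{2-2/N}\partial_{\sigma_i} F_\beta$ extends continuously to all of $\mathcal{S}_d$ because $F_\beta$ is smooth (indeed real-analytic, by Theorem~\ref{thm:analyticity}) on the closed chamber and the prefactor $\sigma_i^{2-2/N}$ is continuous for $\sigma_i>0$. This justifies interpreting \eqref{eq:flow} as the natural continuous extension of the gradient flow to the boundary strata where singular values collide.
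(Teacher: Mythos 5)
Your proposal is correct and follows essentially the same approach as the paper: both invoke the defining identity for the Riemannian gradient, read off the diagonal components $g_{ii}=\tfrac{1}{N}\sigma_i^{2/N-2}$ from Lemma~\ref{lem:SV-submersion}, invert to get $g^{ii}=N\sigma_i^{2-2/N}$, and raise the index on $dF_\beta$. Your added remark about continuous extension to $\mathcal{S}_d$ is a nice observation consistent with the remark following Lemma~\ref{lem:SV-submersion}, though the paper's proof itself does not elaborate on this.
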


Writing $\Sigma = \diag(\sigma_1,\dots,\sigma_d)$, the flow \eqref{eq:flow} can be written in matrix form as
\begin{equation}\label{eq:gradflow-matrix}
\dot\Sigma
=\ -\,N\,\Sigma^{\,2-2/N}\,\diag\!\Big(\partial_{\sigma_i} F_\beta(\sigma)\Big).
\end{equation}

\begin{proof}[Proof of Lemma \ref{lem:GF-pushforward}]
By definition of the gradient, for any $\xi\in T_\sigma\mathcal{S}_{d}\cong\mathbb{R}^d$,
\begin{equation}
g^N_{\sigma}\big(\mathrm{grad}_{g^N_{\sigma}} F_\beta,\xi\big)
= \mathrm{d}F_\beta(\sigma)[\xi]
= \sum_{i=1}^d \frac{\partial F_\beta}{\partial\sigma_i}\,\xi_i.    
\end{equation}
Since $g^N_{\sigma}$ is diagonal with $g_{ii}=\frac{1}{N}\sigma_i^{\,2/N-2}$, its inverse has
$g^{ii}=N\,\sigma_i^{\,2-2/N}$. Therefore
\begin{equation}
\mathrm{grad}_{g^N_{\sigma}} F_\beta
= \big(g^{ii}\,\partial_{\sigma_i}F_\beta\big)_{i=1}^d
= \big(N\,\sigma_i^{\,2-2/N}\,\partial_{\sigma_i}F_\beta\big)_{i=1}^d,    
\end{equation}
and the gradient flow $\dot\sigma=-\mathrm{grad}_{g^N_{\sigma}} F_\beta$ is as stated.
\end{proof}

The geometric structure of this flow is illustrated in the phase portraits of 
Figure~\ref{fig:d2d3_flows}, which visualize the trajectories 
of $\dot{\sigma}=-\grad_{g^N}F_\beta$ within $\mathcal{S}_{d}$ for $d=2$ and $d=3$.

\begin{figure}[h!]
    \centering
    \begin{subfigure}[t]{0.48\textwidth}
        \centering
        \includegraphics[width=\textwidth]{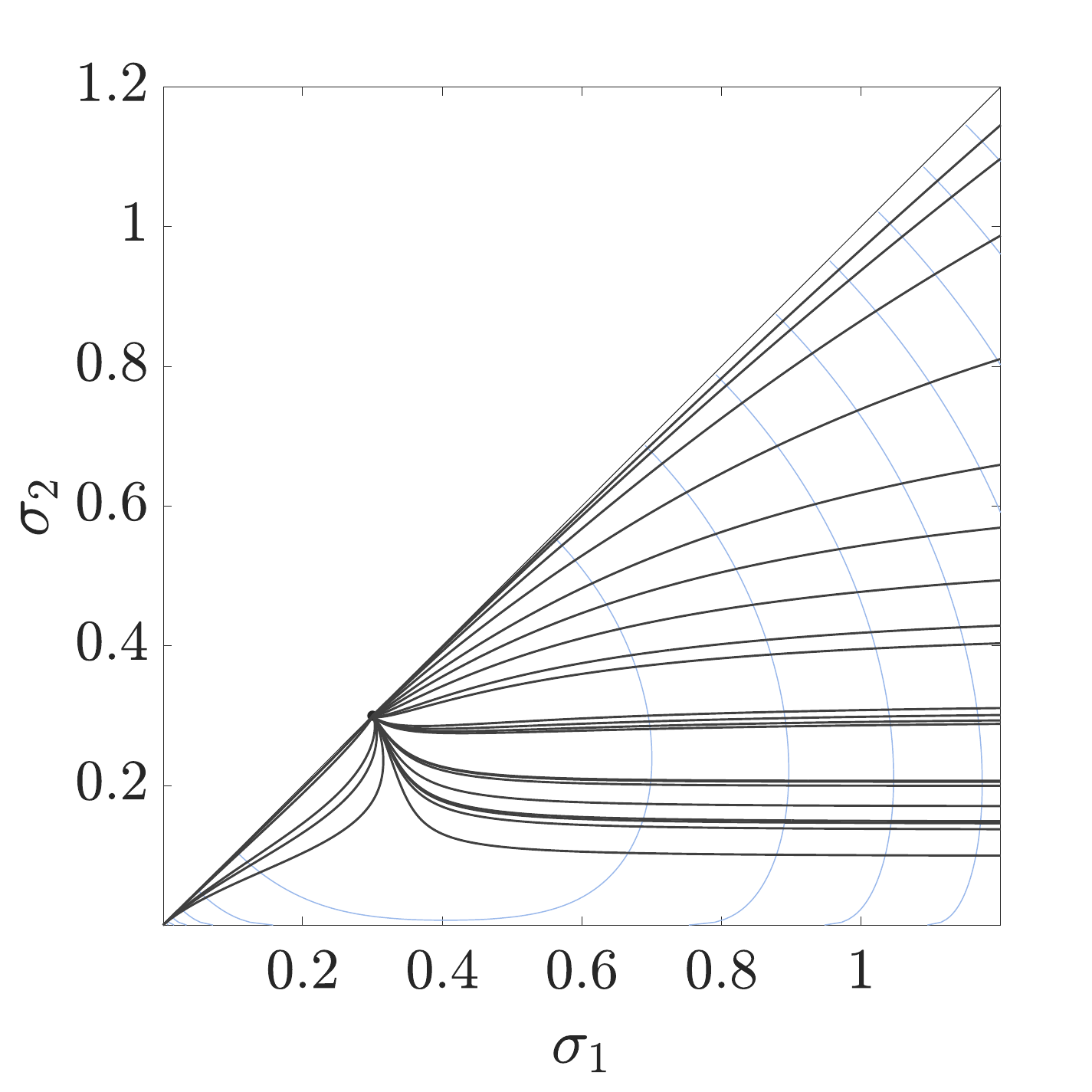}
        \caption{Gradient flow in the chamber $\sigma_1 > \sigma_2$ for $d=2$, 
        with $E(\sigma)=\tfrac{1}{p}\sum_i\sigma_i^p$. 
        Integral curves (black) are trajectories of $\dot{\sigma}=-\grad_{g^N}F_\beta$, 
        overlaid on level sets (blue) of $F_\beta(\sigma)$, converging to $\sigma_1 = \sigma_2$.}
        \label{fig:d2flow}
    \end{subfigure}
    \hfill
    \begin{subfigure}[t]{0.48\textwidth}
        \centering
        \includegraphics[width=\textwidth]{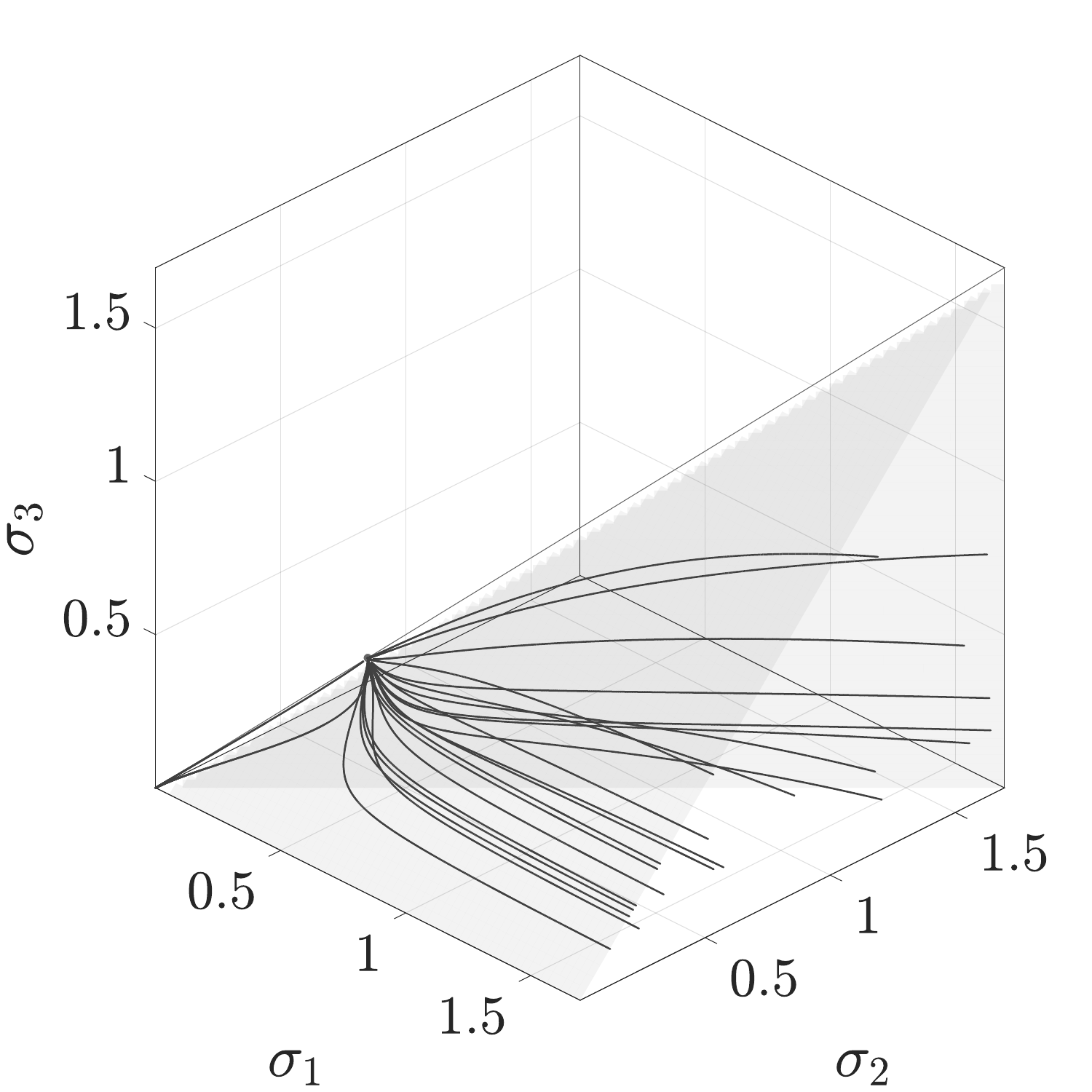}
        \caption{Gradient flow in the chamber $\sigma_1 > \sigma_2 > \sigma_3$ for $d=3$, 
        with $E(\sigma)=\tfrac{1}{p}\sum_i\sigma_i^p$. 
        Trajectories (black) evolve within the chamber bounded by 
        $\sigma_1=\sigma_2$ and $\sigma_2=\sigma_3$ (gray), 
        converging to $\sigma_1=\sigma_2=\sigma_3$.}
        \label{fig:d3flow}
    \end{subfigure}
    \caption{Phase portraits of the gradient flow 
    $\dot{\sigma}=-\grad_{g^N}F_\beta$, using the Schatten--$p$ energy $E(\sigma)=\tfrac{1}{p}\sum_i\sigma_i^p$, for $(N,p,\beta)=(10,2,5)$.}
    \label{fig:d2d3_flows}
\end{figure}

\section{Real-Analyticity of Entropy}\label{sec:analyticity}

\begin{proof}[Proof of Theorem~\ref{thm:analyticity}]
Write
\begin{equation}\label{eq:lambda-def}
\lambda_i = \sigma_i^{1/N}, \qquad i=1,\dots,d,
\end{equation}
so that $\lambda_i>0$ whenever $\sigma_i>0$.
In these variables the entropy has the representation
\begin{equation}\label{eq:SN-lambda}
S_N(\lambda)
= \widetilde{C}_N + \frac{1}{2}\sum_{1\le j<k\le d}
\log\!\left(\frac{\lambda_j^{2N}-\lambda_k^{2N}}{\lambda_j^{2}-\lambda_k^{2}}\right),
\end{equation}
where $\widetilde{C}_N$ depends only on $N$ and $d$.
Introduce
\begin{equation}\label{eq:PhiN-def}
\Phi_N(a,b) := \frac{a^{2N}-b^{2N}}{a^2-b^2}, \qquad a,b>0,
\end{equation}
so that \eqref{eq:SN-lambda} can be written as
\begin{equation}\label{eq:SN-lambda-Phi}
S_N(\lambda)
=\widetilde{C}_N+\frac{1}{2}\sum_{1\le j<k\le d}\log \Phi_N(\lambda_j,\lambda_k).
\end{equation}

The quotient in \eqref{eq:PhiN-def} satisfies the algebraic identity
\begin{equation}\label{eq:PhiN-poly}
\frac{a^{2N}-b^{2N}}{a^2-b^2}
=\sum_{m=0}^{N-1} a^{2(N-1-m)}\,b^{2m},
\end{equation}
valid for all $a,b\in\mathbb{R}$. Thus $\Phi_N$ is a polynomial in $(a,b)$ and
hence real-analytic on $\mathbb{R}^2$. In particular,
\begin{equation}\label{eq:PhiN-diag}
\Phi_N(a,a)=N\,a^{2N-2},
\end{equation}
so there is no singularity at $a=b$. 

For $a,b>0$, every term in \eqref{eq:PhiN-poly} is nonnegative and at least
one is strictly positive, so
\begin{equation}\label{eq:PhiN-positive}
\Phi_N(a,b)>0\qquad\text{for all }a,b>0.
\end{equation}
The logarithm is real-analytic on $(0,\infty)$, hence the map
\begin{equation}\label{eq:logPhi-analytic}
(a,b)\longmapsto \log\Phi_N(a,b)
\end{equation}
is real-analytic on $(0,\infty)^2$. Therefore each term
$\log\Phi_N(\lambda_j,\lambda_k)$ in \eqref{eq:SN-lambda-Phi} is
real-analytic on $(0,\infty)^d$, and finite sums preserve real-analyticity.
Thus $S_N(\lambda)$ is real-analytic for all $\lambda\in(0,\infty)^d$.

The change of variables $\sigma_i=\lambda_i^{N}$ is real-analytic on
$(0,\infty)^d$ in each coordinate. Since $\mathcal{S}_d\subset(0,\infty)^d$,
it follows that $S_N$ is real-analytic on $\mathcal{S}_d$.

Finally, the polynomial identity \eqref{eq:PhiN-poly} shows that $\Phi_N$ is
analytic at $a=b>0$, so the expression \eqref{eq:SN-lambda-Phi} extends
real-analytically to points where $\lambda_j=\lambda_k>0$. Via the change of
variables $\sigma_i=\lambda_i^N$, this gives a real-analytic extension of
$S_N$ across the sets $\sigma_i=\sigma_j>0$.
\end{proof}

\subsection{Gradient of the entropy}

\begin{lemma}\label{lem:grad-SN-sigma}
The gradient of $S_N$ has components
\begin{equation}\label{eq:gradSN-sigma}
\frac{\partial S_N}{\partial \sigma_i}
=\sum_{k\ne i}\left(
\frac{\sigma_i}{\sigma_i^2-\sigma_k^2}
-\frac{\sigma_i^{2/N-1}}{N\big(\sigma_i^{2/N}-\sigma_k^{2/N}\big)}\right).
\end{equation}
For each fixed $i$ and $k\ne i$ the summand has a finite limit as
$\sigma_k\to\sigma_i=\sigma_\star>0$, namely
\begin{equation}\label{eq:gradSN-sigma-limit}
\frac{\sigma_i}{\sigma_i^2-\sigma_k^2}
-\frac{\sigma_i^{2/N-1}}{N\big(\sigma_i^{2/N}-\sigma_k^{2/N}\big)}
\longrightarrow \frac{1}{2\sigma_\star}\Big(1-\frac{1}{N}\Big).
\end{equation}
\end{lemma}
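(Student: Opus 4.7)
The plan is in two steps: compute the gradient directly from the defining series, then extract the diagonal limit by a short Taylor expansion.

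For formula \eqref{eq:gradSN-sigma}, I would differentiate \eqref{eq:S_N_entropy} term by term. Only the pairs involving index $i$ contribute to $\partial_{\sigma_i}$, and from the pair $\{i,k\}$ we obtain
$\tfrac{1}{2}\bigl(\tfrac{2\sigma_i}{\sigma_i^2-\sigma_k^2}-\tfrac{(2/N)\sigma_i^{2/N-1}}{\sigma_i^{2/N}-\sigma_k^{2/N}}\bigr)$.
Summing over $k\neq i$ yields \eqref{eq:gradSN-sigma}. The expression is unambiguous whether $i<k$ or $k<i$, since swapping the two indices flips the signs of the numerator and denominator inside the logarithm simultaneously.

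For the diagonal limit \eqref{eq:gradSN-sigma-limit}, Theorem~\ref{thm:analyticity} already guarantees that $S_N$ extends real-analytically across the coincidence locus, so each summand in \eqref{eq:gradSN-sigma} must have a finite limit as $\sigma_k\to\sigma_i=\sigma_\star$. To evaluate it, I would set $\sigma_k=\sigma_\star+\varepsilon$, write $\alpha=2/N$, and expand each fraction to first order in $\varepsilon$. The first fraction becomes $-\tfrac{1}{2\varepsilon}+\tfrac{1}{4\sigma_\star}+O(\varepsilon)$. The second, using $N\alpha=2$ to match the pole, becomes $-\tfrac{1}{2\varepsilon}+\tfrac{\alpha-1}{4\sigma_\star}+O(\varepsilon)$. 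Subtracting, the $-1/(2\varepsilon)$ singularities cancel and the constant part is $\tfrac{2-\alpha}{4\sigma_\star}=\tfrac{1}{2\sigma_\star}(1-\tfrac{1}{N})$.

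The only delicate point is verifying that the two $-1/(2\varepsilon)$ singularities carry exactly the same coefficient, which rests on the identity $N\cdot(2/N)=2$ and on expanding both $\sigma^{2/N}$ and $\sigma^2$ at the same order. As a clean cross-check, one can pass to the analytic variables $\lambda_i=\sigma_i^{1/N}$ used in the proof of Theorem~\ref{thm:analyticity}. There $\log\Phi_N(\lambda_i,\lambda_k)$ is smooth at the diagonal, a direct computation gives $\partial_{\lambda_i}\log\Phi_N\big|_{\lambda_i=\lambda_k=\lambda}=(N-1)/\lambda$, and the chain rule $\partial_{\sigma_i}=\tfrac{1}{N}\sigma_i^{1/N-1}\partial_{\lambda_i}$ recovers the per-pair limit $\tfrac{1}{2\sigma_\star}(1-\tfrac{1}{N})$ without ever exhibiting the $1/\varepsilon$ poles.
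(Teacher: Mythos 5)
Your proposal is correct and follows essentially the same route as the paper: differentiate the entropy pairwise to obtain \eqref{eq:gradSN-sigma}, then extract the diagonal limit by a first‑order Taylor expansion in which the $-1/(2\varepsilon)$ poles cancel (the paper performs this expansion in the variables $\lambda_i=\sigma_i^{1/N}$, but your direct expansion in $\sigma$, and your $\lambda$-variable cross-check via the polynomial $\Phi_N$, are the same idea). One small caution on the cross-check: remember the overall factor $\tfrac12$ in front of the sum in $S_N$, so the per-pair contribution to $\partial_{\lambda_i}S_N$ is $\tfrac12\partial_{\lambda_i}\log\Phi_N|_{\text{diag}}=\tfrac{N-1}{2\lambda}$, which after the chain rule indeed gives $\tfrac{1}{2\sigma_\star}(1-\tfrac1N)$.
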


\begin{proof}
We start from the representation \eqref{eq:SN-lambda} in the variables
$\lambda_i=\sigma_i^{1/N}$,
\begin{equation}\label{eq:SN-lambda-again}
S_N(\lambda)
= \widetilde{C}_N + \frac{1}{2}\sum_{1\le j<k\le d}
\log\!\left(\frac{\lambda_j^{2N}-\lambda_k^{2N}}{\lambda_j^{2}-\lambda_k^{2}}\right),
\end{equation}
valid for $\lambda_i>0$. Differentiating \eqref{eq:SN-lambda-again} with respect
to $\lambda_i$ and noting that only pairs containing $i$ contribute gives
\begin{equation}\label{eq:gradSN-lambda}
\frac{\partial S_N}{\partial \lambda_i}
=\sum_{k\ne i}\left(
N\,\frac{\lambda_i^{2N-1}}{\lambda_i^{2N}-\lambda_k^{2N}}
-\frac{\lambda_i}{\lambda_i^2-\lambda_k^2}\right).
\end{equation}

The change of variables $\sigma_i=\lambda_i^N$ implies
\begin{equation}\label{eq:chain-rule}
\frac{\partial S_N}{\partial \sigma_i}
=\frac{1}{N\lambda_i^{N-1}}\frac{\partial S_N}{\partial \lambda_i}
=\lambda_i^{1-N}\frac{\partial S_N}{\partial \lambda_i}.
\end{equation}
Substituting \eqref{eq:gradSN-lambda} into \eqref{eq:chain-rule} and using
$\sigma_i=\lambda_i^N$ yields
\begin{align}\label{eq:gradSN-sigma-derivation}
\frac{\partial S_N}{\partial \sigma_i}
&=\sum_{k\ne i}\left(
\lambda_i^{1-N}\,N\,\frac{\lambda_i^{2N-1}}{\lambda_i^{2N}-\lambda_k^{2N}}
-\lambda_i^{1-N}\,\frac{\lambda_i}{\lambda_i^2-\lambda_k^2}\right)\\
&=\sum_{k\ne i}\left(
N\,\frac{\lambda_i^{N}}{\lambda_i^{2N}-\lambda_k^{2N}}
-\frac{\lambda_i^{2-N}}{\lambda_i^2-\lambda_k^2}\right).\nonumber
\end{align}
Replacing $\lambda_i^N$ and $\lambda_k^N$ by $\sigma_i$ and $\sigma_k$ in
\eqref{eq:gradSN-sigma-derivation} gives exactly \eqref{eq:gradSN-sigma}.

For the limit \eqref{eq:gradSN-sigma-limit}, set $\lambda_i=\lambda_\star$ and
$\lambda_k=\lambda_\star(1-\varepsilon)$ with $\varepsilon\downarrow 0$. Then
\begin{equation}\label{eq:binom-expansion}
(1-\varepsilon)^{2N}
=1-2N\varepsilon+N(2N-1)\varepsilon^2+O(\varepsilon^3).
\end{equation}
Using \eqref{eq:gradSN-lambda} and \eqref{eq:chain-rule} at
$\lambda_i=\lambda_\star$ and $\lambda_k=\lambda_\star(1-\varepsilon)$,
a direct expansion of each term gives
\begin{equation}\label{eq:pair-expansion}
N\,\frac{\lambda_i^{2N-1}}{\lambda_i^{2N}-\lambda_k^{2N}}
=\frac{1}{2\lambda_\star\varepsilon}+\frac{2N-1}{4\lambda_\star}
+O(\varepsilon),\qquad
\frac{\lambda_i}{\lambda_i^2-\lambda_k^2}
=\frac{1}{2\lambda_\star\varepsilon}+\frac{1}{4\lambda_\star}
+O(\varepsilon).
\end{equation}
Subtracting these expressions cancels the $1/\varepsilon$ term and yields
\begin{equation}\label{eq:lambda-pair-limit}
N\,\frac{\lambda_i^{2N-1}}{\lambda_i^{2N}-\lambda_k^{2N}}
-\frac{\lambda_i}{\lambda_i^2-\lambda_k^2}
\longrightarrow \frac{N-1}{2\lambda_\star}
\qquad (\varepsilon\downarrow 0).
\end{equation}
Since $\lambda_\star=\sigma_\star^{1/N}$, this is exactly
\eqref{eq:gradSN-sigma-limit} after rewriting in $\sigma_\star$.

Thus each summand in \eqref{eq:gradSN-sigma} has a finite limit as
$\sigma_k\to\sigma_i>0$, and the sum over $k\ne i$ extends continuously to
points with $\sigma_k=\sigma_i$. 
\end{proof}

For the renormalized entropy $S_\infty$ \eqref{eq:renormalized_entropy}, a similar argument gives
\begin{equation}\label{eq:gradSinf-sigma}
\frac{\partial S_\infty}{\partial \sigma_i}
=\sum_{k\ne i}\left(
\frac{\sigma_i}{\sigma_i^2-\sigma_k^2}
-\frac{\sigma_i^{-1}}{\log\sigma_i^2-\log\sigma_k^2}\right),
\end{equation}
and each summand has limit $\frac{1}{2\sigma_i}$ as $\sigma_k\to\sigma_i$.

\section{Proof of Theorem \ref{thm:concavity-gap}}\label{sec:euclid-concavity}

\subsection{Overview}
We work on the Riemannian manifold $(\mathcal{S}_d,\iota)$, where $\iota$ is the standard inner product on $\mathbb{R}^d$.
The Hessian of $S_N$ is written as a sum of $2\times2$ blocks, each depending only on a pair of singular values.
These blocks can be analyzed explicitly: they are negative definite for $N>2$ and rank‑one negative semidefinite for $N=2$.
Summing over all pairs yields Theorem~\ref{thm:concavity-gap}.

\subsection{Notation}
For a smooth $f:{\mathcal S}_{d}\to\mathbb{R}$ we write
\begin{equation}
(\nabla_\sigma f(\sigma))_i \;=\; \frac{\partial f}{\partial \sigma_i}(\sigma),\qquad
(\nabla^2_\sigma f(\sigma))_{ij} \;=\; \frac{\partial^2 f}{\partial \sigma_i\,\partial \sigma_j}(\sigma),
\end{equation}
so $\nabla_\sigma f$ and $\nabla^2_\sigma f$ are the gradient and Hessian in the coordinates $\sigma=(\sigma_1,\dots,\sigma_d)\in{\mathcal S}_{d}\subset\mathbb{R}^d$.

Definiteness is understood with respect to the standard inner product on $\mathbb{R}^d$. In particular,
\begin{equation}
v^T \nabla^2_\sigma f(\sigma)v \le 0\quad\text{for all }v\in\mathbb{R}^d
\end{equation}
means that $f$ is concave at $\sigma$ with respect to the Euclidean metric.

For a symmetric matrix $A$, we write
\begin{equation}
A\preceq 0 \quad\text{if $A$ is negative semidefinite}, 
\qquad A\prec 0 \quad\text{if $A$ is negative definite}.    
\end{equation}

\subsection{Hessian of the entropy}
We first record the Hessian in the $\sigma$–coordinates. In the next subsection it is expressed as a sum of $2\times2$ blocks.

\begin{lemma}\label{lem:hess-sigma}
For $S_N$ the second derivatives in the coordinates $\sigma$ are
\begin{equation}\label{eq:HessSN-sigma}
\frac{\partial^2 S_N}{\partial \sigma_i \partial \sigma_j } =
\begin{cases}
\displaystyle
\sum_{k\neq i}\left(
\frac{-\sigma_i^2 - \sigma_k^2}{(\sigma_i^2 - \sigma_k^2)^2}
+\frac{\sigma_i^{2/N - 2}\Big(N(\sigma_i^{2/N} - \sigma_k^{2/N}) + 2 \sigma_k^{2/N}\Big)}
{\Big(N(\sigma_i^{2/N} - \sigma_k^{2/N})\Big)^2}
\right), & i=j,\\[2ex]
\displaystyle
\frac{2\sigma_i\sigma_j}{(\sigma_i^2 - \sigma_j^2)^2}
-\frac{2 \sigma_i^{2/N - 1}\sigma_j^{2/N-1}}{\Big(N(\sigma_i^{2/N} - \sigma_j^{2/N})\Big)^2}, & i\neq j.
\end{cases}
\end{equation}
and $\nabla^2_\sigma S_N$ extends continuously to all of ${\mathcal{S}}_d$.
\end{lemma}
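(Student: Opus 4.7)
The plan is to obtain \eqref{eq:HessSN-sigma} by differentiating the gradient formula \eqref{eq:gradSN-sigma} of Lemma~\ref{lem:grad-SN-sigma} term by term, and then to establish the continuous extension either by invoking Theorem~\ref{thm:analyticity} or by a direct expansion.

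The off-diagonal case is the easier one. Since each summand of $\partial_{\sigma_i} S_N$ in \eqref{eq:gradSN-sigma} depends on $\sigma_i$ and exactly one other variable $\sigma_k$, only the $k=j$ term contributes to $\partial_{\sigma_j}\partial_{\sigma_i} S_N$ when $j\neq i$. Writing $\alpha=2/N$, two one-variable differentiations give $\partial_{\sigma_j}[\sigma_i/(\sigma_i^2-\sigma_j^2)] = 2\sigma_i\sigma_j/(\sigma_i^2-\sigma_j^2)^2$ and $\partial_{\sigma_j}[\sigma_i^{\alpha-1}/(N(\sigma_i^\alpha-\sigma_j^\alpha))] = (\alpha/N)\sigma_i^{\alpha-1}\sigma_j^{\alpha-1}/(\sigma_i^\alpha-\sigma_j^\alpha)^2$. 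The identity $\alpha/N = 2/N^2$ then produces the factor of $2$ and the $N^2$ in the denominator of the second term of \eqref{eq:HessSN-sigma}.

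For the diagonal entry, I differentiate each summand of \eqref{eq:gradSN-sigma} with respect to $\sigma_i$ and sum over $k\neq i$. The rational piece gives $(-\sigma_i^2-\sigma_k^2)/(\sigma_i^2-\sigma_k^2)^2$ immediately. Applying the quotient rule to $\sigma_i^{\alpha-1}/(\sigma_i^\alpha - \sigma_k^\alpha)$ and factoring $\sigma_i^{\alpha-2}$ out of the numerator produces $\sigma_i^\alpha + (\alpha-1)\sigma_k^\alpha$ inside the brackets. Using $N\alpha=2$, i.e.\ $N(\alpha-1) = 2-N$, rewrites this as $\tfrac{1}{N}\bigl[N(\sigma_i^\alpha-\sigma_k^\alpha)+2\sigma_k^\alpha\bigr]$, and combining with the $1/N$ already present from differentiating the prefactor delivers exactly the form stated in \eqref{eq:HessSN-sigma}. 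This algebraic collapse, enforced by $N\alpha = 2$, is the step that makes the diagonal formula close.

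For the continuity claim, the cleanest route is to invoke Theorem~\ref{thm:analyticity}: since $S_N$ is real-analytic on all of $\mathcal{S}_d$, its Hessian is real-analytic, hence continuous across the coincidence loci $\{\sigma_i=\sigma_j\}$. Alternatively, one can argue directly in the spirit of the expansion at the end of Lemma~\ref{lem:grad-SN-sigma}: setting $\sigma_k = \sigma_i(1-\varepsilon)$ and expanding $(1-\varepsilon)^{2N}$ and $(1-\varepsilon)^2$, each pairwise piece (diagonal or off-diagonal) has a Laurent tail of the form $c_{-2}\varepsilon^{-2}+c_{-1}\varepsilon^{-1}+O(1)$, and one checks that $c_{-2}$ and $c_{-1}$ match and cancel between the two subtracted terms, leaving a finite limit. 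The main obstacle is the algebraic bookkeeping in the diagonal calculation, where the identity $N\alpha=2$ must be invoked at precisely the right step; the continuity statement itself is essentially a corollary of Theorem~\ref{thm:analyticity}.
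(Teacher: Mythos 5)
Your computation coincides with the paper's own proof: you differentiate the gradient formula \eqref{eq:gradSN-sigma} of Lemma~\ref{lem:grad-SN-sigma} term by term, noting that for $j\neq i$ only the $k=j$ summand survives, and the algebra is correct --- in particular, the identity $N\alpha=2$ with $\alpha=2/N$ is indeed the step that collapses the diagonal numerator to $N(\sigma_i^{2/N}-\sigma_k^{2/N})+2\sigma_k^{2/N}$ and converts the factor $\alpha/N$ into $2/N^2$ in the off-diagonal entry. The only point of divergence is the justification of the continuity claim. The paper forward-references Lemma~\ref{lem:signs} (stated just afterwards), identifying each off-diagonal entry with $p_N(\sigma_i,\sigma_j)$ and each diagonal summand with $q_N(\sigma_i,\sigma_k)$ and citing the finite limits of those kernels as $\sigma_k\to\sigma_i$. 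You instead invoke Theorem~\ref{thm:analyticity}, which has already been proved at this point in the paper: since $S_N$ is real-analytic on $\mathcal{S}_d$, its second derivatives are continuous and must coincide with the boundary limit of the formula \eqref{eq:HessSN-sigma}, which is valid on $\mathcal{S}_d^\circ$. Both routes are valid. Yours is cleaner with respect to the logical ordering of the paper (no forward reference); the paper's choice has the practical advantage that the explicit limiting values $p_\star$ and $q_\star$ are computed and needed immediately afterwards for Theorem~\ref{thm:concavity-gap}, so identifying the Hessian entries with $p_N,q_N$ here sets up that proof. Your sketched alternative via the Laurent expansion at $\sigma_k=\sigma_i(1-\varepsilon)$ also works and mirrors the expansion technique the paper itself uses in the proof of Lemma~\ref{lem:grad-SN-sigma} for the gradient.
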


\begin{proof}
We start from the expression for the gradient in $\sigma$–coordinates (Lemma~\ref{lem:grad-SN-sigma}),
\begin{equation}\label{eq:gradSN-sigma-recall}
\frac{\partial S_N}{\partial \sigma_i}
=\sum_{k\ne i}\left(
\frac{\sigma_i}{\sigma_i^2-\sigma_k^2}
-\frac{\sigma_i^{2/N-1}}{N(\sigma_i^{2/N}-\sigma_k^{2/N})}\right).
\end{equation}
Differentiating the $k$–th summand in $\sigma_j$ for $j\neq i$ gives the off–diagonal entries,
\begin{equation}
\frac{\partial^2 S_N}{\partial \sigma_i\,\partial \sigma_j}
=\frac{2\sigma_i\sigma_j}{(\sigma_i^2-\sigma_j^2)^2}
-\frac{2\,\sigma_i^{2/N-1}\sigma_j^{2/N-1}}
{\big(N(\sigma_i^{2/N}-\sigma_j^{2/N})\big)^2},
\qquad i\neq j,
\end{equation}
and differentiating in $\sigma_i$ and summing over $k\neq i$ gives the diagonal entries,
\begin{equation}
\frac{\partial^2 S_N}{\partial \sigma_i^2}
=\sum_{k\ne i}\left(
\frac{-\sigma_i^2-\sigma_k^2}{(\sigma_i^2-\sigma_k^2)^2}
+\frac{\sigma_i^{2/N-2}\Big(N(\sigma_i^{2/N}-\sigma_k^{2/N})+2\sigma_k^{2/N}\Big)}
{\big(N(\sigma_i^{2/N}-\sigma_k^{2/N})\big)^2}\right),
\end{equation}
which is exactly \eqref{eq:HessSN-sigma}.

Each off–diagonal entry is $p_N(\sigma_i,\sigma_j)$ and each diagonal summand is $q_N(\sigma_i,\sigma_k)$ in the notation of \eqref{eq:pN-def}–\eqref{eq:qN-def} below. Lemma~\ref{lem:signs} shows that $p_N$ and $q_N$ have finite limits as $\sigma_k\to\sigma_i>0$, so all entries extend continuously to ${\mathcal{S}}_d$.
\end{proof}

For the renormalized entropy $S_\infty$ \eqref{eq:renormalized_entropy}, differentiating the gradient in $\sigma$–coordinates gives
\begin{equation}\label{eq:HessSInf-sigma}
\frac{\partial^2 S_\infty}{\partial \sigma_i \partial\sigma_j} =
\begin{cases}
\displaystyle
\sum_{k\neq i}\left(
\frac{-\sigma_i^2 - \sigma_k^2}{(\sigma_i^2 - \sigma_k^2)^2}
+\frac{\sigma_i^{-2}\big(\log (\sigma_i / \sigma_k) +1\big)}
{2\big(\log (\sigma_i / \sigma_k)\big)^2}
\right), & i=j,\\[2ex]
\displaystyle
\frac{2\sigma_i\sigma_j}{(\sigma_i^2-\sigma_j^2)^2}
-\frac{\sigma_i^{-1}\sigma_j^{-1}}{2 \big(\log (\sigma_i / \sigma_j)\big)^2}, & i\neq j,
\end{cases}
\end{equation}
and each summand again has a finite limit as $\sigma_j\to\sigma_i>0$, so $\nabla^2_\sigma S_\infty$ also extends continuously to ${\mathcal{S}}_d$.

\subsection{Block decomposition}

We now express the Hessian of $S_N$ as a sum of embedded $2\times2$ blocks, each depending only on a pair of singular values.  
Define the kernels
\begin{equation}\label{eq:pN-def}
p_N(a,b) := \frac{2ab}{(a^2-b^2)^2}
            - \frac{2\,a^{\frac{2}{N}-1} b^{\frac{2}{N}-1}}
                   { \big( N(a^{\frac{2}{N}}-b^{\frac{2}{N}} ) \big)^2},
\end{equation}
\begin{equation}\label{eq:qN-def}
q_N(a,b) := -\frac{a^2+b^2}{(a^2-b^2)^2}
            + \frac{a^{\frac{2}{N}-2} \big( N(a^{\frac{2}{N}}-b^{\frac{2}{N}}) 
            + 2 b^{\frac{2}{N}} \big)}
                   { \big( N(a^{\frac{2}{N}}-b^{\frac{2}{N}} ) \big)^2},
\end{equation}
and for $1\le i<j\le d$ let
\begin{equation}\label{eq:iota-ij}
\iota_{ij}:\mathbb{R}^2\hookrightarrow\mathbb{R}^d,\qquad
\iota_{ij}(u,v)=u\,e_i+v\,e_j,
\end{equation}
with
\begin{equation}\label{eq:BN-block}
B_N^{(ij)}(\sigma)=
\begin{pmatrix}
q_N(\sigma_i,\sigma_j) & p_N(\sigma_i,\sigma_j) \\
p_N(\sigma_i,\sigma_j) & q_N(\sigma_j,\sigma_i)
\end{pmatrix}.
\end{equation}

\begin{lemma}\label{lem:block-sigma}
For every $\sigma\in\mathcal S_d$,
\begin{equation}\label{eq:block-sum-sigma}
\nabla^2_\sigma S_N(\sigma)
=\sum_{1\le i<j\le d}\iota_{ij}\,B_N^{(ij)}(\sigma)\,\iota_{ij}^T.
\end{equation}
\end{lemma}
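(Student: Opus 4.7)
The identity \eqref{eq:block-sum-sigma} is essentially a bookkeeping check: each embedded block $\iota_{ij} B_N^{(ij)}(\sigma)\iota_{ij}^T$ vanishes outside the rows and columns indexed by $\{i,j\}$, so the plan is simply to read off each entry of the sum and match it against the explicit formulas in Lemma~\ref{lem:hess-sigma}. I would first note that $\iota_{ij} B_N^{(ij)}(\sigma)\iota_{ij}^T$ is the $d\times d$ matrix whose only nonzero entries lie in the principal $2\times 2$ submatrix indexed by $(i,j)$, where they are $q_N(\sigma_i,\sigma_j)$ in position $(i,i)$, $q_N(\sigma_j,\sigma_i)$ in position $(j,j)$, and $p_N(\sigma_i,\sigma_j)$ in the two off-diagonal positions.

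\textbf{Matching entries.} For the off-diagonal entry $(i,j)$ with $i\ne j$, exactly one pair contributes, namely $(\min(i,j),\max(i,j))$, producing the term $p_N(\sigma_{\min},\sigma_{\max})$. A quick inspection of the kernel \eqref{eq:pN-def} shows that $p_N(a,b)=p_N(b,a)$ since both $ab$ and the squared denominators are symmetric in $(a,b)$, and the factor $a^{2/N-1}b^{2/N-1}$ is also symmetric. Hence the $(i,j)$ entry of the sum equals $p_N(\sigma_i,\sigma_j)$, matching the off-diagonal case of \eqref{eq:HessSN-sigma}. For the diagonal entry at index $i$, every pair containing $i$ contributes: pairs $(i,j)$ with $j>i$ contribute $q_N(\sigma_i,\sigma_j)$ (from the $(1,1)$ slot of the block), and pairs $(k,i)$ with $k<i$ contribute $q_N(\sigma_i,\sigma_k)$ (from the $(2,2)$ slot). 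Together these cover every $k\ne i$ exactly once, yielding $\sum_{k\ne i} q_N(\sigma_i,\sigma_k)$, which is precisely the diagonal expression in \eqref{eq:HessSN-sigma}.

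\textbf{Extension to the boundary of the chamber.} Both sides of \eqref{eq:block-sum-sigma} are defined and continuous on $\mathcal{S}_d$: the left-hand side by the last assertion of Lemma~\ref{lem:hess-sigma}, and each block on the right by the fact (recorded in the upcoming Lemma~\ref{lem:signs}) that $p_N(a,b)$ and $q_N(a,b)$ extend to finite limits as $a\to b>0$. Since the identity already holds pointwise on the open chamber $\mathcal{S}_d^\circ$ where all singular values are distinct, continuity extends it to all of $\mathcal{S}_d$. The main, and essentially only, obstacle to be careful about is the symmetry check $p_N(a,b)=p_N(b,a)$, without which the block $B_N^{(ij)}$ would fail to produce equal entries in its two off-diagonal positions.
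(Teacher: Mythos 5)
Your proposal is correct and follows essentially the same route as the paper: read off the entries of $\nabla^2_\sigma S_N$ from Lemma~\ref{lem:hess-sigma}, identify off-diagonal entries with $p_N$ and diagonal sums with $q_N$, and observe that the embedded $2\times 2$ blocks reassemble into exactly those entries. The symmetry check $p_N(a,b)=p_N(b,a)$ that you flag is a valid observation (and guaranteed anyway since the mixed partials are manifestly symmetric in $(i,j)$), and your continuity argument for extending the identity across coincident singular values aligns with the final sentence of Lemma~\ref{lem:hess-sigma}.
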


Equivalently,
\begin{equation}
(\nabla^2_\sigma S_N)_{ij}=p_N(\sigma_i,\sigma_j)\ (i\neq j),
\qquad
(\nabla^2_\sigma S_N)_{ii}=\sum_{k\neq i}q_N(\sigma_i,\sigma_k).
\end{equation}

To study each block $B_N^{(ij)}$, we rewrite $p_N$ and $q_N$ in terms of the single ratio
$r=\lambda_i/\lambda_j>1$, where $\lambda_\ell=\sigma_\ell^{1/N}$.

\begin{lemma}\label{lem:signs}
For $i<j$ and $r=\lambda_i/\lambda_j>1$,
\begin{equation}\label{eq:pN-r}
p_N(\sigma_i,\sigma_j)
=\frac{1}{\sigma_j^2}
  \left(\frac{2 r^N}{(r^{2N}-1)^2}
        -\frac{2}{N^2}\frac{r^{2-N}}{(r^2-1)^2}\right),
\end{equation}
\begin{equation}\label{eq:qN-r}
q_N(\sigma_i,\sigma_j)
=\frac{1}{\sigma_j^2}
  \left(-\frac{r^{2N}+1}{(r^{2N}-1)^2}
        +\frac{r^{2-2N}}{N^2}\frac{N(r^2-1)+2}{(r^2-1)^2}\right),
\end{equation}
and in particular $p_N(\sigma_i,\sigma_j)<0$ and $q_N(\sigma_i,\sigma_j)<0$.
As $r\downarrow 1$ (equivalently $\sigma_i\to\sigma_j=\sigma$),
\begin{equation}\label{eq:limits-pair}
p_N(\sigma_i,\sigma_j)\to -\frac{1}{6\sigma^2}\Big(1-\frac{1}{N^2}\Big),\qquad
q_N(\sigma_i,\sigma_j)\to -\frac{1}{3\sigma^2}\Big(1-\frac{3}{2N}+\frac{1}{2N^2}\Big).
\end{equation}
\end{lemma}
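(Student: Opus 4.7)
The plan is to handle the three claims of the lemma in turn: derive the $r$-parametrized forms, establish the signs $p_N<0$ and $q_N<0$, and compute the diagonal limits.

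First I would obtain \eqref{eq:pN-r} and \eqref{eq:qN-r} by substituting $\sigma_i=r^N\sigma_j$ into \eqref{eq:pN-def}--\eqref{eq:qN-def}: with $\sigma_i^2-\sigma_j^2=\sigma_j^2(r^{2N}-1)$ and $\sigma_i^{2/N}-\sigma_j^{2/N}=\sigma_j^{2/N}(r^2-1)$, the remaining powers of $\sigma_i,\sigma_j$ collapse into an overall $\sigma_j^{-2}$ multiplying a function of $r$ alone, which is routine. The sign $p_N<0$ then reduces, after clearing positive denominators and taking a square root, to $Nr^{N-1}<\sum_{m=0}^{N-1} r^{2m}$, which follows from AM--GM applied to the $N$ positive terms $\{r^{2m}\}_{m=0}^{N-1}$ with geometric mean $r^{N-1}$ (strictly, since the terms are distinct for $r>1$).

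For $q_N<0$ I would set $s=r^2$, $T(s)=\sum_{m=0}^{N-1} s^m$, and use $s^N-1=(s-1)T$ together with the identity $Ns^{N-1}=T+(s-1)T'$ (obtained by differentiating the previous relation) to rewrite $q_N<0$ as positivity on $(1,\infty)$ of the polynomial
\begin{equation*}
h(s)=N^2(s-1)^2 s^{N-1}(s^N+1)-(N(s-1)+2)(s^N-1)^2.
\end{equation*}
Substituting $(s^N-1)^2=(s-1)^2 T^2$ and expanding $s^N+1=(s-1)T+2$ yields $h(s)=(s-1)^2\bigl[N(s-1)^2\,T T'-2(T^2-N^2 s^{N-1})\bigr]$. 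The key further observation is the identity
\begin{equation*}
T(s)^2-N^2 s^{N-1}=(s-1)^2\sum_{j=1}^{N-1}(N-j)\,s^{N-1-j}\,T_j(s)^2,\qquad T_j(s)=\sum_{m=0}^{j-1}s^m,
\end{equation*}
obtained by pairing $(m,n)$ with $(N-1-m,N-1-n)$ in the double-sum expansion of $T(s)^2$ and using $(s^{j/2}-s^{-j/2})^2=s^{-j}(s-1)^2 T_j(s)^2$. This gives $h(s)=(s-1)^4 k(s)$ with $k(s)=N\,T(s)T'(s)-2\sum_{j=1}^{N-1}(N-j)\,s^{N-1-j}\,T_j(s)^2$, a polynomial of degree $2N-3$; explicit small-$N$ cases are $k_2=2s$, $k_3=6s^3+7s^2+s+1$, and $k_4=12s^5+18s^4+16s^3+4s^2+4s+2$, all with positive coefficients. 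I would then prove that $k(s)$ has nonnegative coefficients for every $N\geq 2$, either by expanding the two pieces and comparing coefficients term by term, or by induction on $N$ that tracks the incremental contribution of the $j=N-1$ summand; either route yields $k(s)>0$ for $s>0$ and hence $q_N<0$.

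Finally, the limits \eqref{eq:limits-pair} follow by setting $r=1+\varepsilon$ and expanding each factor in \eqref{eq:pN-r} and \eqref{eq:qN-r} to second order in $\varepsilon$: the leading $\varepsilon^{-2}$ and $\varepsilon^{-1}$ singularities cancel pairwise between the two subtracted pieces, leaving the stated constants after division by $\sigma_j^2\to\sigma^2$. The main obstacle is the coefficient-positivity of $k(s)$ for general $N$: the small-$N$ cases and the Taylor expansion of $h$ at $s=1$ (which matches to order $4$) correctly signal the sign, but extending this to a uniform bound for all $s>1$ is the delicate combinatorial step on which $q_N<0$ ultimately rests.
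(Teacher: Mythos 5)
Your derivation of the $r$-parametrized forms~\eqref{eq:pN-r}--\eqref{eq:qN-r} and your treatment of $p_N<0$ are both correct, and the AM--GM argument (geometric mean $r^{N-1}$ of the $N$ terms $\{r^{2m}\}_{m=0}^{N-1}$ beats the arithmetic mean strictly for $r>1$) is a clean elementary substitute for the paper's route, which instead writes $r=e^t$, converts to hyperbolic functions, and invokes $\sinh(Nt)>N\sinh t$. Those two inequalities are equivalent, so for $p_N$ you have a complete proof by a genuinely different path. The Taylor expansion at $r=1$ for the limits~\eqref{eq:limits-pair} also matches the paper's strategy (the paper expands in $t=\log r$, you in $\varepsilon=r-1$; both are fine once the leading singular orders are checked to cancel).

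However, there is a genuine gap in the step you yourself flag: the sign $q_N(\sigma_i,\sigma_j)<0$. Your reduction $h(s)=(s-1)^4 k(s)$ with
\begin{equation*}
k(s)=N\,T(s)T'(s)-2\sum_{j=1}^{N-1}(N-j)\,s^{N-1-j}\,T_j(s)^2
\end{equation*}
is verified only for $N=2,3,4$; the claim that $k(s)$ has nonnegative coefficients for all $N$ is stated as a plan (``either by expanding\ldots or by induction\ldots''), not proved, and your closing paragraph explicitly concedes that this ``delicate combinatorial step'' is open. That is precisely the load-bearing assertion: without it the sign of $q_N$ is unestablished, and the lemma's conclusion $B_N^{(ij)}\prec 0$ (and hence Theorem~\ref{thm:concavity-gap}) does not follow. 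The paper avoids the coefficient problem entirely by passing to the hyperbolic variable $t$, writing
\begin{equation*}
q_N(\sigma_i,\sigma_j)
=\frac{e^{-Nt}}{2\,\sigma_j^2}\left(
-\frac{\cosh(Nt)}{\sinh^2(Nt)}
+\frac{e^{-Nt}}{N^2}\cdot\frac{N e^{t}\sinh t+1}{\sinh^2 t}\right),
\end{equation*}
and comparing the two terms using $\sinh(Nt)>N\sinh t$ and $\cosh(Nt)\ge 1$. To make your proposal a complete proof you must either supply the coefficient-positivity argument for $k(s)$ for all $N\ge 2$, or replace that branch by the hyperbolic comparison.
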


Since the entries of $B_N^{(ij)}$ are negative, we next determine when each block is negative definite.

\begin{lemma}\label{lem:block-def}
For $i<j$:
\begin{enumerate}
\item If $N=2$ and $\sigma_i\neq\sigma_j$, then
\begin{equation}\label{eq:N2-rank1-correct}
B_2^{(ij)}(\sigma)
=-\frac{1}{2(\sigma_i+\sigma_j)^2}
 \begin{pmatrix}1&1\\[0.3ex]1&1\end{pmatrix},
\end{equation}
and $B_2^{(ij)}$ is rank-one negative semidefinite.
\item If $N>2$ and $\sigma_i\neq\sigma_j$, then $B_N^{(ij)}(\sigma)\prec 0$.
\end{enumerate}
\end{lemma}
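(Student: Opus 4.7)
For part~(i), I would substitute $N=2$ into \eqref{eq:pN-def}--\eqref{eq:qN-def}. Since $\tfrac{2}{N}-1=0$ and $a^{2/N}-b^{2/N}=a-b$, factoring out $(a-b)^{-2}$ and combining the resulting rational expressions over the common denominator $2(a+b)^2$ yields $p_2(a,b)=q_2(a,b)=q_2(b,a)=-\tfrac{1}{2(a+b)^2}$. Equivalently, $S_2^{(ij)}=\tfrac12\log(\sigma_i+\sigma_j)$ is the logarithm of an affine function, whose Hessian is manifestly $-\tfrac{1}{2(\sigma_i+\sigma_j)^2}\mathbf{1}\mathbf{1}^{T}$, rank-one negative semidefinite.

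For part~(ii), a symmetric $2\times2$ matrix is negative definite iff its trace is strictly negative and its determinant is strictly positive. Trace negativity follows from Lemma~\ref{lem:signs}, which gives $q_N(\sigma_i,\sigma_j),\,q_N(\sigma_j,\sigma_i)<0$. The substantive task is to prove $\det B_N^{(ij)}(\sigma)>0$ whenever $\sigma_i\ne\sigma_j$ and $N>2$.

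To handle the determinant, my plan is to reduce it to a single-variable inequality via logarithmic coordinates $\alpha=\log\sigma_i$, $\beta=\log\sigma_j$, with $h=\alpha-\beta>0$ and $m=(\alpha+\beta)/2$. The identities $\sigma_i^2-\sigma_j^2=2e^{2m}\sinh h$ and $\sigma_i^{2/N}-\sigma_j^{2/N}=2e^{2m/N}\sinh(h/N)$ collapse the pair entropy to
\[
S_N^{(ij)}=c\,m+F(h)+\text{const},\qquad c:=\tfrac{N-1}{N},\quad F(h):=\tfrac{1}{2}\log\tfrac{\sinh h}{\sinh(h/N)}.
\]
The chain-rule identity $\sigma_i^2\partial_{\sigma_i}^2 f=\partial_\alpha^2 f-\partial_\alpha f$ (and its analogues) then gives $\sigma_i^2 q_N(\sigma_i,\sigma_j)=F''(h)-F'(h)-c/2$, $\sigma_j^2q_N(\sigma_j,\sigma_i)=F''(h)+F'(h)-c/2$, and $\sigma_i\sigma_j\,p_N(\sigma_i,\sigma_j)=-F''(h)$. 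Consequently
\[
\sigma_i^2\sigma_j^2\det B_N^{(ij)}=(F''(h)-c/2)^2-F'(h)^2-F''(h)^2=\tfrac{c^2}{4}-F'(h)^2-c\,F''(h)=:G(h),
\]
and the remaining task is to show $G(h)>0$ for all $h>0$ when $N>2$.

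The boundary analysis already pinpoints the threshold and the decay rate. Taylor expansion at $h=0$ gives $G(0^{+})=\tfrac{(N-1)^2(N-2)}{12N^3}$, strictly positive exactly when $N>2$ and identically zero when $N=2$ (mirroring the rank-one degeneracy of part~(i)). At infinity, $F'(\infty)=c/2$ and $F''(\infty)=0$ give $G(\infty)=0$ with asymptotics $G(h)\sim\tfrac{(N-1)(N-2)}{N^3}\,e^{-2h/N}$, again positive for $N>2$. The main obstacle is interior positivity: the boundary data alone do not rule out a dip at intermediate $h$. The plan is to establish $G>0$ on all of $(0,\infty)$ by exploiting the Chebyshev-type factorization $\sinh h=2^{N-1}\sinh(h/N)\prod_{k=1}^{N-1}\bigl(\cosh(h/N)-\cos(\pi k/N)\bigr)$ to rewrite $F$ as a sum of $N-1$ explicit functions of $\cosh(h/N)$, so that $F'$, $F''$, and ultimately $G$ become rational combinations of $\cosh(h/N)$ and $\sin^2(\pi k/N)$ amenable to a direct positivity argument; failing that, to prove $G'(h)\le0$ on $(0,\infty)$, which reduces to an inequality in $\sinh$ and $\coth$ at scales $h$ and $h/N$ controlled by the standard estimate $N\sinh(h/N)<\sinh h$ for $N>1$.
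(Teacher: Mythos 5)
Part~(i) is correct and essentially matches the paper; the observation that the pair entropy collapses to $\tfrac12\log(\sigma_i+\sigma_j)$ up to a constant when $N=2$ is an elegant shortcut that makes the rank--one degeneracy immediate.

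Part~(ii) is where the gap is. The trace argument is fine, and your reduction is correct as far as it goes: the log--coordinates $(m,h)$, the splitting $S_N^{(ij)}=cm+F(h)+\mathrm{const}$ with $c=1-1/N$ and $F(h)=\tfrac12\log\bigl(\sinh h/\sinh(h/N)\bigr)$, the chain--rule identities, and the resulting formula $\sigma_i^2\sigma_j^2\det B_N^{(ij)}=G(h):=\tfrac{c^2}{4}-F'(h)^2-cF''(h)$ are all right, as are the endpoint values $G(0^+)=\tfrac{(N-1)^2(N-2)}{12N^3}$ and $G(\infty)=0^+$. This reformulation is in fact cleaner and more symmetric than the paper's normalization $\Delta_N(r)=\sigma_j^4\det B_N^{(ij)}$. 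But the decisive step --- interior positivity $G(h)>0$ on all of $(0,\infty)$ --- is not established. You offer two plans (a Chebyshev--type factorization of $\sinh$, or proving $G'\le 0$) and carry out neither, and neither is a routine verification; the boundary data alone do not exclude a dip to negative values at intermediate $h$. As it stands, part~(ii) is incomplete.

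It is worth noting that this is precisely the step the paper itself treats tersely: it asserts that $r\mapsto\Delta_N(r)$ is strictly increasing on $(1,\infty)$ and concludes from $\Delta_N(1)>0$. Since $\Delta_N(r)=e^{-2h}G(h)$ with $h=N\log r$, your asymptotics $G(\infty)=0$ give $\Delta_N(\infty)=0$, which is inconsistent with strict increase from a positive value --- so that assertion cannot be correct as stated. Your alternative claim $G'(h)\le 0$ is the monotonicity statement that \emph{is} compatible with the endpoint data ($G$ decreasing from $G(0^+)>0$ to $G(\infty)=0$), so it is the more promising route; but it still has to be proved, for instance by expanding $G'=-cF'''-2F'F''$ in terms of $\coth$ and $\operatorname{csch}^2$ at scales $h$ and $h/N$ and verifying the resulting inequality using $N\sinh(h/N)<\sinh h$.
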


We now deduce the definiteness of the full Hessian from the blocks.

\begin{lemma}\label{lem:criterion}
Let
\begin{equation}
A=\sum_{1\le i<j\le d}\iota_{ij}B^{(ij)}\iota_{ij}^T
\end{equation}
with each $B^{(ij)}$ symmetric. Then:
\begin{enumerate}
\item If $B^{(ij)}\prec 0$ for all $i<j$, then $A\prec 0$.
\item If each $B^{(ij)}=-\gamma_{ij}vv^T$ with $\gamma_{ij}>0$ and $v=(1,1)^T$, then $A\preceq 0$, with strict negativity when $d\ge3$ and rank one when $d=2$.
\end{enumerate}
\end{lemma}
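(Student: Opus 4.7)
\textbf{Plan for the proof of Lemma~\ref{lem:criterion}.}
The plan is to pass to the quadratic form $w^T A w$ for an arbitrary $w\in\mathbb{R}^d$ and use that $\iota_{ij}^T w = (w_i,w_j)^T$, so that
\begin{equation*}
w^T A w \;=\; \sum_{1\le i<j\le d}\,(w_i,w_j)\,B^{(ij)}\,(w_i,w_j)^T.
\end{equation*}
Both parts then reduce to checking this scalar inequality term by term and analyzing the equality cases.

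For part (1), each summand is nonpositive because $B^{(ij)}\prec 0$, and strictly negative unless $(w_i,w_j)=0$. Given $w\ne 0$, pick any index $i$ with $w_i\ne 0$ and any $j\ne i$ (assuming $d\ge 2$, which is implicit since otherwise no blocks appear); then $(w_i,w_j)\ne 0$ and the corresponding term is strictly negative, forcing $w^T A w<0$. Hence $A\prec 0$.

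For part (2), substituting $B^{(ij)}=-\gamma_{ij}vv^T$ with $v=(1,1)^T$ gives $\iota_{ij}B^{(ij)}\iota_{ij}^T = -\gamma_{ij}(e_i+e_j)(e_i+e_j)^T$, so
\begin{equation*}
w^T A w \;=\; -\sum_{1\le i<j\le d}\gamma_{ij}\,(w_i+w_j)^2\;\le\;0,
\end{equation*}
which shows $A\preceq 0$. Equality requires $w_i+w_j=0$ for all $i<j$. If $d\ge 3$, pick any three distinct indices $i,j,k$ and combine $w_i+w_j=0$, $w_i+w_k=0$, $w_j+w_k=0$ to conclude $w_i=w_j=w_k=0$; running this over all triples shows $w=0$, so $A\prec 0$. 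If $d=2$, there is a single block and $A = -\gamma_{12}(e_1+e_2)(e_1+e_2)^T$ is manifestly rank one with kernel $\mathrm{span}\{e_1-e_2\}$.

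There is no real obstacle: the argument is linear algebra once the quadratic form is expanded. The only point requiring care is the equality analysis in part~(2) for $d\ge 3$, where one must verify that the system $\{w_i+w_j=0\}_{i<j}$ forces $w=0$; this is immediate from any three-element subsystem as above.
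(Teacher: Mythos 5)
Your proof is correct and follows essentially the same route as the paper: expand the quadratic form via $\iota_{ij}^T w = (w_i,w_j)^T$, handle part (1) by picking an index with nonzero component, and handle part (2) by reducing equality to the linear system $\{w_i+w_j=0\}_{i<j}$ and checking it has trivial kernel for $d\ge 3$ and a one-dimensional kernel for $d=2$. The only superficial difference is in how you extract $w=0$ from the three-index subsystem, which the paper does via a slightly different sequence of substitutions to the same end.
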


\begin{remark}
The decomposition 
\(
A=\sum_{i<j} \iota_{ij} B^{(ij)} \iota_{ij}^{T}
\)
reduces negativity of \(A\) to negativity of its \(2\times 2\) blocks.
Since the cone \(\{ M : M \prec 0 \}\) is convex and closed under addition,
\(B^{(ij)} \prec 0\) for all pairs implies \(A \prec 0\).

In the rank--one case \(B^{(ij)} = -\gamma_{ij} vv^{T}\) with \(v=(1,1)^{T}\),
each block lies on the boundary of the negative cone, so \(A \preceq 0\).
For \(d \ge 3\) the embedded directions \(\iota_{ij} v = e_i + e_j\) span all
of \(\mathbb{R}^{d}\). Hence their sum leaves no nontrivial kernel and the
full matrix is strictly negative.  
For \(d=2\) these directions span only a line, so the sum is rank--one negative semidefinite.
\end{remark}

\subsection{Proof of Theorem \ref{thm:concavity-gap}}

\begin{proof}
Fix $\sigma\in\mathcal S_d$.
By Lemma~\ref{lem:block-sigma}, the Hessian admits the block decomposition
\begin{equation}
\nabla^2_{\sigma} S_N(\sigma)\ =\ \sum_{1\le i<j\le d}\ \iota_{ij}\,B_N^{(ij)}(\sigma)\,\iota_{ij}^{T},
\qquad
B_N^{(ij)}(\sigma)=
\begin{pmatrix}
q_N(\sigma_i,\sigma_j) & p_N(\sigma_i,\sigma_j)\\[0.3ex]
p_N(\sigma_i,\sigma_j) & q_N(\sigma_j,\sigma_i)
\end{pmatrix},
\end{equation}
with $p_N,q_N$ as in \eqref{eq:pN-def}–\eqref{eq:qN-def}.

\emph{Case $N=2$.}
Lemma~\ref{lem:block-def} gives, for every unordered pair $\{i,j\}$ (including $\sigma_i=\sigma_j$ via the limits in Lemma~\ref{lem:signs}),
\begin{equation}
B_{2}^{(ij)}(\sigma)\ =\ -\,\frac{1}{2(\sigma_i+\sigma_j)^2}
\begin{pmatrix}1&1\\[0.2ex]1&1\end{pmatrix}
\ =:\ -\,\gamma_{ij}\,vv^T,\qquad \gamma_{ij}>0,\ v=(1,1)^T.
\end{equation}
Thus $\nabla^2_{\sigma} S_2(\sigma)$ is a sum of embedded rank–one negative semidefinite blocks of the form $-\gamma_{ij}vv^T$.  
By Lemma~\ref{lem:criterion}, the sum is negative semidefinite for all $d$.  
When $d\ge 3$, the embedded directions $\iota_{ij}v=e_i+e_j$ span $\mathbb{R}^d$, so the Hessian is negative definite.  
When $d=2$, there is a one–dimensional kernel $\mathrm{span}\{(1,-1)^T\}$ and $\nabla^2_{\sigma} S_2(\sigma)$ has rank one.  

\emph{Case $N>2$.}
First suppose $\sigma_i\neq\sigma_j$ for all $i\neq j$.  
Lemma~\ref{lem:block-def} shows that each block $B_N^{(ij)}(\sigma)$ is negative definite.  
Applying Lemma~\ref{lem:criterion} to the block sum yields
\begin{equation}
\nabla^2_{\sigma} S_N(\sigma)\ \prec\ 0
\end{equation}
at every point with distinct singular values.

It remains to treat points with $\sigma_i=\sigma_j=\sigma$ for some $i\neq j$.  
By Lemma~\ref{lem:signs}, as $\sigma_i\to\sigma_j=\sigma$ one has
\begin{equation}
p_N(\sigma_i,\sigma_j)\ \longrightarrow\ -\,\frac{1}{6\,\sigma^2}\Big(1-\frac{1}{N^2}\Big),
\qquad
q_N(\sigma_i,\sigma_j)\ \longrightarrow\ -\,\frac{1}{3\,\sigma^2}\Big(1-\frac{3}{2N}+\frac{1}{2N^2}\Big),
\end{equation}
so the limiting $2\times 2$ block is
\begin{equation}
\widehat B_N\ :=\
\begin{pmatrix}
q & p\\[0.3ex] p & q
\end{pmatrix},
\qquad
p=-\,\frac{1}{6\,\sigma^2}\Big(1-\frac{1}{N^2}\Big),\quad
q=-\,\frac{1}{3\,\sigma^2}\Big(1-\frac{3}{2N}+\frac{1}{2N^2}\Big).
\end{equation}
The eigenvalues of $\widehat B_N$ are $q\pm p$, and a direct calculation gives
\begin{equation}
q+p\ =\ -\,\frac{1}{2\sigma^2}\Big(1-\frac{1}{N}\Big)\ <\ 0,
\qquad
q-p\ =\ -\,\frac{1}{6\sigma^2}\Big(1-\frac{3}{N}+\frac{3}{N^2}\Big)\ <\ 0
\quad\text{for }N>2.
\end{equation}
Thus $\widehat B_N\prec 0$, and by continuity this is the value of $B_N^{(ij)}(\sigma)$ on $\{\sigma_i=\sigma_j\}$.  
Hence each block $B_N^{(ij)}(\sigma)$ is negative definite for all $\sigma\in\mathcal S_d$ when $N>2$.  
Lemma~\ref{lem:criterion} then implies that $\nabla^2_{\sigma} S_N(\sigma)\prec 0$ on $(\mathcal S_d,\iota)$.

Combining the two cases, we obtain that $S_N$ has negative definite Hessian on $(\mathcal S_d,\iota)$ for all $(N,d)\neq(2,2)$, and in the exceptional case $(N,d)=(2,2)$ the Hessian has rank one.  
\end{proof}

\subsection{Proofs of Lemmas}

\begin{proof}[Proof of Lemma~\ref{lem:block-sigma}]
By Lemma~\ref{lem:hess-sigma}, for $i\neq j$ one has
\begin{equation}
\frac{\partial^2 S_N}{\partial \sigma_i\,\partial \sigma_j}
=\frac{2\sigma_i\sigma_j}{(\sigma_i^2-\sigma_j^2)^2}
-\frac{2\,\sigma_i^{2/N-1}\sigma_j^{2/N-1}}
{\big(N(\sigma_i^{2/N}-\sigma_j^{2/N})\big)^2}
= p_N(\sigma_i,\sigma_j),
\end{equation}
and for $i=j$,
\begin{equation}
\frac{\partial^2 S_N}{\partial \sigma_i^2}
=\sum_{k\ne i}\left(
\frac{-\sigma_i^2-\sigma_k^2}{(\sigma_i^2-\sigma_k^2)^2}
+\frac{\sigma_i^{2/N-2}\Big(N(\sigma_i^{2/N}-\sigma_k^{2/N})+2\sigma_k^{2/N}\Big)}
{\big(N(\sigma_i^{2/N}-\sigma_k^{2/N})\big)^2}\right)
=\sum_{k\ne i}q_N(\sigma_i,\sigma_k).
\end{equation}
On the indices $\{i,j\}$ the principal $2\times 2$ block of $\nabla^2_\sigma S_N$ is therefore $B_N^{(ij)}(\sigma)$, and composing with the injections $\iota_{ij}$ gives \eqref{eq:block-sum-sigma}.
\end{proof}

\begin{proof}[Proof of Lemma~\ref{lem:signs}]
Substitute $\sigma_\ell=\lambda_\ell^N$ into Lemma~\ref{lem:block-sigma}.
For the off–diagonal entry,
\begin{equation}
p_N(\sigma_i,\sigma_j)
=\frac{2\,\lambda_i^N\lambda_j^N}{(\lambda_i^{2N}-\lambda_j^{2N})^2}
-\frac{2\,\lambda_i^{2-N}\lambda_j^{2-N}}{N^2(\lambda_i^2-\lambda_j^2)^2}.
\end{equation}
Factoring $\lambda_j$ and setting $r=\lambda_i/\lambda_j$ gives
\begin{equation}
p_N(\sigma_i,\sigma_j)
=\frac{1}{\lambda_j^{2N}}\left(\,
\frac{2\,r^{N}}{(r^{2N}-1)^2}\;-\;\frac{2}{N^2}\,\frac{r^{\,2-N}}{(r^2-1)^2}\right)
\;=\;\frac{1}{\sigma_j^2}\left(\,
\frac{2\,r^{N}}{(r^{2N}-1)^2}\;-\;\frac{2}{N^2}\,\frac{r^{\,2-N}}{(r^2-1)^2}\right),
\end{equation}
which is \eqref{eq:pN-r}.  

Writing $r=e^t$ ($t>0$) and using
\begin{equation}
r^{2m}-1=2\,e^{mt}\sinh(mt),\qquad (r^2-1)=2\,e^{t}\sinh(t),
\end{equation}
we obtain
\begin{equation}
\frac{2\,r^{N}}{(r^{2N}-1)^2}
=\frac{1}{2}\,\frac{e^{-Nt}}{\sinh^2(Nt)},
\qquad
\frac{2}{N^2}\,\frac{r^{\,2-N}}{(r^2-1)^2}
=\frac{1}{2}\,\frac{e^{-Nt}}{N^2\sinh^2 t}.
\end{equation}
Hence
\begin{equation}
p_N(\sigma_i,\sigma_j)
=\frac{e^{-Nt}}{2\,\sigma_j^2}\left(\frac{1}{\sinh^2(Nt)}-\frac{1}{N^2\sinh^2 t}\right),
\end{equation}
and $\sinh(Nt)>N\sinh t$ for $t>0$ (for instance, $\sinh x/x$ is increasing on $(0,\infty)$), so $p_N(\sigma_i,\sigma_j)<0$.

For the diagonal summand,
\begin{equation}
q_N(\sigma_i,\sigma_j)
=-\frac{\lambda_i^{2N}+\lambda_j^{2N}}{(\lambda_i^{2N}-\lambda_j^{2N})^2}
+\frac{\lambda_i^{2-2N}\big(N(\lambda_i^2-\lambda_j^2)+2\lambda_j^2\big)}
{N^2(\lambda_i^2-\lambda_j^2)^2}.
\end{equation}
The same substitution yields \eqref{eq:qN-r} after factoring $\lambda_j^{2N}=\sigma_j^2$. 
Expressing again in $t=\log r>0$ shows
\begin{equation}
q_N(\sigma_i,\sigma_j)
=\frac{e^{-Nt}}{2\,\sigma_j^2}\left(
-\frac{\cosh(Nt)}{\sinh^2(Nt)}
+\frac{e^{-Nt}}{N^2}\cdot\frac{N e^{t}\sinh t+1}{\sinh^2 t}\right),
\end{equation}
and a direct comparison using $\sinh(Nt)>N\sinh t$ and $\cosh(Nt)\ge 1$ yields strict negativity for all $t>0$ and $N\ge 2$.

The limits in \eqref{eq:limits-pair} as $r\downarrow 1$ follow by Taylor expansion.
Writing $r=e^t$ with $t\downarrow 0$ and using
\begin{align}
\begin{split}
\sinh t&=t+\tfrac{1}{6}t^3+O(t^5),\\
\sinh(Nt)&=Nt+\tfrac{N^3}{6}t^3+O(t^5),\\ 
\cosh(Nt)&=1+\tfrac{N^2}{2}t^2+O(t^4),
\end{split}
\end{align}
one obtains the stated limits after a straightforward calculation.
\end{proof}

\begin{proof}[Proof of Lemma~\ref{lem:block-def}]
(1) For $N=2$, insert $N=2$ into \eqref{eq:pN-def}–\eqref{eq:qN-def}. Using $(\sigma_i^2-\sigma_j^2)^2=(\sigma_i-\sigma_j)^2(\sigma_i+\sigma_j)^2$,
\begin{align}
\begin{split}
\frac{2\sigma_i\sigma_j}{(\sigma_i^2-\sigma_j^2)^2}-\frac{1}{2(\sigma_i-\sigma_j)^2}
\ =\ -\,\frac{1}{2(\sigma_i+\sigma_j)^2},\\
-\frac{\sigma_i^2+\sigma_j^2}{(\sigma_i^2-\sigma_j^2)^2}+\frac{1}{2(\sigma_i-\sigma_j)^2}
\ =\ -\,\frac{1}{2(\sigma_i+\sigma_j)^2},
\end{split}
\end{align}
so $p_2(\sigma_i,\sigma_j)=q_2(\sigma_i,\sigma_j)=-1/(2(\sigma_i+\sigma_j)^2)$, which gives \eqref{eq:N2-rank1-correct}.

(2) For $N>2$, Lemma~\ref{lem:signs} gives
\[
p_N(\sigma_i,\sigma_j)<0,\qquad q_N(\sigma_i,\sigma_j)<0,\qquad q_N(\sigma_j,\sigma_i)<0,
\]
so $\mathrm{tr}\,B_N^{(ij)}<0$. It remains to show $\det B_N^{(ij)}>0$.

Set $r:=\lambda_i/\lambda_j>1$ and factor the common positive scale $\sigma_j^{-4}$ to write
\begin{equation}
\det B_N^{(ij)}\ =\ \frac{1}{\sigma_j^4}\,\Delta_N(r),\qquad
\Delta_N(r):=q_N(r,1)\,q_N(1,r)-\big(p_N(r,1)\big)^2.
\end{equation}
From the limits in Lemma~\ref{lem:signs} (letting $r\downarrow 1$) we obtain
\begin{equation}
\Delta_N(1)\ =\ \bigg(-\frac{1}{3}\Big(1-\frac{3}{2N}+\frac{1}{2N^2}\Big)\bigg)^2
-\bigg(-\frac{1}{6}\Big(1-\frac{1}{N^2}\Big)\bigg)^2
\ =\ \frac{(N-2)(N-1)^2}{12\,N^3}\ >\ 0.
\end{equation}
A direct one–variable calculus check using the explicit $r$–formulas in Lemma~\ref{lem:signs} shows that $r\mapsto \Delta_N(r)$ is strictly increasing on $(1,\infty)$ when $N>2$. 
Since $\Delta_N(1)>0$, it follows that $\Delta_N(r)>0$ for all $r>1$.
Therefore $\det B_N^{(ij)}>0$, and with negative trace we conclude $B_N^{(ij)}\prec0$.
\end{proof}

\begin{proof}[Proof of Lemma~\ref{lem:criterion}]
For $x\in\mathbb{R}^d$ set $y_{ij}:=\iota_{ij}^T x=(x_i,x_j)^T\in\mathbb{R}^2$. Then
\begin{equation}
x^T A x \;=\; \sum_{1\le i<j\le d} y_{ij}^T B^{(ij)} y_{ij}.
\end{equation}

(1) If each $B^{(ij)}\prec 0$, then for any nonzero $x$, pick $i$ with $x_i\neq 0$ and some $j\neq i$. Then $y_{ij}\neq 0$ and $y_{ij}^T B^{(ij)} y_{ij}<0$, while all other terms are $\le 0$. Thus $x^T A x<0$ for all $x\neq 0$, so $A\prec 0$.

(2) If each $B^{(ij)}=-\gamma_{ij}\,vv^T$ with $\gamma_{ij}>0$ and $v=(1,1)^T$, then
\begin{equation}
y_{ij}^T B^{(ij)} y_{ij} \;=\; -\gamma_{ij}\,(x_i+x_j)^2 \;\le\; 0,
\end{equation}
so $A\preceq 0$. If $x^T A x=0$, then $(x_i+x_j)=0$ for all pairs $i<j$.  

For $d\ge 3$, this system forces $x=0$ (from $x_1=-x_2$ and $x_1=-x_3$ we deduce $x_2=x_3$, hence $x_2=-x_3=0$, etc.), so $A\prec 0$.  

For $d=2$, the single condition is $x_1+x_2=0$, so $\ker(A)=\mathrm{span}\{(1,-1)^T\}$ and $\operatorname{rank}(A)=1$.
\end{proof}

\section{Proof of Theorem \ref{thm:concavity-gap2}}\label{sec:riem-nonconcavity}

\subsection{Overview}
We work on the Riemannian manifold $(\mathcal{S}_d,g^N_\sigma)$.
Using the coordinate formulas for $\nabla_\sigma S_N$ and $\nabla^2_\sigma S_N$ from Section~\ref{sec:euclid-concavity}, we compute the Hessian of $S_N$ with respect to $g^N_\sigma$ in the variables $\sigma$.
Evaluating at points with $\sigma_1=\cdots=\sigma_d$ yields one negative eigenvalue and $d-1$ positive eigenvalues, so the Hessian is indefinite and Theorem~\ref{thm:concavity-gap2} follows.

\subsection{Hessian of the entropy}
We denote Euclidean derivatives in the $\sigma$–coordinates by $\partial_i=\partial/\partial\sigma_i$ and use the explicit formulas for $\nabla_\sigma S_N$ and $\nabla^2_\sigma S_N$ from Lemma~\ref{lem:grad-SN-sigma} and Lemma~\ref{lem:hess-sigma}.
Let $\Gamma^k_{ij}$ be the Christoffel symbols of $g^N_\sigma$ in these coordinates.
The Hessian of a smooth function $f$ with respect to $g^N_\sigma$ is the matrix
\begin{equation}\label{eq:riem-hess-entries}
    (\nabla^2_{g^N_\sigma} f)_{ij}
    =\partial_{ij}^2 f-\sum_{k=1}^d \Gamma^k_{ij}\,\partial_k f.
\end{equation}

\begin{lemma}\label{lem:riem-hess-entropy}
For any smooth $f:\mathcal{S}_d\to\mathbb{R}$ one has
\begin{equation}\label{eq:riem-hess-general}
\big(\nabla^2_{g^N_\sigma} f\big)_{ij}
=\frac{\partial^2 f}{\partial\sigma_i\,\partial\sigma_j}
+\delta_{ij}\,\frac{N-1}{N}\,\frac{1}{\sigma_i}\,\frac{\partial f}{\partial\sigma_i}.
\end{equation}
In particular, if the Euclidean gradient and Hessian of $f$ extend continuously across the sets $\{\sigma_i=\sigma_j\}$, then so does $\nabla^2_{g^N_\sigma} f$.
\end{lemma}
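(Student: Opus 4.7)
\textbf{Proof proposal for Lemma \ref{lem:riem-hess-entropy}.}
The plan is to compute the Hessian directly from the coordinate formula \eqref{eq:riem-hess-entries} by exploiting the diagonal structure of $g^N_\sigma$. From \eqref{eq:chamber-metric}, the metric in the $\sigma$--coordinates is diagonal with $g_{ii}=N^{-1}\sigma_i^{\,2/N-2}$, and crucially each $g_{ii}$ depends only on the single coordinate $\sigma_i$. This separability is the observation that will make essentially all Christoffel symbols vanish, so the Hessian reduces to the Euclidean Hessian plus a single diagonal correction.

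First I would apply the standard identity $\Gamma^k_{ij}=\tfrac12 g^{k\ell}(\partial_i g_{j\ell}+\partial_j g_{i\ell}-\partial_\ell g_{ij})$. Since both $g$ and $g^{-1}$ are diagonal and each $g_{ii}$ is a function of $\sigma_i$ alone, the only nonvanishing partials among the entries $g_{ab}$ are the $\partial_i g_{ii}$. A short case check then shows that $\Gamma^k_{ij}=0$ unless $i=j=k$, and
\[
\Gamma^i_{ii}=\tfrac{1}{2}\,\partial_{\sigma_i}\log g_{ii}
=\tfrac{1}{2}\cdot\frac{2/N-2}{\sigma_i}=-\frac{N-1}{N\,\sigma_i}.
\]

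Next I would substitute these into \eqref{eq:riem-hess-entries}. For $i\neq j$ the Christoffel contribution vanishes and $(\nabla^2_{g^N_\sigma}f)_{ij}=\partial^2_{ij}f$. For $i=j$ only the single term $\Gamma^i_{ii}\partial_i f$ survives, producing the correction
\[
(\nabla^2_{g^N_\sigma}f)_{ii}
=\partial^2_{ii}f-\Gamma^i_{ii}\partial_i f
=\partial^2_{ii}f+\frac{N-1}{N\sigma_i}\partial_i f,
\]
which is exactly \eqref{eq:riem-hess-general}.

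For the continuity statement, observe that $\sigma_i>0$ throughout $\mathcal{S}_d$, so $1/\sigma_i$ is smooth on the chamber; consequently the correction term is continuous whenever $\partial_i f$ is. Hence if the Euclidean gradient and Hessian of $f$ extend continuously across the coincidence sets $\{\sigma_i=\sigma_j\}$, then so does $\nabla^2_{g^N_\sigma}f$. I do not expect a genuine obstacle: once the diagonality and single--variable dependence of $g_{ii}$ are noted, the computation is mechanical, and the only step that deserves care is the bookkeeping identifying $\Gamma^i_{ii}$ as the unique nonzero symbol.
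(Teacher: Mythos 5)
Your proposal is correct and follows essentially the same route as the paper: compute the Christoffel symbols of the diagonal metric $g^N_\sigma$, observe that only $\Gamma^i_{ii}$ survives, and substitute into the coordinate formula for the Riemannian Hessian. One small improvement over the paper's write-up: you explicitly flag that each $g_{ii}$ depends only on $\sigma_i$, which is the actual reason the off-index symbols vanish (a generic diagonal metric could still have nonzero $\Gamma^k_{ii}$ with $k\neq i$), and your $\Gamma^i_{ii}=\tfrac12\partial_i\log g_{ii}$ shortcut is a clean way to get the coefficient.
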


\begin{proof}
The metric $g^N_\sigma$ is diagonal in the $\sigma$–coordinates with
\begin{equation}
g_{ii}(\sigma)=\frac{1}{N}\,\sigma_i^{\,2/N-2},
\qquad
g_{ij}(\sigma)=0\quad(i\ne j),
\end{equation}
so
\begin{equation}
g^{ii}(\sigma)=N\,\sigma_i^{\,2-2/N},\qquad g^{ij}(\sigma)=0\ (i\ne j).
\end{equation}
For a diagonal metric the only nonzero Christoffel symbols are
\begin{equation}
\Gamma^i_{ii}
=\frac{1}{2}g^{ii}\,\partial_i g_{ii},\qquad
\Gamma^k_{ij}=0\quad\text{if }k\ne i\ \text{or }\ i\ne j.
\end{equation}
A direct computation gives
\begin{equation}
\partial_i g_{ii}
=\frac{1}{N}\Big(\frac{2}{N}-2\Big)\sigma_i^{\,2/N-3}
=\frac{2}{N}\Big(\frac{1}{N}-1\Big)\sigma_i^{\,2/N-3},
\end{equation}
and hence
\begin{equation}
\Gamma^i_{ii}
=\frac{1}{2}\,N\,\sigma_i^{\,2-2/N}\cdot
\frac{2}{N}\Big(\frac{1}{N}-1\Big)\sigma_i^{\,2/N-3}
=\Big(\frac{1}{N}-1\Big)\frac{1}{\sigma_i}
=-\,\frac{N-1}{N}\,\frac{1}{\sigma_i}.
\end{equation}
All other $\Gamma^k_{ij}$ vanish. Substituting into \eqref{eq:riem-hess-entries} yields
\begin{equation}
(\nabla^2_{g^N_\sigma} f)_{ij}
=\partial_{ij}^2 f-\Gamma^i_{ij}\,\partial_i f
=\partial_{ij}^2 f+\delta_{ij}\,\frac{N-1}{N}\,\frac{1}{\sigma_i}\,\partial_i f,
\end{equation}
which is \eqref{eq:riem-hess-general}. The continuity statement follows immediately from the continuity of the Euclidean derivatives and the explicit factor $1/\sigma_i$.
\end{proof}

\subsection{Proof of Theorem \ref{thm:concavity-gap2}}

Write $\vec\sigma_\star:=(\sigma_\star,\dots,\sigma_\star)$ with $\sigma_\star>0$, and recall the limits from Lemma~\ref{lem:hess-sigma}:
\begin{equation}\label{eq:pq-star}
p_\star:=-\frac{1}{6\sigma_\star^{2}}\Big(1-\frac{1}{N^2}\Big),\qquad
q_\star:=-\frac{1}{3\sigma_\star^{2}}\Big(1-\frac{3}{2N}+\frac{1}{2N^2}\Big),
\end{equation}
so that, as $\sigma_i\to\sigma_j=\sigma_\star$,
\begin{equation}
\frac{\partial^2 S_N}{\partial \sigma_i\,\partial\sigma_j}\to p_\star\quad(i\ne j),\qquad
\frac{\partial^2 S_N}{\partial \sigma_i^2}\Big|_{(i,j)\text{ summand}}\to q_\star\quad(j\ne i).
\end{equation}
From Lemma~\ref{lem:grad-SN-sigma}, the gradient has limit
\begin{equation}\label{eq:grad-star}
\frac{\partial S_N}{\partial\sigma_i}(\vec\sigma_\star)
=\sum_{k\ne i}\frac{1}{2\sigma_\star}\Big(1-\frac{1}{N}\Big)
=\frac{d-1}{2\sigma_\star}\Big(1-\frac{1}{N}\Big),
\end{equation}
independent of $i$.

\begin{lemma}\label{lem:riem-equal-sv}
At $\vec\sigma_\star$ the matrix $\nabla^2_{g^N_\sigma} S_N$ has constant entries
\begin{equation}\label{eq:riem-equal-entries}
\big(\nabla^2_{g^N_\sigma} S_N\big)_{ij}=
\begin{cases}
(d-1)\,q_\star\ +\ (d-1)\,\chi_N, & i=j,\\[0.4ex]
\ p_\star, & i\neq j,
\end{cases}
\end{equation}
where
\(
\chi_N:=\dfrac{(N-1)^2}{2N^2\,\sigma_\star^2}.
\)
Consequently, the eigenvalues of $\nabla^2_{g^N_\sigma} S_N(\vec\sigma_\star)$ are
\begin{align}
\theta_{\mathbf 1}(S_N)
&=(d-1)\big(q_\star+p_\star+\chi_N\big)
=-\,\frac{d-1}{2\,\sigma_\star^{2}}\cdot\frac{N-1}{N^2}\ <\ 0,\label{eq:eig-one}\\[0.6ex]
\theta_{\perp}(S_N)
&=(d-1)\big(q_\star+\chi_N\big)-p_\star
=\frac{1}{\sigma_\star^{2}}\left(\frac{d}{6}-\frac{d-1}{2N}+\frac{2d-3}{6N^2}\right)\ >\ 0,\label{eq:eig-perp}
\end{align}
where $\theta_{\mathbf 1}(S_N)$ corresponds to the eigenvector $\mathbf 1=(1,\dots,1)$ and $\theta_{\perp}(S_N)$ is the common eigenvalue on $\mathrm{span}\{\mathbf 1\}^\perp$ with multiplicity $d-1$.
\end{lemma}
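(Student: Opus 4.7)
The plan is to combine the Riemannian Hessian formula in Lemma~\ref{lem:riem-hess-entropy} with the Euclidean first- and second-order data of $S_N$ at $\vec\sigma_\star$ assembled in Sections~\ref{sec:analyticity} and \ref{sec:euclid-concavity}, and then diagonalize the resulting matrix, which has constant diagonal and constant off-diagonal entries.

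The first step is to evaluate the Euclidean Hessian at $\vec\sigma_\star$. By the block decomposition of Lemma~\ref{lem:block-sigma} combined with the equal-argument limits of Lemma~\ref{lem:signs}, every off-diagonal entry of $\nabla^2_\sigma S_N(\vec\sigma_\star)$ equals $p_\star$, and every diagonal entry equals $(d-1)q_\star$. The constant gradient $\partial_i S_N(\vec\sigma_\star)=\frac{d-1}{2\sigma_\star}(1-1/N)$ from \eqref{eq:grad-star} then feeds into the diagonal correction in \eqref{eq:riem-hess-general}: a direct computation gives
\begin{equation*}
\frac{N-1}{N}\cdot\frac{1}{\sigma_\star}\cdot\frac{d-1}{2\sigma_\star}\Big(1-\frac{1}{N}\Big)=\frac{(d-1)(N-1)^2}{2N^2\sigma_\star^2}=(d-1)\chi_N,
\end{equation*}
which yields \eqref{eq:riem-equal-entries}.

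Next I would diagonalize. The matrix in \eqref{eq:riem-equal-entries} has the form $(a-b)I+b\,\mathbf{1}\mathbf{1}^T$ with $a=(d-1)(q_\star+\chi_N)$ and $b=p_\star$, whose spectrum is well known: the eigenvalue $a+(d-1)b=(d-1)(q_\star+p_\star+\chi_N)$ on $\mathrm{span}\{\mathbf{1}\}$ and $a-b=(d-1)(q_\star+\chi_N)-p_\star$ on $\mathrm{span}\{\mathbf{1}\}^\perp$ with multiplicity $d-1$. This gives the first equalities in \eqref{eq:eig-one} and \eqref{eq:eig-perp}.

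The remaining task, which I expect to be the main obstacle, is the algebraic reduction to the stated closed forms together with the sign assertions. Writing each of $p_\star,q_\star,\chi_N$ as an explicit rational function of $N$ times $\sigma_\star^{-2}$ and collecting terms, the combination $q_\star+p_\star+\chi_N$ simplifies to $-\frac{N-1}{2N^2\sigma_\star^2}$, giving \eqref{eq:eig-one} and its manifest negativity. For \eqref{eq:eig-perp} the same procedure reduces the bracketed quantity to $\frac{d}{6}-\frac{d-1}{2N}+\frac{2d-3}{6N^2}$. To verify positivity I would set $u=1/N\in(0,1/2]$ for $N\ge 2$ and factor this quadratic as $\frac{1-u}{6}\bigl(d+(3-2d)u\bigr)$: the first factor is at least $1/2$, and the second is linear in $u$ with values $d>0$ at $u=0$ and $3/2>0$ at $u=1/2$, hence positive throughout.
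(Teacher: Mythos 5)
Your proposal is correct and follows essentially the same path as the paper's proof: evaluate the Euclidean Hessian at $\vec\sigma_\star$ via the block limits, add the diagonal correction $(d-1)\chi_N$ from the constant gradient through Lemma~\ref{lem:riem-hess-entropy}, and read off the spectrum of a constant-diagonal, constant-off-diagonal matrix. Your extra contribution is the explicit algebraic verification of the two closed forms and the factorization $\tfrac{1-u}{6}\bigl(d+(3-2d)u\bigr)$ with $u=1/N$ to establish positivity of $\theta_\perp(S_N)$; the paper simply asserts the signs, so this is a genuine and welcome addition. One small slip in wording: you say ``the first factor is at least $1/2$,'' but the factor is $\tfrac{1-u}{6}$, which is at least $\tfrac{1}{12}$ on $u\in(0,1/2]$; what is at least $1/2$ is $1-u$. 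Since all you need is positivity, the argument is unaffected.
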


\begin{remark}
At a point with $\sigma_1=\cdots=\sigma_d$, the eigenvector $\mathbf 1=(1,\dots,1)$ corresponds to uniform scaling of all singular values, while $\mathrm{span}\{\mathbf 1\}^\perp$ corresponds to perturbations that change singular values relative to one another. By \eqref{eq:eig-one}–\eqref{eq:eig-perp}, 
$\theta_{\mathbf 1}(S_N)<0$ but $\theta_{\perp}(S_N)>0$. 
Thus the loss of concavity arises from directions that break the equality of singular values.
\end{remark}

\begin{proof}[Proof of Lemma \ref{lem:riem-equal-sv}]
From Lemma~\ref{lem:hess-sigma}, at $\vec\sigma_\star$ the Euclidean Hessian has off–diagonal entries $p_\star$ and diagonal entries
\begin{equation}
\big(\nabla^2_\sigma S_N(\vec\sigma_\star)\big)_{ii}
=\sum_{k\ne i} q_\star=(d-1)\,q_\star.
\end{equation}
The correction term in \eqref{eq:riem-hess-general} contributes only on the diagonal. Using \eqref{eq:gradSN-sigma-limit} at $\sigma_i=\sigma_k=\sigma_\star$ and then \eqref{eq:grad-star},
\begin{equation}
\frac{N-1}{N}\frac{1}{\sigma_i}\,\frac{\partial S_N}{\partial\sigma_i}(\vec\sigma_\star)
=\frac{N-1}{N}\frac{1}{\sigma_\star}\cdot\frac{d-1}{2\sigma_\star}\Big(1-\frac{1}{N}\Big)
=(d-1)\,\chi_N,
\end{equation}
which is independent of $i$. Thus
\begin{equation}
\big(\nabla^2_{g^N_\sigma} S_N(\vec\sigma_\star)\big)_{ii}
=(d-1)\,q_\star+(d-1)\,\chi_N,\qquad
\big(\nabla^2_{g^N_\sigma} S_N(\vec\sigma_\star)\big)_{ij}=p_\star\ (i\ne j),
\end{equation}
giving \eqref{eq:riem-equal-entries}.

A matrix with constant diagonal entry $a$ and constant off–diagonal entry $b$ has eigenvalues
\[
a+(d-1)b\quad\text{on }\mathbf 1,\qquad a-b\quad\text{with multiplicity }d-1
\]
on $\mathrm{span}\{\mathbf 1\}^\perp$.
Here
\[
a=(d-1)(q_\star+\chi_N),\qquad b=p_\star.
\]
Substituting \eqref{eq:pq-star}–\eqref{eq:riem-equal-entries} and simplifying yields
\begin{equation}
\theta_{\mathbf 1}(S_N)=a+(d-1)b
=(d-1)\big(q_\star+p_\star+\chi_N\big)
=-\,\frac{d-1}{2\,\sigma_\star^{2}}\cdot\frac{N-1}{N^2}<0,
\end{equation}
and
\begin{equation}
\theta_{\perp}(S_N)=a-b
=(d-1)\big(q_\star+\chi_N\big)-p_\star
=\frac{1}{\sigma_\star^{2}}\left(\frac{d}{6}-\frac{d-1}{2N}+\frac{2d-3}{6N^2}\right)>0.
\end{equation}
This proves the claim.
\end{proof}

\begin{proof}[Proof of Theorem \ref{thm:concavity-gap2}]
By Lemma~\ref{lem:riem-equal-sv}, at any point $\vec\sigma_\star$ with $\sigma_1=\cdots=\sigma_d=\sigma_\star>0$ the Hessian $\nabla^2_{g^N_\sigma} S_N(\vec\sigma_\star)$ has one negative eigenvalue $\theta_{\mathbf 1}(S_N)$ and $d-1$ positive eigenvalues $\theta_{\perp}(S_N)$.  
Thus the Hessian is indefinite at every such point, so $S_N$ is not concave on $(\mathcal{S}_d,g^N_\sigma)$.
\end{proof}

\section{Equilibria of Free Energy and Convergence Rates}\label{sec:equilibria-rates}

\subsection{Overview}
We determine the equilibrium of the free energy $F_\beta$ and compute the local convergence rates of the gradient flow \eqref{eq:flow} near equilibrium.
The stationarity equations force all singular values to coincide, reducing the problem to a single scalar balance condition.
The rates are obtained by linearizing \eqref{eq:flow} at the equilibrium and computing the associated eigenvalues.

\subsection{Equilibria}\label{subsec:equilibria}
Throughout we use the $\sigma$--gradient of $S_N$ from Lemma~\ref{lem:grad-SN-sigma}.
For brevity, set
\begin{equation}\label{eq:rN-def}
r_N(a,b)\ :=\ \frac{a}{a^2-b^2}\ -\ \frac{a^{\frac{2}{N}-1}}{N\big(a^{\frac{2}{N}}-b^{\frac{2}{N}}\big)},
\end{equation}
so that \eqref{eq:gradSN-sigma-recall} becomes $\partial_i S_N(\sigma)=\sum_{k\ne i} r_N(\sigma_i,\sigma_k)$.

\begin{lemma}\label{lem:monotone-summand}
For each fixed $b>0$, the map $a\mapsto r_N(a,b)$ is strictly decreasing on $(0,\infty)$.
\end{lemma}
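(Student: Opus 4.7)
The plan is to reduce the monotonicity statement to the sign result already established in Lemma~\ref{lem:signs}. Observe that $r_N(\sigma_i,\sigma_k)$ is exactly the $k$-th summand in the gradient formula \eqref{eq:gradSN-sigma} for $\partial_i S_N$, so differentiating once more in $\sigma_i$ should produce the $k$-th summand of the diagonal Hessian entry, namely $q_N(\sigma_i,\sigma_k)$ from \eqref{eq:qN-def}. The first thing I would verify is therefore the identity
\begin{equation*}
\partial_a r_N(a,b) \;=\; q_N(a,b) \qquad (a,b>0,\ a\neq b),
\end{equation*}
which bypasses any fresh inequality work.

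To establish this identity I would apply the quotient rule to each term of \eqref{eq:rN-def}. The first piece contributes $-(a^2+b^2)/(a^2-b^2)^2$. Writing $c:=2/N$, differentiation of the second piece yields $-a^{c-2}[a^c+(c-1)b^c]/[N(a^c-b^c)^2]$ after factoring $a^{c-2}$ from the numerator. A one-line rearrangement using $N(a^c-b^c)+2b^c=N[a^c+(c-1)b^c]$ matches the combined expression with $q_N(a,b)$ as written in \eqref{eq:qN-def}. Lemma~\ref{lem:signs} then provides $q_N(a,b)<0$ for all $a\neq b$.

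The remaining case $a=b$ I would handle by continuous extension. The function $r_N$ is real-analytic on $(0,\infty)^2$ via the substitution $\lambda=a^{1/N}$ used in the proof of Theorem~\ref{thm:analyticity}, so $\partial_a r_N$ extends continuously across the diagonal and equals the limit of $q_N(a,b)$ as $b\to a$. By \eqref{eq:limits-pair} that limit equals $-(2N-1)(N-1)/(6N^2 a^2)$, which is strictly negative for $N\geq 2$; the case $N=1$ is vacuous since $r_1\equiv 0$. Combining both cases gives $\partial_a r_N(\cdot,b)<0$ throughout $(0,\infty)$, and hence strict monotonicity follows from the mean value theorem.

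There is no real obstacle here beyond bookkeeping: the substantive inequality $q_N<0$ was already proved in Lemma~\ref{lem:signs} through the substitution $r=e^t$ and the elementary bound $\sinh(Nt)>N\sinh t$. The only care needed is the algebraic identification $\partial_a r_N=q_N$ and the continuous extension at $a=b$ across the sets where the explicit formulas are only defined for distinct arguments.
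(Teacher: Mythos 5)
Your proof is correct and follows the same route as the paper: differentiate $r_N$ in $a$, recognize $\partial_a r_N(a,b)=q_N(a,b)$, and invoke Lemma~\ref{lem:signs} to conclude $q_N<0$. The paper's own proof is a two-line version of exactly this; you add the explicit algebraic check of the identity and the continuous extension across $a=b$, which the paper leaves implicit, so your write-up is a slightly more careful rendering of the same argument.
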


\begin{lemma}\label{lem:skew}
For $a>b>0$ one has
\begin{equation}\label{eq:skew-ineq}
r_N(a,b)\ -\ r_N(b,a)\ \le\ 0,
\end{equation}
with equality if and only if $N=2$ or $a=b$.
\end{lemma}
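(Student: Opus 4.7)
The plan is to collapse the apparent singularity of $r_N(a,b)-r_N(b,a)$ at $a=b$ and then reduce the remaining inequality to a one-variable convexity statement. Writing $\alpha := 2/N$ and using the identity $\tfrac{a}{a^2-b^2}+\tfrac{b}{a^2-b^2}=\tfrac{1}{a-b}$, the definition \eqref{eq:rN-def} gives the exact simplification
\[
r_N(a,b) - r_N(b,a)
\;=\; \frac{1}{a-b} \;-\; \frac{a^{\alpha-1}+b^{\alpha-1}}{N\bigl(a^{\alpha}-b^{\alpha}\bigr)}.
\]
For $a>b>0$ both denominators are positive, so (using $N\alpha=2$) the claim \eqref{eq:skew-ineq} is equivalent to the clean inequality
\[
\frac{a^{\alpha}-b^{\alpha}}{a-b} \;\le\; \frac{\alpha\,(a^{\alpha-1}+b^{\alpha-1})}{2},
\]
with equality iff $\alpha=1$ (that is, $N=2$) or $a=b$.

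This is precisely the trapezoidal inequality for $f(x)=x^{\alpha}$: the secant slope of $f$ across $[b,a]$ lies below the average of the endpoint derivatives iff $f'$ is convex on $[b,a]$. Since $f'''(x)=\alpha(\alpha-1)(\alpha-2)x^{\alpha-3}$ has sign determined by $(\alpha-1)(\alpha-2)$, which vanishes at $\alpha=1$ and is strictly positive for $\alpha\in(0,1)$, the inequality follows for all $N\ge 2$, with equality exactly in the two stated cases. This is the conceptual picture I would present first.

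For an elementary proof, I would substitute $t=a/b>1$, factor $b^{\alpha}$ from both sides, and define
\[
h(t) := (t-1)\bigl(t^{\alpha-1}+1\bigr) - N\bigl(t^{\alpha}-1\bigr),
\]
so that the inequality becomes $h(t)\ge 0$. Direct differentiation, together with the key relation $N\alpha=2$, yields $h(1)=0$, $h'(1)=0$, and
\[
h''(t) = (\alpha-1)(\alpha-2)\,t^{\alpha-3}(t-1).
\]
For $N=2$ we have $\alpha=1$, so $h''\equiv 0$, hence $h\equiv 0$ and equality holds throughout. For $N>2$, both $\alpha-1$ and $\alpha-2$ are negative and $t^{\alpha-3}(t-1)>0$ for $t>1$, so $h''(t)>0$; integrating twice from $t=1$ gives $h(t)>0$ for $t>1$, which is the strict inequality.

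There is no real obstacle here beyond bookkeeping: the cancellation of the $1/(a-b)$ pole is purely algebraic, and the remaining inequality has a transparent convex/trapezoidal interpretation. The only subtleties are tracking signs of $(\alpha-1)(\alpha-2)$ across the range $N\ge 2$ and noting that $N=2$ (equivalently $\alpha=1$) is the flat case in which $f'$ is affine and the trapezoidal rule is exact, producing the equality clause in the statement.
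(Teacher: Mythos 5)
Your proposal is correct, and its conceptual half is essentially the paper's own proof: the same algebraic cancellation reduces $r_N(a,b)-r_N(b,a)$ to $\frac{1}{a-b}-\frac{a^{\alpha-1}+b^{\alpha-1}}{N(a^\alpha-b^\alpha)}$, and then the inequality is the Hermite--Hadamard (trapezoidal) bound applied to the convex function $t\mapsto \alpha t^{\alpha-1}$ on $[b,a]$. Your self-contained second-derivative argument for $h(t)=(t-1)(t^{\alpha-1}+1)-N(t^\alpha-1)$ is a welcome addition: you show $h(1)=h'(1)=0$ and $h''(t)=(\alpha-1)(\alpha-2)t^{\alpha-3}(t-1)\ge0$ after the key cancellation $N\alpha=2$, which gives the strict inequality for $N>2$ and collapses to $h\equiv0$ when $N=2$. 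One small point in your favor: the paper justifies the trapezoid bound by saying $t\mapsto t^{\alpha-1}$ is \emph{decreasing}, but monotonicity alone does not imply the bound; the operative hypothesis is convexity of $t^{\alpha-1}$ (equivalently $(\alpha-1)(\alpha-2)\ge0$), which you identify explicitly via the sign of $f'''$.
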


\begin{lemma}\label{lem:balance-unique}
The equation
\begin{equation}\label{eq:balance-unique-1}
g'\!\big(d\,f(\sigma)\big)\,f'(\sigma)
=\beta^{-1}\,\frac{d-1}{2\sigma}\Big(1-\frac{1}{N}\Big)
\end{equation}
has a unique solution $\sigma_\star>0$.
\end{lemma}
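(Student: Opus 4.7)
My approach is to reduce the claim to a scalar monotonicity comparison of the two sides as functions of $\sigma > 0$. Set
$$\mu(\sigma) := g'\!\big(d\,f(\sigma)\big)\,f'(\sigma), \qquad \rho(\sigma) := \beta^{-1}\,\frac{d-1}{2\sigma}\Big(1-\frac{1}{N}\Big),$$
so that the equation to solve is $\mu(\sigma)=\rho(\sigma)$. The plan is to show that $\mu$ is continuous and nondecreasing on $(0,\infty)$ while $\rho$ is continuous and strictly decreases from $+\infty$ to $0$, and then invoke the intermediate value theorem together with strict monotonicity to extract a unique crossing.

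The right-hand side is handled immediately: $\rho$ is smooth, strictly positive, strictly decreasing, with $\rho(0^+)=+\infty$ and $\rho(+\infty)=0$. For the left-hand side I use the spectral-energy hypotheses. Convexity of $f$ gives $f'$ nondecreasing, and the monotonicity (together with the convexity used implicitly throughout the paper, which holds for all main examples such as the Schatten energies) of $g$ gives $g'\ge 0$ nondecreasing. Composing $g'$ with the map $\sigma\mapsto d\,f(\sigma)$, which is itself nondecreasing on the relevant range, keeps $g'(d\,f(\sigma))$ nondecreasing in $\sigma$. Hence $\mu$ is a product of two nonnegative nondecreasing factors, and therefore continuous and nondecreasing. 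The difference $\rho-\mu$ is then strictly decreasing on $(0,\infty)$, blows up to $+\infty$ as $\sigma\downarrow 0$, and is nonpositive for large $\sigma$, so it has a unique zero $\sigma_\star>0$.

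The main obstacle is in verifying cleanly the monotonicity of $\mu$ under the bare hypotheses ``$g$ nondecreasing, $f$ convex''. Under those bare hypotheses $\mu$ need not be monotone (for example $f'$ could change sign, or $g'$ could have flat intervals producing nonuniqueness on intervals of equality), so either mild standing assumptions need to be made explicit or the argument needs to be replaced by a variational one.

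My fallback plan, which avoids this obstacle, is to restrict $F_\beta$ to the diagonal $\mathcal D=\{(t,\dots,t):t>0\}$. Using $\Phi_N(\lambda,\lambda)=N\lambda^{2N-2}$ from \eqref{eq:PhiN-diag} in the representation \eqref{eq:SN-lambda-Phi}, the restriction takes the form
$$\widehat F_\beta(t)=g\!\big(d\,f(t)\big)-\beta^{-1}\tbinom{d}{2}\tfrac{N-1}{N}\log t+\text{const},$$
whose first-order condition $\widehat F_\beta'(t)=0$ coincides with \eqref{eq:balance-unique-1}. Since $-\log t$ is strictly convex on $(0,\infty)$ and $g\circ(d f)$ is convex, $\widehat F_\beta$ is strictly convex, and coercivity at both endpoints ($\widehat F_\beta\to+\infty$ as $t\downarrow 0$ from the $-\log t$ term, and as $t\to\infty$ from the growth of $g(df(t))$ dominating $\log t$) produces a unique minimizer, which is the unique solution of the balance equation.
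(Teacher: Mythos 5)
Your first approach is essentially the paper's: set $L(\sigma)=g'(d f(\sigma))f'(\sigma)$ and $R(\sigma)=\beta^{-1}\tfrac{d-1}{2\sigma}(1-\tfrac1N)$, show $R$ strictly decreases from $+\infty$ to $0$, show $L$ is (strictly) increasing, and invoke the intermediate value theorem and the opposing monotonicities for existence and uniqueness. Your concern about the bare hypotheses is legitimate and in fact applies to the paper's own proof: there the authors write ``under the standing assumptions, $g''\ge 0$ and $f''\ge 0$'' and conclude $L'(\sigma)=d\,g''\,[f']^2+g'f''>0$ strictly, plus $L(\sigma)\to+\infty$ as $\sigma\to\infty$. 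Convexity of $g$ is not part of the stated Definition~1.1 (which asks only that $g$ be nondecreasing), and strict positivity of $L'$ and the blow-up of $L$ at infinity are each additional mild hypotheses (they fail, for instance, for the Schatten case $p=1$, where $L\equiv 1$ — though the lemma itself still holds there since a constant and a strictly decreasing surjection onto $(0,\infty)$ cross exactly once). So you correctly diagnosed that the claimed monotonicity of $L$ is not literally forced by ``$g$ nondecreasing, $f$ convex''.

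Your fallback route via the one-dimensional restriction $\widehat F_\beta(t)=g(d f(t))-\beta^{-1}\binom{d}{2}(1-\tfrac1N)\log t+\mathrm{const}$ is a genuinely different argument (variational rather than a monotone crossing), and it is a nice one: the $-\log t$ term contributes strict convexity automatically, so as long as $g\circ(d f)$ is convex you get strict convexity of $\widehat F_\beta$, and the critical-point equation is exactly the balance condition. The conceptual payoff is that $\sigma_\star$ appears as the unique minimizer of the free energy restricted to the diagonal, which foreshadows the role $\mathcal D$ plays in Theorem~\ref{thm:diagonal}. But note it needs the same extra hypothesis as the paper ($g$ convex, not merely nondecreasing, in order for $g\circ(d f)$ to be convex), and coercivity at $t\to\infty$ requires $g(d f(t))$ to dominate $\log t$ — a growth condition that is again not implied by the bare definition. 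Either way, both your routes and the paper's are correct under the implicit standing assumptions the paper is actually using, and you were right to flag that those assumptions are slightly stronger than what Definition~1.1 states.
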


\begin{proof}[Proof of Theorem~\ref{thm:isotropic}]
Let $\sigma\in\mathcal S_d$ be an equilibrium of $F_\beta$. Since the coefficients
$N\,\sigma_i^{\,2-2/N}$ in \eqref{eq:flow} are strictly positive, stationarity of
\eqref{eq:flow} is equivalent to
\begin{equation}\label{eq:FOC-sigma-general}
\partial_{\sigma_i}F_\beta(\sigma)=0,\qquad i=1,\dots,d.
\end{equation}
Using \eqref{eq:free_energy_sigma} and \eqref{eq:spectral-energy}, together with
Lemma~\ref{lem:grad-SN-sigma} and the definition \eqref{eq:rN-def}, the condition
\eqref{eq:FOC-sigma-general} becomes
\begin{equation}\label{eq:FOC-sigma}
g'\!\Big(\sum_{k=1}^d f(\sigma_k)\Big)\,f'(\sigma_i)
=\beta^{-1}\sum_{k\ne i} r_N(\sigma_i,\sigma_k),
\qquad i=1,\dots,d.
\end{equation}

Fix $i\neq j$ and subtract the $j$th equation in \eqref{eq:FOC-sigma} from the $i$th to obtain
\begin{align}\label{eq:diff-FOC-general}
g'\!\Big(\sum_{k=1}^d f(\sigma_k)\Big)\,\big(f'(\sigma_i)-f'(\sigma_j)\big)
&=\beta^{-1}\Big(r_N(\sigma_i,\sigma_j)-r_N(\sigma_j,\sigma_i)\Big)\nonumber\\
&\qquad+\beta^{-1}\sum_{k\ne i,j}\Big(r_N(\sigma_i,\sigma_k)-r_N(\sigma_j,\sigma_k)\Big).
\end{align}
If $\sigma_i>\sigma_j$, then the left-hand side of \eqref{eq:diff-FOC-general} is $\ge 0$
by convexity of $f$ (and is $>0$ in the strict regime covered by the theorem), while the
right-hand side is $\le 0$ by Lemmas~\ref{lem:skew} and \ref{lem:monotone-summand}
(and is $<0$ whenever one of those inequalities is strict). This contradiction shows that
no strict inequality among the $\sigma_i$ is possible. Hence
\begin{equation}\label{eq:equal-sv}
\sigma_1=\cdots=\sigma_d=:\sigma_\star>0.
\end{equation}

Substituting \eqref{eq:equal-sv} into \eqref{eq:FOC-sigma} and interpreting
$r_N(\sigma_\star,\sigma_\star)$ by the limit \eqref{eq:gradSN-sigma-limit} yields
exactly \eqref{eq:balance-equil}. By Lemma~\ref{lem:balance-unique}, the balance equation
\eqref{eq:balance-equil} has a unique solution $\sigma_\star>0$, hence the equilibrium
$\sigma=(\sigma_\star,\dots,\sigma_\star)$ in $\mathcal S_d$ is unique.

Finally, under the standing assumptions the spectral energy $E$ is convex on $\mathcal S_d$,
and $S_N$ is concave on $(\mathcal S_d,\iota)$ by Theorem~\ref{thm:concavity-gap}. Therefore
$F_\beta$ is convex on $\mathcal S_d$, so its unique critical point is a
global minimizer.
\end{proof}

\begin{remark}[Uniqueness by symmetry]
If $g'>0$ and $f$ is strictly convex on $(0,\infty)$, then $F_\beta$ is strictly
convex in the variables $\sigma=(\sigma_1,\dots,\sigma_d)$. Since $F_\beta$ is
invariant under permutations of the $\sigma_i$, any permutation of a minimizer
is again a minimizer. Strict convexity then forces this permutation to fix the
minimizer, so it must be the identity. Hence all singular values coincide, and
the minimizer in $\mathcal{S}_d$ is unique.
\end{remark}

\subsection{Proofs of Lemmas}\label{subsec:equilibria-lemmas}
\begin{proof}[Proof of Lemma~\ref{lem:monotone-summand}]
Differentiating \eqref{eq:rN-def} in $a$ gives the kernel $q_N(a,b)$ from Lemma~\ref{lem:block-sigma}. By Lemma~\ref{lem:signs}, $q_N(a,b)<0$ for $a\ne b$, hence $r_N(\cdot,b)$ is strictly decreasing.
\end{proof}

\begin{proof}[Proof of Lemma~\ref{lem:skew}]
Let $\alpha:=2/N\in(0,1]$ and write $a=rb$ with $r>1$. Using
\begin{equation}\label{eq:skew-algebra}
\frac{a+b}{a^2-b^2}=\frac{1}{a-b}=\frac{1}{b(r-1)},
\qquad
\frac{a^{\alpha-1}+b^{\alpha-1}}{a^\alpha-b^\alpha}
=\frac{1}{b}\,\frac{r^{\alpha-1}+1}{(r^\alpha-1)}
=\frac{1}{b}\,\frac{r^{\alpha-1}+1}{(r-1)h_\alpha(r)},
\end{equation}
where $h_\alpha(r):=\dfrac{r^\alpha-1}{r-1}$, we obtain
\begin{equation}\label{eq:skew-rewrite}
r_N(a,b)-r_N(b,a)
=\frac{1}{b(r-1)}\left(1-\frac{1}{N}\,\frac{r^{\alpha-1}+1}{h_\alpha(r)}\right).
\end{equation}
Since $t\mapsto t^{\alpha-1}$ is decreasing on $[1,\infty)$ and
$h_\alpha(r)=\dfrac{1}{r-1}\int_1^r \alpha t^{\alpha-1}\,dt$, the trapezoid bound gives
\begin{equation}\label{eq:trapezoid-bound}
h_\alpha(r)\ \le\ \frac{\alpha}{2}\big(1+r^{\alpha-1}\big).
\end{equation}
Thus $\dfrac{r^{\alpha-1}+1}{h_\alpha(r)}\ge \dfrac{2}{\alpha}=N$, so the bracket in \eqref{eq:skew-rewrite} is $\le 0$, with equality only when $\alpha=1$ (i.e.\ $N=2$) or $r=1$ (i.e.\ $a=b$).
\end{proof}

\begin{proof}[Proof of Lemma \ref{lem:balance-unique}]
Define the left-hand side of \eqref{eq:balance-unique-1} as
\begin{equation}\label{eq:balance-unique-L}
L(\sigma)=g'\!\big(d\,f(\sigma)\big)\,f'(\sigma),
\end{equation}
and the right-hand side as
\begin{equation}\label{eq:balance-unique-R}
R(\sigma)=\beta^{-1}\,\frac{d-1}{2\sigma}\Big(1-\frac{1}{N}\Big).
\end{equation}

Under the standing assumptions, $g''\ge 0$ and $f''\ge 0$, so differentiating \eqref{eq:balance-unique-L} yields
\begin{equation}\label{eq:L-derivative}
L'(\sigma)
=d\,g''\!\big(d\,f(\sigma)\big)\,[f'(\sigma)]^2
\;+\;
g'\!\big(d\,f(\sigma)\big)\,f''(\sigma)
\;>\;0.
\end{equation}
Thus $L(\sigma)$ is strictly increasing on $(0,\infty)$.

On the other hand, \eqref{eq:balance-unique-R} satisfies
\begin{equation}\label{eq:R-derivative}
R'(\sigma)
=-\,\beta^{-1}\,\frac{d-1}{2\sigma^2}\Big(1-\frac{1}{N}\Big)
\;<\;0,
\end{equation}
so $R(\sigma)$ is strictly decreasing on $(0,\infty)$.

A strictly increasing continuous function and a strictly decreasing continuous function can intersect at most once.
Thus \eqref{eq:balance-unique-1} has at most one solution.

Existence follows because  
\begin{equation}\label{eq:L-limits}
\lim_{\sigma\downarrow 0} L(\sigma) = L(0^+) \ge 0,
\qquad
\lim_{\sigma\to\infty} L(\sigma)=+\infty,
\end{equation}
and
\begin{equation}\label{eq:R-limits}
\lim_{\sigma\downarrow 0} R(\sigma)=+\infty,
\qquad
\lim_{\sigma\to\infty} R(\sigma)=0.
\end{equation}
Therefore $L$ and $R$ cross exactly once.

Hence \eqref{eq:balance-unique-1} has a unique solution $\sigma_\star>0$.
\end{proof}

\subsection{Local convergence rates}\label{subsec:local-rates}
We now prove Theorem~\ref{thm:local-rates} by linearizing \eqref{eq:flow} at the equilibrium identified in Theorem~\ref{thm:isotropic}.
Let $\vec\sigma_\star:=(\sigma_\star,\dots,\sigma_\star)$ and note that $\sigma_\star>0$ by \eqref{eq:balance-equil}.
The argument uses the matrices $\nabla^2_\sigma E$ and $\nabla^2_\sigma S_N$ and the invariant splitting
$\mathrm{span}\{\mathbf 1\}\oplus \mathrm{span}\{\mathbf 1\}^\perp$.

For convenience we recall the limits from Lemma~\ref{lem:hess-sigma} and set
\begin{equation}\label{eq:pq-star-repeat}
p_\star\ :=\ -\,\frac{1}{6\,\sigma_\star^{2}}\Big(1-\frac{1}{N^2}\Big),
\qquad
q_\star\ :=\ -\,\frac{1}{3\,\sigma_\star^{2}}\Big(1-\frac{3}{2N}+\frac{1}{2N^2}\Big).
\end{equation}

\begin{lemma}\label{lem:HS-sigma-equal}
Let $H_S:=\nabla^2_\sigma S_N(\vec\sigma_\star)$. Then
\begin{equation}\label{eq:HS-entries}
(H_S)_{ij}=
\begin{cases}
(d-1)\,q_\star, & i=j,\\[0.3ex]
p_\star, & i\neq j,
\end{cases}
\end{equation}
hence
\begin{align}
\theta_{\mathbf 1}(S_N)&=(d-1)\,\big(q_\star+p_\star\big),\label{eq:eigs-S-1}\\
\theta_{\perp}(S_N)&=(d-1)\,q_\star - p_\star,\label{eq:eigs-S-perp}
\end{align}
where $\theta_{\perp}(S_N)$ has multiplicity $d-1$.
\end{lemma}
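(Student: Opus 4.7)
The plan is to assemble Lemma~\ref{lem:HS-sigma-equal} from ingredients already established earlier in the paper, with essentially no new calculation beyond a standard eigenvalue computation. I would proceed in three short steps.

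First, I would record the entries of $H_S$ at the isotropic point. By Lemma~\ref{lem:hess-sigma}, for any $\sigma\in\mathcal{S}_d^\circ$,
\begin{equation}
(\nabla^2_\sigma S_N(\sigma))_{ij}
=\begin{cases} p_N(\sigma_i,\sigma_j), & i\ne j,\\ \sum_{k\ne i} q_N(\sigma_i,\sigma_k), & i=j,\end{cases}
\end{equation}
and this expression extends continuously (indeed real-analytically, by Theorem~\ref{thm:analyticity}) across the coincidence loci. Substituting $\sigma=\vec\sigma_\star$ and using the diagonal limits from Lemma~\ref{lem:signs}, namely $p_N(\sigma_\star,\sigma_\star)=p_\star$ and $q_N(\sigma_\star,\sigma_\star)=q_\star$, immediately gives the claimed form~\eqref{eq:HS-entries}: off-diagonal entry $p_\star$ and diagonal entry $(d-1)q_\star$.

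Second, I would diagonalize. The matrix $H_S$ is a constant-diagonal, constant-off-diagonal matrix, so it lies in the span of the identity $I_d$ and the all-ones matrix $J_d$. Writing
\begin{equation}
H_S \;=\; \bigl((d-1)q_\star - p_\star\bigr)\,I_d \;+\; p_\star\,J_d,
\end{equation}
and using the well-known spectrum of $J_d$ (eigenvalue $d$ on $\mathrm{span}\{\mathbf{1}\}$, eigenvalue $0$ on $\mathrm{span}\{\mathbf{1}\}^\perp$ with multiplicity $d-1$), the eigenvalues of $H_S$ read off as $(d-1)q_\star - p_\star + d\,p_\star = (d-1)(q_\star+p_\star)$ on $\mathbf{1}$ and $(d-1)q_\star - p_\star$ on $\mathbf{1}^\perp$, matching \eqref{eq:eigs-S-1}–\eqref{eq:eigs-S-perp}.

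There is no real obstacle here; the only conceptual point that needs care is that $\vec\sigma_\star$ lies on the coincidence locus where the raw summands in Lemma~\ref{lem:hess-sigma} are defined only by continuous extension. This is exactly the content of Theorem~\ref{thm:analyticity} together with the finite-limit statements in Lemma~\ref{lem:signs}, so invoking those results justifies simply plugging $\sigma_i=\sigma_k=\sigma_\star$ into the formulas. Everything else is linear algebra on a rank-one perturbation of a multiple of the identity.
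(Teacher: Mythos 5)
Your proposal is correct and follows the same route as the paper: substitute the diagonal limits $p_\star,q_\star$ from Lemma~\ref{lem:signs} into the entry formulas of Lemma~\ref{lem:hess-sigma}, then read off the spectrum of the resulting constant-diagonal, constant-off-diagonal matrix. The explicit $I_d$--$J_d$ decomposition you write out is simply the unpacking of the paper's appeal to the ``standard spectrum'' of such matrices.
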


\begin{lemma}\label{lem:HE-sigma-equal}
Let $E$ be a spectral energy as in \eqref{eq:spectral-energy}, and let $H_E:=\nabla^2_\sigma E(\vec\sigma_\star)$. Then
\begin{equation}\label{eq:h1h2-HE}
(H_E)_{ii}=h_1,\qquad (H_E)_{ij}=h_2\ (i\neq j),
\end{equation}
where
\begin{equation}\label{eq:h1h2-formulas-HE}
h_1\ =\ g''\!\big(d\,f(\sigma_\star)\big)\,[f'(\sigma_\star)]^2\ +\ g'\!\big(d\,f(\sigma_\star)\big)\,f''(\sigma_\star),
\qquad
h_2\ =\ g''\!\big(d\,f(\sigma_\star)\big)\,[f'(\sigma_\star)]^2.
\end{equation}
Consequently,
\begin{align}
\theta_{\mathbf 1}(E)&=h_1+(d-1)h_2,\label{eq:eigs-E-1}\\
\theta_{\perp}(E)&=h_1-h_2,\label{eq:eigs-E-perp}
\end{align}
where $\theta_{\perp}(E)$ has multiplicity $d-1$.
\end{lemma}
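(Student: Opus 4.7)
The plan is to compute the Euclidean Hessian of the composite function $E(\sigma)=g\!\left(\sum_{k=1}^d f(\sigma_k)\right)$ directly by the chain rule, evaluate at the isotropic point $\vec\sigma_\star$, and then diagonalize the resulting $a I + b(J-I)$ structure, where $J$ is the all-ones matrix.

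First I would record the first derivative $\partial_{\sigma_i} E(\sigma)=g'\!\big(F(\sigma)\big)\,f'(\sigma_i)$, with $F(\sigma):=\sum_k f(\sigma_k)$, and then differentiate again to obtain
\begin{equation*}
\partial_{\sigma_i}\partial_{\sigma_j} E(\sigma)
\;=\;g''\!\big(F(\sigma)\big)\,f'(\sigma_i)\,f'(\sigma_j)
\;+\;\delta_{ij}\,g'\!\big(F(\sigma)\big)\,f''(\sigma_i).
\end{equation*}
Evaluating at $\sigma=\vec\sigma_\star$, the argument of $g$ and $g'$ reduces to $d\,f(\sigma_\star)$ and every $f'(\sigma_i)$ reduces to $f'(\sigma_\star)$, so the mixed term contributes $g''(d\,f(\sigma_\star))\,[f'(\sigma_\star)]^2$ uniformly in $(i,j)$, and the Kronecker term adds $g'(d\,f(\sigma_\star))\,f''(\sigma_\star)$ only on the diagonal. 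This gives exactly \eqref{eq:h1h2-HE} with the values of $h_1,h_2$ in \eqref{eq:h1h2-formulas-HE}.

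For the eigenvalue statement I would write $H_E = (h_1-h_2)\,I + h_2\,\mathbf 1\mathbf 1^T$, a rank-one perturbation of a scalar matrix. The vector $\mathbf 1$ is an eigenvector with eigenvalue $(h_1-h_2)+d\,h_2=h_1+(d-1)h_2$, giving $\theta_{\mathbf 1}(E)$; every vector in $\mathrm{span}\{\mathbf 1\}^\perp$ lies in the kernel of $\mathbf 1\mathbf 1^T$ and hence is an eigenvector with eigenvalue $h_1-h_2$, giving $\theta_{\perp}(E)$ with multiplicity $d-1$, which matches \eqref{eq:eigs-E-1}–\eqref{eq:eigs-E-perp}.

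There is no substantive obstacle here: the computation is a direct application of the chain rule followed by the standard spectral decomposition of a matrix of the form $aI+bJ$. The only point that warrants a remark is that the uniform structure of $H_E$ at $\vec\sigma_\star$ is a consequence of permutation symmetry of $E$ together with the fact that the evaluation point is fixed by the full symmetric group, which is what produces the $\mathrm{span}\{\mathbf 1\}\oplus\mathrm{span}\{\mathbf 1\}^\perp$ splitting that aligns with the one used for $H_S$ in Lemma~\ref{lem:HS-sigma-equal} and is thus ready to be combined in the proof of Theorem~\ref{thm:local-rates}.
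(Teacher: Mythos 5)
Your proof is correct and follows essentially the same route as the paper: a direct chain-rule computation of $\nabla^2_\sigma E$, evaluation at the isotropic point $\vec\sigma_\star$, and the standard spectral decomposition of a matrix with constant diagonal and constant off-diagonal entries. The paper's proof is more terse, but both arguments are identical in substance.
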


\begin{proof}[Proof of Theorem~\ref{thm:local-rates}]
At $\vec\sigma_\star$ one has $\nabla_\sigma F_\beta(\vec\sigma_\star)=0$. Linearizing \eqref{eq:flow} at $\vec\sigma_\star$ gives the Jacobian
\begin{equation}\label{eq:Jac}
J\ =\ -\,N\,\sigma_\star^{\,2-2/N}\,\nabla^2_\sigma F_\beta(\vec\sigma_\star)
\ =\ -\,N\,\sigma_\star^{\,2-2/N}\,\Big(H_E-\beta^{-1}H_S\Big).
\end{equation}
By Lemmas~\ref{lem:HS-sigma-equal}--\ref{lem:HE-sigma-equal}, $H_E$ and $H_S$ share the invariant splitting
$\mathrm{span}\{\mathbf 1\}\oplus \mathrm{span}\{\mathbf 1\}^\perp$ and have eigenvalues
\eqref{eq:eigs-E-1}--\eqref{eq:eigs-E-perp} and \eqref{eq:eigs-S-1}--\eqref{eq:eigs-S-perp} on the respective subspaces.
Substituting into \eqref{eq:Jac} yields the eigenvalues stated in the theorem.
\end{proof}

\begin{remark}[Explicit rates in $(N,d,\beta)$]\label{rem:explicit-rates}
Substituting the eigenvalues from Lemma~\ref{lem:HS-sigma-equal} gives
\begin{align}
\rho_{\mathbf 1}
&=\ -\,N\,\sigma_\star^{\,2-2/N}\,\theta_{\mathbf 1}(E)
\ -\ N\,\sigma_\star^{-2/N}\,\frac{(d-1)}{2\beta}\left(1-\frac{1}{N}\right),\label{eq:rho-1-explicit}\\[0.4ex]
\rho_{\perp}
&=\ -\,N\,\sigma_\star^{\,2-2/N}\,\theta_{\perp}(E)
\ -\ N\,\sigma_\star^{-2/N}\,\frac{1}{6\beta}\left(2d-3-\frac{3(d-1)}{N}+\frac{d}{N^2}\right).\label{eq:rho-perp-explicit}
\end{align}
The energetic contribution enters only through $\theta_{\mathbf 1}(E)$ and $\theta_{\perp}(E)$ from Lemma \ref{lem:HE-sigma-equal}.
\end{remark}

\begin{remark}[Rate--limiting step]
The splitting $\mathbb{R}^d=\mathrm{span}\{\mathbf 1\}\oplus \mathrm{span}\{\mathbf 1\}^\perp$
diagonalizes the linearization of the flow at $\vec\sigma_\star$.
Under the assumptions $g''\ge 0$ and $f'\ge 0$,
\begin{equation}\label{eq:slow-rate-ineq}
\theta_{\perp}(E)\ \le\ \theta_{\mathbf 1}(E)\qquad\text{and}\qquad 
\theta_{\perp}(S_N)\ >\ \theta_{\mathbf 1}(S_N).
\end{equation}
Hence $\rho_\perp$ is the least negative eigenvalue: perturbations that change
the singular values relative to one another decay slowest, while uniform
scaling relaxes faster. Thus the approach to $\{\sigma_1=\cdots=\sigma_d\}$
determines the rate of convergence.
\end{remark}

\subsection{Proofs of Lemmas}\label{subsec:rates-lemmas}
\begin{proof}[Proof of Lemma~\ref{lem:HS-sigma-equal}]
The limits in Lemma~\ref{lem:hess-sigma} give
\begin{equation}\label{eq:HS-proof-entries}
(H_S)_{ij}=p_\star\quad(i\neq j),
\qquad
(H_S)_{ii}=\sum_{k\neq i} q_\star=(d-1)\,q_\star,
\end{equation}
which yields \eqref{eq:HS-entries}. The eigenvalue formulas follow from the standard spectrum of a matrix with constant diagonal and constant off--diagonal entries.
\end{proof}

\begin{proof}[Proof of Lemma~\ref{lem:HE-sigma-equal}]
Write $H(\sigma):=\sum_{k=1}^d f(\sigma_k)$. Then
\begin{equation}\label{eq:E-derivs}
\partial_i E = g'(H)\,f'(\sigma_i),
\qquad
\partial_{ij}^2 E = g''(H)\,f'(\sigma_i) f'(\sigma_j) \;+\; g'(H)\,f''(\sigma_i)\,\delta_{ij}.
\end{equation}
Evaluating \eqref{eq:E-derivs} at $\vec\sigma_\star$ yields \eqref{eq:h1h2-HE}--\eqref{eq:h1h2-formulas-HE}.
\end{proof}

\section{An Exact Solution to the Gradient Flow}\label{sec:exact-flow}

\subsection{Overview}
We prove Theorem~\ref{thm:diagonal}. We first rewrite the flow \eqref{eq:flow} in the $\lambda$-variables under which $g^N_\sigma$ becomes a flat metric. We then write $\lambda_i=u_i s$ and $\lambda_d=s$,
which makes it transparent that $u_1=\cdots=u_{d-1}=1$ is an invariant set. Restricting to this set yields the
scalar ODE \eqref{eq:s-ode}, and integrating it gives the quadrature \eqref{eq:s-quadrature}.
For completeness we record the full $(u_i,s)$ system, although only its restriction to $u_i\equiv 1$ is needed for the theorem.

Introduce the change of variables
\begin{equation}
\lambda_i=\sigma_i^{1/N},\qquad i=1,\dots,d,\qquad 
\Lambda=\Sigma^{1/N}=\diag(\lambda_1,\dots,\lambda_d),
\end{equation}
so that $d\sigma_i=N\lambda_i^{N-1}\,d\lambda_i$. 
Then the metric flattens to
\begin{equation}\label{eq:metric-lambda}
g^N_\sigma \;=\; N\sum_{i=1}^d (d\lambda_i)^2.
\end{equation}

\begin{lemma}\label{lem:lambda-flow}
In the variables $\lambda_i=\sigma_i^{1/N}$, the flow \eqref{eq:flow} becomes
\begin{equation}\label{eq:lambda-gradflow}
\dot\lambda_i \;=\; -\,\frac{1}{N}\,\frac{\partial}{\partial \lambda_i}\,F_\beta\big(\sigma(\lambda)\big),
\qquad i=1,\dots,d.
\end{equation}
For the Schatten--$p$ energy \eqref{eq:schatten-energy}, this reads
\begin{equation}\label{eq:lambda-system}
\dot\lambda_i
= -\,\lambda_i^{\,Np-1}
+ \frac{1}{\beta}\sum_{k\neq i}
\left(
\frac{\lambda_i^{\,2N-1}}{\lambda_i^{\,2N}-\lambda_k^{\,2N}}
-\frac{\lambda_i}{N\big(\lambda_i^2-\lambda_k^2\big)}
\right),\qquad i=1,\dots,d.
\end{equation}
\end{lemma}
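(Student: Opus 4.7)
The plan is to handle this lemma in two separate steps matching its two assertions, both by direct substitution using earlier results.

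For the first assertion, I would use the flatness of the metric in $\lambda$-coordinates noted immediately before the lemma, namely $g^N_\sigma = N\sum_i(d\lambda_i)^2$. Since this is a Euclidean metric with inverse metric tensor $N^{-1}\,\delta^{ij}$, the gradient flow of any smooth function $F$ in $\lambda$-coordinates is $\dot\lambda_i=-\tfrac{1}{N}\partial_{\lambda_i}F(\sigma(\lambda))$, which is exactly \eqref{eq:lambda-gradflow}. As a sanity check (and an alternative derivation) I would verify this directly from \eqref{eq:flow}: the chain rule gives $\dot\sigma_i=N\lambda_i^{N-1}\dot\lambda_i$ and $\partial_{\sigma_i}F_\beta=(N\lambda_i^{N-1})^{-1}\partial_{\lambda_i}F_\beta$, while $\sigma_i^{2-2/N}=\lambda_i^{2N-2}$. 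Substituting into \eqref{eq:flow} and simplifying the powers of $\lambda_i$ yields precisely $\dot\lambda_i=-\tfrac{1}{N}\partial_{\lambda_i}F_\beta$.

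For the second assertion, I would specialize $F_\beta=E_p-\beta^{-1}S_N$ to the Schatten $p$-energy. Writing $\sigma_i^p=\lambda_i^{Np}$, one has $E_p=\tfrac{1}{p}\sum_i\lambda_i^{Np}$ and hence $\partial_{\lambda_i}E_p=\lambda_i^{Np-1}$. The entropy contribution is already computed in \eqref{eq:gradSN-lambda}, namely
\begin{equation*}
\partial_{\lambda_i}S_N
=\sum_{k\ne i}\left(N\,\frac{\lambda_i^{2N-1}}{\lambda_i^{2N}-\lambda_k^{2N}}-\frac{\lambda_i}{\lambda_i^{2}-\lambda_k^{2}}\right).
\end{equation*}
Plugging these into \eqref{eq:lambda-gradflow} and distributing the $1/N$ factor through the entropy sum produces \eqref{eq:lambda-system}, after combining the two $N$'s in the first term of the summand into the second term of the summand.

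There is no real obstacle here: the lemma is essentially a bookkeeping step. The only point requiring some care is tracking the powers of $N$ and $\lambda_i$ when combining the chain rule with the formulas for $\partial_{\lambda_i}S_N$ and $\partial_{\lambda_i}E_p$; everything else is algebraic. The purpose of the lemma in the larger argument is to reveal that the $\lambda$-variables give a flat ambient geometry, which is what makes the subsequent reduction to the scale variable $s$ in Theorem~\ref{thm:diagonal} transparent.
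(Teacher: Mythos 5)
Your treatment of the first assertion is correct and matches the paper's proof: the flatness of $g^N_\sigma$ in $\lambda$-coordinates is exactly what the paper invokes, and your chain-rule sanity check from \eqref{eq:flow} is also valid. The approach to the second assertion is in the same spirit as the paper's (the paper computes $\partial_{\sigma_i}E_p=\sigma_i^{p-1}$ and pushes forward; you differentiate $E_p$ directly in $\lambda$), so there is no conceptual divergence.

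However, there is an arithmetic error in the second assertion that, as written, would produce the wrong equation. From $E_p=\tfrac{1}{p}\sum_i\lambda_i^{Np}$ one has
\[
\partial_{\lambda_i}E_p \;=\; \frac{1}{p}\cdot Np\,\lambda_i^{Np-1}\;=\;N\,\lambda_i^{Np-1},
\]
not $\lambda_i^{Np-1}$ as you wrote. You have dropped the factor of $N$. Since the energy contribution in \eqref{eq:lambda-gradflow} is $-\tfrac{1}{N}\partial_{\lambda_i}E_p$, with your value you would obtain $-\tfrac{1}{N}\lambda_i^{Np-1}$ for the energy term, which fails to match the required $-\lambda_i^{Np-1}$ in \eqref{eq:lambda-system}. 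With the correct derivative the prefactor $\tfrac{1}{N}$ cancels the $N$ and the formula comes out right, so the plan is salvageable by this one-line fix; but as stated the computation does not close.
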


\begin{proof}
By \eqref{eq:metric-lambda}, $g^N_\sigma$ is a constant multiple of the Euclidean metric in $\lambda$,
so \eqref{eq:lambda-gradflow} follows from the definition of the gradient.
Using $\partial_{\lambda_i}=N\lambda_i^{N-1}\partial_{\sigma_i}$ and $\sigma_i=\lambda_i^N$ gives
$\dot\lambda_i=-\lambda_i^{N-1}\partial_{\sigma_i}F_\beta$.
For \eqref{eq:schatten-energy} one has $\partial_{\sigma_i}E=\sigma_i^{p-1}$, and substituting
$\partial_{\sigma_i}S_N$ from Lemma~\ref{lem:grad-SN-sigma} yields \eqref{eq:lambda-system}.
\end{proof}

\subsection{Reduction by a scale and ratios}
It is convenient to separate a common scale from the ratios.  Write
\begin{equation}\label{eq:scale-ratio}
\lambda_d=s>0,\qquad \lambda_i=u_i\,s,\ \ i=1,\dots,d-1,\qquad
u_1\ge\cdots\ge u_{d-1}\ge 1,
\end{equation}
so $(u,s)\in[1,\infty)^{d-1}\times(0,\infty)$ parameterize ordered $\lambda$.

For $d=2$, write $\lambda_1=u\,s$ and $\lambda_2=s$, with $u\ge 1$ and $s>0$.

\begin{lemma}\label{lem:d2}
Under $\lambda_1=u s$, $\lambda_2=s$, the system \eqref{eq:lambda-system} becomes
\begin{align}
\dot s
&=\,-\,s^{\,Np-1}
+\frac{1}{\beta}\,s^{-1}\!\left(-\frac{1}{u^{\,2N}-1}+\frac{1}{N(u^2-1)}\right),\label{eq:d2-s}\\[0.35em]
\dot u
&=\;s^{\,Np-2}\big(u-u^{\,Np-1}\big)
+\frac{1}{\beta}\,s^{-2}\!\left(\frac{u^{\,2N-1}+u}{u^{\,2N}-1}-\frac{2u}{N(u^2-1)}\right).\label{eq:d2-u}
\end{align}
\end{lemma}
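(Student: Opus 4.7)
The plan is to substitute the parameterization $\lambda_1=us$, $\lambda_2=s$ directly into the two-dimensional instance of the system \eqref{eq:lambda-system} and then use the product rule $\dot\lambda_1=\dot u\,s+u\,\dot s$ to solve for $\dot s$ and $\dot u$ separately. The computation is organized so that the energetic contribution (from $-\lambda_i^{Np-1}$) and the entropic contribution (the $1/\beta$ sum) are handled in parallel.

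First I would specialize \eqref{eq:lambda-system} to $d=2$ and read off the equation for $\dot\lambda_2=\dot s$. Setting $\lambda_1=us$ and $\lambda_2=s$ and exploiting the homogeneity of the entropic summand, the factors $s^{2N}$ and $s^2$ in the denominators $\lambda_1^{2N}-\lambda_2^{2N}=s^{2N}(u^{2N}-1)$ and $\lambda_1^2-\lambda_2^2=s^2(u^2-1)$ combine with the numerators $\lambda_2^{2N-1}=s^{2N-1}$ and $\lambda_2=s$ to produce the common scalar factor $s^{-1}$. Collecting signs (note $\lambda_2^{2N}-\lambda_1^{2N}=-s^{2N}(u^{2N}-1)$ and similarly for the quadratic denominator) gives exactly \eqref{eq:d2-s}.

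For $\dot u$, I would compute $\dot\lambda_1$ in $(u,s)$ by the same substitution, then use $\dot u=(\dot\lambda_1-u\,\dot s)/s$. The energetic piece contributes $-u^{Np-1}s^{Np-1}+u\,s^{Np-1}=s^{Np-1}(u-u^{Np-1})$, which after dividing by $s$ yields the first term of \eqref{eq:d2-u}. The entropic piece, after cancellation of the $s^{2N}$ and $s^2$ scales, becomes
\begin{equation*}
\frac{1}{\beta}s^{-1}\!\left(\frac{u^{2N-1}}{u^{2N}-1}-\frac{u}{N(u^2-1)}\right)-\frac{u}{\beta}s^{-1}\!\left(-\frac{1}{u^{2N}-1}+\frac{1}{N(u^2-1)}\right),
\end{equation*}
and combining fractions with common denominators yields $\frac{u^{2N-1}+u}{u^{2N}-1}-\frac{2u}{N(u^2-1)}$. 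Dividing once more by $s$ produces the stated $s^{-2}$ prefactor and gives \eqref{eq:d2-u}.

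I do not expect a real obstacle here: the lemma is essentially a substitution together with the product rule, and the homogeneity of each kernel under $(\lambda_1,\lambda_2)\mapsto(us,s)$ makes the scale separation exact. The only care needed is bookkeeping of signs in the two denominators $u^{2N}-1$ and $u^2-1$ (both positive since $u\ge 1$ and, at points where $u=1$, the individual singular terms are cancelled by the limits already used implicitly throughout Section~\ref{sec:analyticity}), and correctly combining the four entropic fractions that appear when evaluating $\dot\lambda_1-u\,\dot s$.
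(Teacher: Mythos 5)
Your computation is correct and follows exactly the approach the paper takes: substitute $\lambda_1=us$, $\lambda_2=s$ into the $d=2$ case of \eqref{eq:lambda-system} and use the product rule $\dot u=(\dot\lambda_1 s-\lambda_1\dot s)/s^2$; the paper simply states this in one line while you spell out the intermediate algebra, all of which checks out.
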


\begin{proof}
Insert $\lambda_1=u s$, $\lambda_2=s$ into \eqref{eq:lambda-system} and use
$\dot u=(\dot\lambda_1 s-\lambda_1\dot s)/s^2$.
\end{proof}

In the general case $d\ge 2$, the same change of variables
\eqref{eq:scale-ratio} yields, for $a\neq b>0$,
\begin{equation}\label{eq:varphiN}
\varphi_N(a,b)
:=\frac{a^{\,2N-1}}{a^{\,2N}-b^{\,2N}}
-\frac{a}{N(a^2-b^2)}.
\end{equation}

\begin{lemma}\label{lem:gd}
Under \eqref{eq:scale-ratio}, the system \eqref{eq:lambda-system} is equivalent to
\begin{align}
\dot s \;=\;& -\,s^{\,Np-1}
+\frac{1}{\beta}\,s^{-1}\sum_{j=1}^{d-1}\!\left(-\frac{1}{u_j^{\,2N}-1}+\frac{1}{N(u_j^2-1)}\right),
\label{eq:gd-s}\\[0.25em]
\dot u_i \;=\;& s^{\,Np-2}\big(u_i-u_i^{\,Np-1}\big)
+\frac{1}{\beta}\,s^{-2}\Bigg(\sum_{\substack{k=1\\k\neq i}}^{d-1}\!\varphi_N(u_i,u_k)
+\varphi_N(u_i,1)\;-\;u_i\sum_{j=1}^{d-1}\!\left(-\frac{1}{u_j^{\,2N}-1}+\frac{1}{N(u_j^2-1)}\right)\Bigg),
\label{eq:gd-ui}
\end{align}
for $i=1,\dots,d-1$.
\end{lemma}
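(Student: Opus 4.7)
The plan is to carry out the change of variables \eqref{eq:scale-ratio} directly in the $\lambda$-system \eqref{eq:lambda-system}, exploiting the fact that every term on the right-hand side has a well-defined homogeneity under simultaneous rescaling $\lambda_i\mapsto c\lambda_i$. Concretely, the energy contribution $\lambda_i^{Np-1}$ is homogeneous of degree $Np-1$, while each summand of the entropic part, both $\lambda_i^{2N-1}/(\lambda_i^{2N}-\lambda_k^{2N})$ and $\lambda_i/(N(\lambda_i^2-\lambda_k^2))$, is homogeneous of degree $-1$. Substituting $\lambda_i=u_i s$ for $i<d$ and $\lambda_d=s$ therefore factors out a clean power of $s$ in front of a function of the ratios $u=(u_1,\dots,u_{d-1})$ alone, after which it only remains to apply the chain rule $\dot u_i=\dot\lambda_i/s-u_i\dot s/s$ and substitute the resulting scalar ODE for $\dot s$.

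I would first derive \eqref{eq:gd-s}. Taking the $i=d$ component of \eqref{eq:lambda-system} and using $\lambda_d=s$, $\lambda_k=u_k s$ for $k<d$, each pair $(\lambda_d,\lambda_k)$ contributes
\begin{equation*}
\frac{s^{2N-1}}{s^{2N}-(u_k s)^{2N}}-\frac{s}{N(s^2-(u_k s)^2)}
= s^{-1}\!\left(-\frac{1}{u_k^{2N}-1}+\frac{1}{N(u_k^2-1)}\right),
\end{equation*}
which sums over $k=1,\dots,d-1$ to give exactly the bracketed expression in \eqref{eq:gd-s}, while the energy part becomes $-s^{Np-1}$.

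For the ratio equations \eqref{eq:gd-ui}, I split the sum over $k\neq i$ in the $i$-th component of \eqref{eq:lambda-system} into the contributions from indices $k<d$ with $k\neq i$, which produce $s^{-1}\varphi_N(u_i,u_k)$ with $\varphi_N$ as in \eqref{eq:varphiN}, and the contribution from $k=d$, which produces $s^{-1}\varphi_N(u_i,1)$. The energy part becomes $-(u_i s)^{Np-1}=-u_i^{Np-1}s^{Np-1}$. Assembling these in $\dot u_i=\dot\lambda_i/s-u_i\dot s/s$ and substituting the expression for $\dot s$ from \eqref{eq:gd-s} cancels one copy of $u_i s^{Np-1}/s$ against the $-u_i(-s^{Np-1})/s$ generated by $-u_i\dot s/s$, leaving precisely the first term $s^{Np-2}(u_i-u_i^{Np-1})$ of \eqref{eq:gd-ui}, together with the $u_i$-weighted subtraction of the bracket from \eqref{eq:gd-s}, which is the third group inside the parentheses of \eqref{eq:gd-ui}.

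The proof is therefore a direct calculation with no analytical subtlety; the only real obstacle is the bookkeeping of which pairs involve $\lambda_d=s$ and which pairs involve two ratio variables, since those two classes factor differently and must be tracked separately. Homogeneity guarantees that every term carries either a single power of $s$ (from the energy) or $s^{-1}$ (from the entropy), so after applying the chain rule the whole system cleanly collapses to the stated form with the expected factors $s^{Np-2}$ and $s^{-2}$.
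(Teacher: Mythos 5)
Your proposal is correct and takes essentially the same approach as the paper: the paper's proof is precisely the one‑line instruction to substitute $\lambda_d=s$, $\lambda_i=u_is$, apply $\dot s=\dot\lambda_d$ and $\dot u_i=(\dot\lambda_i s-\lambda_i\dot s)/s^2$, and identify the pair terms with $\varphi_N$. One small slip in your narration: there is no cancellation between $-u_i^{Np-1}s^{Np-2}$ (from $\dot\lambda_i/s$) and $+u_i s^{Np-2}$ (from $-u_i\dot s/s$); these two distinct terms simply add to give $s^{Np-2}(u_i-u_i^{Np-1})$, as your final formula correctly records.
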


\begin{proof}
Use $\dot s=\dot\lambda_d$ and $\dot u_i=(\dot\lambda_i s-\lambda_i\dot s)/s^2$, and simplify the pair terms
in \eqref{eq:lambda-system} using \eqref{eq:scale-ratio} and \eqref{eq:varphiN}.
\end{proof}

\begin{lemma}\label{lem:collisions}
The function $\varphi_N$ in \eqref{eq:varphiN} satisfies
\begin{equation}\label{eq:varphi-limit}
\lim_{b\to a}\varphi_N(a,b)=\frac{N-1}{2N\,a},\qquad a>0.
\end{equation}
In particular,
\begin{equation}\label{eq:diag-kernel-limit}
\lim_{u\to 1}\left(-\frac{1}{u^{\,2N}-1}+\frac{1}{N(u^2-1)}\right)=\frac{N-1}{2N}.
\end{equation}
Consequently $u_1=\cdots=u_{d-1}=1$ is an invariant set for \eqref{eq:gd-ui}.
\end{lemma}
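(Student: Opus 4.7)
\noindent\textbf{Proof plan for Lemma~\ref{lem:collisions}.}
The plan is to handle the two limits by direct Taylor expansion and then read off invariance from \eqref{eq:gd-ui} by an exact cancellation between the diagonal pair terms and the scale correction. I would begin with the limit \eqref{eq:varphi-limit}. Writing $b=a-h$ with $h\downarrow 0$ and using the standard expansions
\begin{equation}
a^{2N}-b^{2N}=2Na^{2N-1}h-N(2N-1)a^{2N-2}h^2+O(h^3),
\qquad a^2-b^2=2ah-h^2,
\end{equation}
I would expand each term of $\varphi_N(a,b)$ separately:
\begin{equation}
\frac{a^{2N-1}}{a^{2N}-b^{2N}}=\frac{1}{2Nh}+\frac{2N-1}{4Na}+O(h),
\qquad
\frac{a}{N(a^2-b^2)}=\frac{1}{2Nh}+\frac{1}{4Na}+O(h).
\end{equation}
Subtracting cancels the singular $1/h$ term and leaves the stated limit $\frac{N-1}{2Na}$.

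For \eqref{eq:diag-kernel-limit}, the cleanest route is to observe that
\begin{equation}
-\frac{1}{u^{2N}-1}+\frac{1}{N(u^2-1)}
=\lim_{a\to 1}\;\frac{1}{a}\,\varphi_N(a,u/a)\quad\text{(schematically),}
\end{equation}
but it is simpler to repeat the same expansion with $u=1+\varepsilon$: one finds $u^2-1=2\varepsilon+O(\varepsilon^2)$ and $u^{2N}-1=2N\varepsilon+N(2N-1)\varepsilon^2+O(\varepsilon^3)$, and the difference of reciprocals evaluates to $\frac{N-1}{2N}$ after cancellation of the $1/\varepsilon$ singularity. Both of these are routine calculus; the only care needed is to carry one subleading term in each expansion so the finite part is correct.

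For invariance of $\{u_1=\cdots=u_{d-1}=1\}$ under \eqref{eq:gd-ui}, I would evaluate the right-hand side term by term at the point $u_1=\cdots=u_{d-1}=1$. The energy piece gives $s^{Np-2}(u_i-u_i^{Np-1})=0$ since $1-1=0$. In the entropic piece, the pair sum $\sum_{k\neq i}\varphi_N(u_i,u_k)+\varphi_N(u_i,1)$ contains $d-1$ terms, each of which tends to $\varphi_N(1,1)=\frac{N-1}{2N}$ by \eqref{eq:varphi-limit}, giving total $(d-1)\frac{N-1}{2N}$. The scale-correction term $u_i\sum_{j=1}^{d-1}\!\left(-\frac{1}{u_j^{2N}-1}+\frac{1}{N(u_j^2-1)}\right)$ evaluates, using \eqref{eq:diag-kernel-limit} and $u_i=1$, to the same quantity $(d-1)\frac{N-1}{2N}$. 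The two cancel exactly, so $\dot u_i=0$ for every $i$, and the diagonal set is invariant.

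The only mild obstacle is bookkeeping: the pair kernels are singular at coincidence, so the limits must be taken before substituting, and one should note that \eqref{eq:gd-ui} is defined on $\{u_1>\cdots>u_{d-1}>1\}$ and extends continuously to the diagonal precisely because of the two limits just established. Once that continuous extension is in place, the exact cancellation shows that the extended vector field vanishes along the diagonal, which is the invariance claim used to derive the scalar ODE \eqref{eq:s-ode}.
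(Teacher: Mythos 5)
Your proposal is correct and follows essentially the same approach as the paper: a direct Taylor expansion at coincidence to cancel the simple pole and extract the finite part, followed by the exact cancellation between the pair terms and the scale-correction term to establish invariance. The paper streamlines the two limits into one by factoring $\varphi_N(a,b)=\frac{1}{a}\bigl(\frac{1}{1-r^{2N}}-\frac{1}{N(1-r^2)}\bigr)$ with $r=b/a$ and expanding at $r=1$, but this is only a cosmetic difference from your expansions in $h=a-b$ and $\varepsilon=u-1$.
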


\begin{proof}
Write $\varphi_N(a,b)=\frac{1}{a}\left(\frac{1}{1-r^{2N}}-\frac{1}{N}\frac{1}{1-r^2}\right)$ with $r=b/a$ and expand at $r=1$ to obtain \eqref{eq:varphi-limit}, hence \eqref{eq:diag-kernel-limit}.
Substituting $u_i\equiv 1$ into \eqref{eq:gd-ui} and using \eqref{eq:diag-kernel-limit} gives $\dot u_i=0$.
\end{proof}

\subsection{Proof of Theorem~\ref{thm:diagonal}}
\begin{proof}[Proof of Theorem~\ref{thm:diagonal}]
By Lemma~\ref{lem:collisions}, the set $u_1=\cdots=u_{d-1}=1$ is invariant for \eqref{eq:gd-ui}. Along this set,
\eqref{eq:gd-s} and \eqref{eq:diag-kernel-limit} give
\begin{equation}\label{eq:s-ode-from-gd}
\dot s
=\,-\,s^{\,Np-1}+\frac{1}{\beta}\,s^{-1}\sum_{j=1}^{d-1}\frac{N-1}{2N}
=\,-\,s^{\,\nu-1}+\beta^{-1}\,\frac{d-1}{2}\Big(1-\frac{1}{N}\Big)\frac{1}{s},
\end{equation}
where $\nu=Np$. For the Schatten energy, \eqref{eq:balance-equil} reduces to
$\sigma_\star^p=\beta^{-1}\,\frac{d-1}{2}\big(1-\frac{1}{N}\big)$, hence $s_\star^\nu=\sigma_\star^p$.
Therefore \eqref{eq:s-ode-from-gd} is exactly \eqref{eq:s-ode}.

Separating variables in \eqref{eq:s-ode} gives
\begin{equation}\label{eq:s-separate}
t-t_0=\int_{s_0}^{s(t)}\frac{s\,ds}{s_\star^{\,\nu}-s^{\,\nu}},\qquad s_0=s(t_0).
\end{equation}
With $z=(s/s_\star)^\nu$ one has $s\,ds=\frac{s_\star^2}{\nu}z^{\frac{2}{\nu}-1}\,dz$, so \eqref{eq:s-separate} becomes
\begin{equation}\label{eq:z-separate}
t-t_0=\frac{s_\star^{\,2-\nu}}{\nu}\int_{z_0}^{z(t)}\frac{z^{\frac{2}{\nu}-1}}{1-z}\,dz,
\qquad z_0=\left(\frac{s_0}{s_\star}\right)^\nu.
\end{equation}
Using the standard hypergeometric primitive \cite[§8.17]{olver2010nist},
\begin{equation}\label{eq:hypergeo-primitive}
\int \frac{z^{a-1}}{1-z}\,dz=\frac{z^a}{a}\,{}_2F_1(a,1;a+1;z)+\mathrm{const},\qquad a>0,
\end{equation}
with $a=\frac{2}{\nu}$, and substituting back $z=(s/s_\star)^\nu$, yields exactly the expression \eqref{eq:T-def}
and hence the quadrature \eqref{eq:s-quadrature}.
\end{proof}

\section{Discussion} \label{sec:discussion}
\subsection{Overview}
We collect three messages. 
The first is dynamical: the reduction to $\mathcal{S}_d$ yields exactly solvable flows for spectral energies and exposes open challenges for non-spectral losses such as matrix completion. 
The second is learning-theoretic: the dynamics on $\mathcal{S}_d$ provide analytic benchmarks for gradient descent and suggest similarities with interior-point methods \cite{karmarkar1984new, karmarkar1990riemannian}. 
The third concerns the analogy with random matrix theory: the DLN equilibrium equations resemble Coulomb–gas conditions but lead to equilibria with $\sigma_1=\cdots=\sigma_d$ and no repulsion.

\subsection{Energies without symmetry}
For loss functions that are not spectral the dynamics no longer close on $\mathcal{S}_d$, since the dynamics of the singular values and singular vectors are coupled.
An important example is the loss function for matrix completion. 
Given $\Omega\subset\{1,\dots,d\}^2$ and observed entries $a_{ij}$,
\begin{equation}
E(X)=\frac12\sum_{(i,j)\in\Omega}\bigl(X_{ij}-a_{ij}\bigr)^2.    
\end{equation}
This loss function depends explicitly on the entries of $X$, not just its singular values. It typically has an affine space of minimizers which may be foliated by rank. Understanding convergence to rank-deficient minimizers and the role of $S_N$ as a regularizer in this setting remains open.

\subsection{Mean‑field limit}\label{subsec:disc-rmt} 
Fix finite depth $N$ and let $E(\sigma)=E_p(\sigma)=\tfrac{1}{p}\sum_i \sigma_i^p$. 
The first‑order condition at equilibrium is
\begin{equation}\label{eq:FOC-finite-d}
\sigma_i^{\,p-1}
=\frac{1}{\beta}\sum_{j\ne i}\!\left(
\frac{\sigma_i}{\sigma_i^2-\sigma_j^2}
-\frac{\sigma_i^{2/N-1}}{N\big(\sigma_i^{2/N}-\sigma_j^{2/N}\big)}
\right)\!,\qquad i=1,\dots,d.
\end{equation}

To probe the infinite–width and zero-temperature regime (i.e., $d,\beta\to\infty$ with $N$ fixed),
we rescale by the common equilibrium scale and write
\begin{equation}
x_i \propto \frac{\sigma_i}{\sigma_\star},\qquad
\mu_d=\frac{1}{d}\sum_{i=1}^d \delta_{x_i},
\end{equation}
where $\sigma_\star$ is given by \eqref{eq:eq-scale}, and pass formally to a continuum limit $\mu$ on $(0,\infty)$.
This gives the integral form
\begin{equation}\label{eq:log-form}
\frac{x^p}{p}
=\lambda+
\int_0^\infty
\log\!\left(\frac{x^2-y^2}{x^{2/N}-y^{2/N}}\right)\mu(dy),
\end{equation}
with $\lambda$ enforcing $\mu((0,\infty))=1$. 
Formally differentiating in $x$ gives the kernel form
\begin{equation}\label{eq:mf-finiteN}
x^{p-1}=2
\!\int_0^\infty K_N(x,y)\,\mu(dy),
\qquad
K_N(x,y)=\frac{x}{x^2-y^2}-\frac{x^{2/N-1}}{N\big(x^{2/N}-y^{2/N}\big)}.
\end{equation}
The kernel $K_N$ admits the finite diagonal limit
\begin{equation}\label{eq:diag-limit}
K_N(x,x)=\lim_{y\to x}K_N(x,y)=\frac{1}{2x}\!\left(1-\frac{1}{N}\right),
\end{equation}
so the integrals in \eqref{eq:log-form}–\eqref{eq:mf-finiteN} are improper Lebesgue integrals with the integrand defined at $y=x$ by \eqref{eq:diag-limit}.

Whether \eqref{eq:mf-finiteN} admits an extended equilibrium measure (in the spirit of the semicircle law) or instead collapses to a Dirac mass remains open.
Guided by the analysis on ${\mathcal{S}}_{d}$, we conjecture that the mean-field minimizer is the Dirac mass at $x_\star$, i.e.\ $\mu_\star=\delta_{x_\star}$, with $x_\star$ fixed by the finite-$d$ equilibrium (cf.\ Theorem~\ref{thm:isotropic}).
Quantifying fluctuations about $\mu_\star$ is a natural direction for future work.

\section{Acknowledgements}
This work is based on AC’s undergraduate thesis supervised by GM and independent work done by TK.
This work was supported by NSF grant 2407055 and the Erik Ellentuck Fellow Fund at the Institute for Advanced Study, Princeton.

\bibliographystyle{amsplain}
\bibliography{refs}

\end{document}